\newcommand{\R}{\mathbb{R}} 
\newcommand{\E}{\mathbb{E}}
\newcommand{\Vol}{\mathrm{Vol}}
\renewcommand{\Pr}{\mathbb{P}}
\newcommand{\calM}{\mathcal{M}}
\newcommand{\calN}{\mathcal{N}}
\newcommand{\calH}{\mathcal{H}}
\newcommand{\calS}{\mathcal{S}}
\newtheorem{theorem}{Theorem}[section]
\newtheorem{proposition}[theorem]{Proposition}
\newtheorem{assumption}[theorem]{Assumption}
\newtheorem{corollary}[theorem]{Corollary}
\newtheorem{question}{Question}
\newtheorem{lemma}{Lemma}[section]
\theoremstyle{remark}
\newtheorem{remark}{Remark}[section]
\newtheorem{example}{Example}
\newtheorem{conjecture*}{Conjecture}
\theoremstyle{plain}
\begin{document}

\title{Kernel Two-Sample Tests for Manifold Data}

\author{
Xiuyuan Cheng\thanks{Department of Mathematics, Duke University. Email: xiuyuan.cheng@duke.edu}
~~~~~~~~~
Yao Xie\thanks{H. Milton Stewart School of Industrial and Systems Engineering,
Georgia Institute of Technology. Email: yao.xie@isye.gatech.edu. 
 }
}

\date{}

\maketitle

\begin{abstract}
We present a study of a kernel-based two-sample test statistic related to the Maximum Mean Discrepancy (MMD) in the manifold data setting, assuming that high-dimensional observations are close to a low-dimensional manifold. 
We characterize the test level and power in relation to the kernel bandwidth, the number of samples, and the intrinsic dimensionality of the manifold. 
Specifically, when data densities $p$ and $q$ are supported on a $d$-dimensional sub-manifold $\mathcal{M}$ embedded in an $m$-dimensional space and are H\"older with order $\beta$ (up to 2) on $\calM$, we prove a guarantee of the test power for finite sample size $n$ that exceeds a threshold depending on $d$, $\beta$, and $\Delta_2$ the squared $L^2$-divergence between $p$ and $q$ on the manifold, and with a properly chosen kernel bandwidth $\gamma$. 
For small density departures, we show that with large $n$ they can be detected by the kernel test when $\Delta_2$ is greater than $n^{- { 2 \beta/( d + 4 \beta ) }}$ up to a certain constant and $\gamma$ scales as $n^{-1/(d+4\beta)}$.
The analysis extends to cases where the manifold has a boundary and the data samples contain high-dimensional additive noise. 
Our results indicate that the kernel two-sample test has no curse-of-dimensionality when the data lie on or near a low-dimensional manifold. 
We validate our theory and the properties of the kernel test for manifold data through a series of numerical experiments.
\end{abstract}

\vspace{10pt}
Keywords:
Kernel methods,
manifold data,
Maximum Mean Discrepancy,
two-sample test.

\section{Introduction}

Two-sample testing aims to determine whether two sets of samples are drawn from the same distribution.
In the classical setting, given two independent sets of data in $\R^m$,
\begin{equation}\label{eq:two-sample-setup}
x_i \sim p, \, i =1, \cdots, n_X, \, {\rm i.i.d.}, 
\quad 
y_j \sim q, \, j =1, \cdots, n_Y, \,  {\rm i.i.d.},
\end{equation}
the two-sample problem seeks to accept or reject the null hypothesis $H_0: p =q$.
Here, we assume the data follow distributions with densities $p$ and $q$, respectively. It is also of practical interest to identify where $p \neq q$ when the two distributions differ. 
The problem is fundamental in statistics and signal processing with broad applications in scientific discovery and machine learning. 
Exemplar applications include anomaly detection ~\cite{Chandola2009, chandola2010anomaly, bhuyan2013network}, 
change-point detection~\cite{Xie_2013,cao2018change,xie2020sequential},
differential analysis of single-cell data \cite{zhao2021detection},
model criticism~\cite{lloyd2015statistical, chwialkowski2016kernel, bikowski2018demystifying},
general data analysis of biomedical data, audio and imaging data \cite{borgwardt2006integrating, chwialkowski2015fast, jitkrittum2016interpretable,cheng2020two},
and machine learning applications
\cite{li2017mmd,
lloyd2015statistical,sutherland2017generative,chwialkowski2016kernel,jitkrittum2017linear,lopez2017revisiting}.

As an example of application in machine learning, suppose we are interested in performing an out-of-distribution (OOD) test \cite{ren2019likelihood} to determine whether or not the new incoming testing batch of data samples follows the same distribution as the training samples. If the distribution is significantly different, re-training the model to adapt to the new data distribution may be required, or the batch will be labeled as OOD. In performing such a task, we are to compare the two sets of samples from training and the new arrival batch and determine whether (and how) their distributions differ.
When data have low-dimensional structures, it is important to consider the data geometry in the OOD test.

In many applications, high-dimensional real data have intrinsically low-dimensional structures such as manifolds.
For example, it is known that patches of natural images lie on sub-manifolds in the pixel space \cite{buades2005non,peyre2009manifold},
and so do image features extracted by deep neural networks \cite{sandler2018mobilenetv2,zhu2018ldmnet}.
Another example is the single-cell RNA sequencing data where measurements lie near to curve-like structures due to the time development of cells, known as the ``cell trajectory'' \cite{van2020trajectory, saelens2019comparison}.
For natural images, a simple dataset is the MNIST hand-written digits (illustrated in Example \ref{eg:manifold-data} and Figure \ref{fig:mnist-show}), which is one of the most commonly used datasets in statistics and machine learning research. 
Although the original MNIST data is not exactly on the manifold, they can be viewed as having approximately manifold-like structures. In Example \ref{eg:manifold-data}, we provide a case where high dimensional image data lie exactly on a smooth manifold by simulating rotated copies of the same digit image for illustrative purposes.
In this work, we consider the manifold data setting where distributions $p$ and $q$ are supported on (or near to) a $d$-dimensional manifold $\calM$ embedded in $\R^m$, with $d \le m$.
We refer to $\R^m$ as the ambient space and $d$ as the intrinsic dimensionality of the manifold data.

Traditional statistical methods for two-sample testing have focused on parametric or low-dimensional testing scenarios, such as Hotelling's two-sample test~\cite{hotelling1931} and Student's t-test~\cite{PFANZAGL96}. When it is challenging to specify the exact parametric form of the distributions, non-parametric two-sample tests are more suitable.
Earlier works on one-dimensional non-parametric two-sample tests are based on the Kolmogorov-Smirnov distance~\cite{Pratt1981,massey1951kolmogorov}, the total variation distance~\cite{Ga1991}, among others.
Extending these tests to high-dimensional data is non-trivial.

Modern non-parametric tests for high-dimensional data have been developed, many based on integral probability metrics \cite{sriperumbudur2012empirical}.
A notable contribution is the Reproducing Kernel Hilbert Space (RKHS)  kernel Maximum Mean Discrepancy (MMD) two-sample test~\cite{Gretton09,gretton2012kernel}, which is related to U-statistics \cite{serfling2009approximation}.
The asymptotic optimality of kernel  {MMD} tests was recently studied in \cite{balasubramanian2021optimality,li2019optimality}.
Wasserstein distance two-sample tests have been considered in~\cite{delbarrio1999, ramdas2017wasserstein},
and graph-based statistics have been proposed for distribution-free tests in high dimensions \cite{chen2017new,bhattacharya2020asymptotic}.

However, it is known that non-parametric two-sample tests face difficulties with high-dimensional data. 
For instance, 
\cite{ramdas2015decreasing} provided a negative result for kernel MMD in high dimension that the test power decreases may decrease polynomially with increasing data dimension when applied to detect the mean shift of Gaussian distributions.  However, the argument therein does not consider possible intrinsically low-dimensional structures of high-dimensional data.  Furthermore, we also observe that the roles of the kernel bandwidth and the data dimensionality were not explicitly specified in the original kernel MMD test paper \cite{gretton2012kernel},  
both of which may play a crucial role in determining the performance of the kernel test in practice.

In this paper, we aim to answer the following fundamental questions {about kernel tests applied to high-dimensional data with intrinsically low-dimensional structure}:

\begin{question}\label{quest-1}
Will a decrease in test power be observed as data dimension increases when the data 
has intrinsic low-dimensionality such as lying on sub-manifolds?
\end{question}

\begin{question}\label{quest-2}
When using kernel tests on manifold data, how should one select the kernel bandwidth, given that it often significantly impacts the performance of kernel methods?
\end{question}

We provide a positive answer to Question \ref{quest-1} by providing a non-asymptotic result.
Theoretically, 
we show that when data densities are supported on a $d$-dimensional sub-manifold $\calM$ embedded in $\R^m$ (clean manifold data with no noise), 
the kernel two-sample test achieves a positive test power (at the specified test level) when the number of samples $n$ exceeds a certain threshold depending on 
the manifold dimension $d$,
the squared $L^2$-divergence $\Delta_2(p,q)$ between the two distributions on $\calM$,
the H\"older regularity $\beta$ of densities defined with respect to the intrinsic manifold distance, among other intrinsically defined quantities and with a properly chosen kernel bandwidth $\gamma$ (Theorem \ref{thm:power}).
This finite-sample result gives that, with large $n$, a small departure of $q$ from $p$ can be detected by the kernel test when $\Delta_2$ exceeds $ n^{- { 2 \beta/( d + 4 \beta ) }}$ up to a certain constant (Corollary \ref{cor:rate}). 
In addition, to achieve test consistency under this regime, the kernel bandwidth $\gamma$ needs to scale as $n^{-1/(d+4\beta)}$. This provides a theoretical answer to Question \ref{quest-2} for detecting a possibly small density departure given finite samples.

The above result holds for densities $p$ and $q$ in the H\"older class $\calH^\beta (\calM )$, $ 0 < \beta \le 2$.
When higher order regularity of $p$ and $q$ presents, it no longer improves the theoretical rate (see Remark \ref{rk:beta>2}).
Our finite-sample analysis shows that the properties of the kernel test are only affected by the intrinsic dimensionality $d$ rather than the ambient dimensionality $m$. 
In our result, the definitions of the quantities $d$, $\Delta_2$ and $\beta$ are all intrinsic to the manifold geometry (see more in Section \ref{subsec:manifold-basic}),
while any characterization through kernel spectrum would be non-intrinsic at finite kernel bandwidth.

Our result indicates that kernel tests can avoid the curse of dimensionality for manifold data, which is consistent with a similar result for 
kernel density estimation
in \cite{ozakin2009submanifold}. When the kernel is positive semi-definite (PSD), the kernel test we study equals the RKHS kernel MMD statistic \cite{gretton2012kernel}. However, our analysis also covers non-PSD kernels, where the technical requirement for the kernel function is regularity, decay, and positivity, as stated in Assumption \ref{assump:h-C1}.
Our theory suggests that a larger class of kernel tests that is MMD-like but more general than MMD can have test power.
This opens the possibility of constructing more general kernels for testing problems in practice.
In Section \ref{subsec:exp-nonPSD}, we provide experimental evidence demonstrating the testing power with non-PSD kernels.

Our result can also be connected to two-sample tests for Functional Data Analysis \cite{horvath2012inference}
where data samples are (discretized) functions. 
In fact, our Example \ref{eg:manifold-data} of image data lying on a manifold also happens to be a case of vector data having underlying functional limits (the image dimensionality increases with finer resolution).
It was shown in \cite{wynne2022kernel} that when the kernel bandwidth is properly scaled, kernel MMD tests for functional data can retain power on high dimensional data by converging to a limiting kernel test over functions.
This leads to the same positive answer of kernel tests in high dimension with our result but is from a different perspective. 
The underlying functional limit can be interpreted as effectively a low dimensionality of the data and a special case of data lying on (hidden) manifolds. The Riemannian manifold data considered in this work is a more general framework for the intrinsic low-dimensionality of vector data (for cases beyond those like Example \ref{eg:manifold-data}), and the functional data setting extends to broader cases of non-vector data, e.g., functions evaluated on un-shared meshes. 
Notably, our result also indicates that a proper choice of kernel bandwidth is important for testing performance, where the optimal choice is not always the median distance heuristic.

We begin by proving the consistency of the kernel test when the data densities lie on a smooth manifold without a boundary. We then extend the theory to submanifolds with a smooth boundary. The manifold with boundary setting includes, as a special case, the Euclidean data case, where $p$ and $q$ are supported on a compact domain in $\R^m$ with a smooth boundary, and $d=m$. The theory also extends to the case where manifold data are corrupted by additive Gaussian noise in the ambient space $\R^m$. We show, theoretically, that as long as the coordinate-wise Gaussian noise level $\sigma$ is less than $\gamma/\sqrt{m}$ up to an absolute constant ($\gamma$ being the kernel bandwidth parameter), the kernel tests computed from noisy data have the same theoretical consistency rate as clean data lying on the manifold. In this case, the test consistency is determined only by the pair of two densities of the clean manifold data.

Our experiments demonstrate that the test power can be maintained (for fixed test level) as the ambient dimensionality $m$ increases for low-dimensional manifold data embedded in high-dimensional space. Specifically, we construct an example of group-transformed images with increasingly refined resolution (i.e., increasing image size). We also conduct experiments on noise-corrupted data. In the theoretical regime of small additive noise, the performance of kernel tests on noisy data is similar to that on clean data, as predicted by the theory. Next, we apply kernel tests to the more complicated hand-written digits data set, which no longer lies exactly on manifolds. We demonstrate that kernel bandwidth much smaller than the median distance bandwidth can provide better performance. Finally, we numerically show that non-PSD kernels that may or may not satisfy the proposed theoretical conditions can provide a kernel test with power.

Our work adopts  analytical techniques from the geometrical data analysis and manifold learning literature, particularly the analysis of local kernels on manifolds from \cite{coifman2006diffusion}. As a quick recap of related works: seminal works such as \cite{belkin2003laplacian,belkin2007convergence,hein2005graphs,coifman2006diffusion} have demonstrated that the graph diffusion process on a kernelized affinity graph constructed from high-dimensional data vectors converges to a continuous diffusion process on the manifold as the sample size increases to infinity and the kernel bandwidth decreases to zero. The results in \cite{singer2006graph} and subsequent works demonstrate the approximation error to the manifold diffusion operator at a finite sample size, where the sample complexity only involves the intrinsic dimensionality. Another line of related works concerns the spectral convergence of kernel matrices constructed from manifold data. Note that the kernel function itself is computed from Euclidean coordinates of data in $\R^m$ and thus extrinsic. Therefore, any theoretical properties involving the kernel spectrum are also non-intrinsic to the manifold. In the limit of kernel bandwidth going to zero, the spectrum of kernelized graph Laplacian matrices has been shown to converge to that of the manifold Laplacian operator \cite{trillos2020error,calder2019improved,dunson2021spectral,cheng2021eigen}. However, 
bounding the difference between the extrinsic kernel spectrum to the intrinsic limiting spectrum incurs more complicated analysis under additional assumptions. Our work addresses this limit by revealing the limiting population kernel MMD-like statistic as the squared $L^2$ divergence up to a constant scaling factor (Lemma \ref{lemma:ET}), which is a simpler analysis.

In the rest of the paper, the necessary preliminaries and notations are provided in Section \ref{sec:prelim}. In Section \ref{sec:theory}, we present the theory for kernel tests on manifold data and establish the consistency and power of the test. We then extend this theory to cover the case of a manifold with boundary and data containing high-dimensional noise in Section \ref{sec:theory-extend}. Numerical experiments are presented in Section \ref{sec:experiments}, and we discuss potential future research directions in Section \ref{sec:discussion}. All proofs are provided in 
Appendix \ref{sec:proofs}.

 \begin{table}[t]
 \centering
\small
\caption{ \label{tab:notations}
List of default notations
} \hspace{-5pt}
 \begin{minipage}[t]{0.45\linewidth}
 \begin{tabular}{  p{1.2 cm}  p{5.0cm}   }
 \hline
 $m$	                 & dimensionality of the ambient space  \\
 $d$ 		         & intrinsic dimensionality of the manifold  \\
 ${\calM}$ 	& $d$-dimensional manifold in $\R^m$  	  \\ 
   $dV$ 		& volume form on $\calM$\\
 $d_{\calM}(x,y)$ & manifold geodesic distance \\
  $\| x- y \|$ 	& Euclidean distance in $\R^m$ \\
 $p$, $q$		& data sampling densities on ${\calM}$  \\
 $n_X $,  $n_Y $  & number of samples in two-sample datasets $X$ and $Y$ respectively \\
  $n $		  & $n = n_X + n_Y$  \\
  $\rho_X$ 	& $n_X /n \to \rho_X$ \\
  $\widehat T$	 & empirical  kernel statistic \eqref{eq:def-MMD2}							\\
  $ T$	 	& population kernel statistic \eqref{eq:def-T} \\
  \hline
\end{tabular}
\end{minipage}
 \begin{minipage}[t]{0.549\linewidth} 
 \begin{tabular}{  p{0.8cm}  p{6.2cm}   }
   \hline
    $\beta$          & H\"older class $\calH^{\beta}(\calM)$ \\
        $L_\rho$ 	& Upper bound of H\"older constants (defined in Section \ref{subsec:manifold-setup})
        			  of $p$ and $q$ on $\calM$ \\
    $\rho_{\max}$ & Uniform upper bound of $p$ and $q$ on $\calM$ \\
    $\gamma$ 	&  kernel bandwidth parameter 		\\
   $K_\gamma$  & kernel applied to data, $K_\gamma(x,y)=h \left(
   \frac{\| x-y\|^2}{\gamma^2} \right)$ \\
   $h$       		& $C^1$ and decay function on  $[0,\infty)$, $h \ge 0$		\\
 $m_0$		& $m_0[h]:=\int_{\R^d} h(|u|^2) du$ 	\\
  \hline
    \hline
 \multicolumn{2}{c}{   Asymptotic Notations} \\
 \hline
 $O(\cdot)$ & 		$f = O(g)$: there exists $C>0$  such that when $|g|$ is sufficiently small,
				 $|f| \le C |g|$. \\
$O_{\text{x}}(\cdot)$ & declaring the constant dependence on x.	\\ 
\hline
\end{tabular}
\end{minipage}
\end{table}

\section{Preliminaries}\label{sec:prelim}

Following the setup in \eqref{eq:two-sample-setup}, we define $n: = n_X+n_Y$.
We also assume that $n_X$ and $n_Y$ are proportional, that is, as $n$ increases, $n_X/n$ approaches a constant  $\rho_X \in (0,1)$. As our non-asymptotic analysis will consider a finite $n$, 
the constant proportion will be reflected in a ``balancing'' condition, see \eqref{eq:rho_X}.

\subsection{Classical RKHS kernel MMD statistic}

The (biased) empirical estimate for the squared  kernel MMD statistic \cite{gretton2012kernel} is defined as 
\begin{equation}\label{eq:def-MMD2}
\widehat{T} 
:= \frac{1}{n_X^2} \sum_{ i, i'= 1}^{n_X} K_\gamma( x_i, x_{i'} ) 
+ \frac{1}{n_Y^2} \sum_{ j, j'= 1}^{n_Y} K_\gamma( y_j, y_{j'} ) 
- \frac{2}{n_X n_Y} \sum_{ i=1}^{n_X}  \sum_{ j=1}^{n_Y} K_\gamma( x_i, y_j ),
\end{equation}
where $K_\gamma(x,y)$ is a PSD kernel with a user-specified {\it bandwidth} parameter $\gamma > 0$.
The corresponding population statistic $T$ will be given in \eqref{eq:def-T} below.

We consider a kernel with a fixed bandwidth, that is, 
\begin{equation}\label{eq:def-K-sigma}
K_\gamma(x,y) = h \left( \frac{\| x -y \|^2}{ \gamma^2} \right), 
\quad h: [ 0, \infty) \to \R, 
\end{equation}
where $h$ usually is some non-negative function. A standard example is the Gaussian radial basis function (RBF) kernel, defined by $h(r) = \exp(-r/2)$.
The classical theory of kernel MMD tests requires the kernel to be characteristic, ensuring that the MMD distance is a metric between distributions \cite{gretton2012kernel}.
However, in this paper, we relax this assumption and only require $h$ to be a non-negative, $C^1$ function that decays, which does not necessarily lead to a positive semi-definite (PSD) kernel $K_\gamma$. Please refer to Assumption \ref{assump:h-C1} and the subsequent comments for further discussion.

The unbiased estimator of the kernel MMD removes the diagonal entries $K(x_i,x_i)$ and $K(y_j, y_j)$ in the summation in \eqref{eq:def-MMD2},
and has a slightly different normalization (by $1/(N(N-1))$ rather than $1/N^2$, where $N=n_X$ and $n_Y$ respectively). 
Since diagonal entries always equal $h(0)$, which is a constant, 
the biased and unbiased estimators give the same behavior in our setting qualitatively. In this paper, we focus on the biased estimator \eqref{eq:def-MMD2},
and the analysis can be extended to the unbiased estimator.

\subsection{Test {level and power}}

We adopt the standard statistical definitions \cite{gretton2012kernel} for the test level $\alpha_{\rm level}$ and testing power.
In the two-sample test setting, one computes the kernel test statistic $\widehat{T}$ from datasets $X$ and $Y$ and chooses a threshold $t_{\rm thres}$.
If $\widehat{T} > t_{\rm thres}$, the test rejects the null hypothesis $H_0$.

The ``level'' of a test, denoted by $\alpha_{\rm level}$, is the target Type-I error. A test achieves a level $\alpha_{\rm level}$ if
\begin{equation}\label{eq:type-I}
\Pr [\widehat{T} > t_{\rm thres} |H_0] \le \alpha_{\rm level},
\end{equation}
where $0< \alpha_{\rm level} <1$ is typically set to a small constant, such as $\alpha_{\rm level} = 0.05$.
To control the Type-I error \eqref{eq:type-I}, the threshold $t_{\rm thres}$ needs to exceed the $(1-\alpha_{\rm level})$-quantile of the distribution of $\widehat{T}$ under $H_0$.
Typical asymptotic theory determines $t_{\rm thres}$  by the limiting distribution of the detection statistic $\widehat{T}$ under $H_0$, which is a $\chi^2$ distribution in many cases.
{However, the distribution of  $\widehat{T}$ may significantly differ from the limiting distribution at a finite sample size.}
 In practice, $t_{\rm thres}$ is usually estimated using a standard bootstrap procedure \cite{gretton2012kernel,higgins2003introduction}.

The Type-II error of the statistic $\widehat{T}$ and the threshold $t_{\rm thres}$ 
is given by $\Pr[ \widehat{T} \le t_{\rm thres}|H_1]$ under the alternative hypothesis. 
The {\it testing power} (at level $\alpha_{\rm level}$) corresponds to one minus the Type-II error.
The test is said to be {\it asymptotically consistent} if the testing power can approach $1$ as the sample size $n$ increases. 
In this work, we will characterize the testing power of the kernel test at a finite sample size.

\subsection{Riemannian manifold and intrinsic geometry}\label{subsec:manifold-basic}

The differential geometric notations employed in this paper are standard and can be found in, for example, \cite{do1992riemannian}. We consider a smooth connected manifold $\calM$ of dimension $d$ equipped with a Riemannian metric tensor $g_\calM$. The manifold $\calM$ is isometrically embedded in the Euclidean space $\R^m$, where $m$ is the ambient dimension and can be much larger than the intrinsic dimension $d$. Let $\iota: \calM \to \R^m$ be the $C^\infty$ isometric embedding, and let $\iota(x) \in \R^m$ denote the extrinsic coordinates. In this paper, we use the same notation $x$ to represent both a point $x \in \calM$ and its image $\iota(x) \in \R^m$, provided that there is no ambiguity. Note that different embeddings in different spaces can be associated with the same Riemannian manifold $(\calM, g_\calM)$. A quantity is called {\it intrinsic} if it solely depends on $g_\calM$ and is independent of the embedding or extrinsic coordinates.

Given the Riemannian metric $g_\calM$, the geodesic distance can be defined at least locally. 
We assume that the geodesic distance $d_\calM(x,y)$ is globally defined on $\calM$ and induces a metric on $\calM$. 
This is possible when $\calM$ is also compact, 
and in this case, 
$d_\calM$ coincides with the Riemannian distance, 
and every two points on $\calM$ are joined by a length minimizing geodesic.
The Euclidean distance in $\R^m$ is denoted by $\| x -y \|$.
The manifold differential operators are defined intrinsically with respect to $g_\calM$. For instance, for a $C^1$ function $f$ on $\calM$, $\nabla_\calM f(x)$ denotes the manifold gradient of $f$ at point $x$, which consists of partial derivatives with respect to the normal coordinates.
The H\"older class $\calH^{\beta}(\calM)$ is defined with respect to manifold geodesic distance, and in this work, we consider $ 0 < \beta \le 2$. Specifically, 
\begin{itemize}
\item[(i)] When $\beta \le 1$,
\[
\calH^{\beta}(\calM) = \{ f\in C^0(\calM) , \, \exists  L > 0, \, | f(x) - f(y) |  \le L d_{\calM}(x,y)^\beta, \, \forall x, y \in \calM \},
\]
and we define the {\it H\"older constant} of $f$
as $L_f : = \sup_{x {\neq} y \in \calM} {|f(x)-f(y)|}/{d_\calM(x,y)^\beta}$. 

\item[(ii)] When $ 1< \beta \le 2$,
\[
\begin{split}
&\calH^{\beta}(\calM) = 
\{ f\in C^1(\calM), \, \exists  L > 0, \, \| \nabla_\calM f(x) -  \nabla_\calM f(y) \|  \le L d_{\calM}(x,y)^{\beta-1}, \, \forall x, y \in \calM \},
\end{split}
\]
and then we define $L_f : =  \| \nabla_\calM f\|_\infty+  \sup_{x {\neq} y \in \calM} {\| \nabla_\calM f(x)- \nabla_\calM f(y) \|}/{d_\calM(x,y)^{\beta -1}}$. 
\end{itemize}
Because $\nabla_\calM f(x)$ is a cotangent vector at $x$, the formal definition of $\| \nabla_\calM f(x)- \nabla_\calM f(y) \|$ utilizes the parallel transport $P_{y,x}$ along the geodesic from $x$ to $y$.
(For any tangent vector $v \in T_x \calM$, $P_{y,x} v \in T_y \calM$ and preserves the length under $g_\calM$.)
Specifically,  $\| \nabla_\calM f(x)- \nabla_\calM f(y) \| = \sup_{v \in T_x \calM, \|v \| =1} | \nabla_\calM f(x)(v)-  \nabla_\calM f(y)(P_{y, x }v)|$.
Our notion of the H\"older constant $L_f$ removes the $C^0$ norm $\|f \|_\infty$ from the usual definition of the H\"older norm. 
When $\beta = 1$, $L_f$ is the Lipschitz constant of $f$ (with respect to the manifold distance).

The Riemannian geometry also induces an intrinsic measure on $\calM$. Let $dV$ be the volume element on $\calM$ associated with the local Riemann volume form. Then $(\calM, dV)$ is a measure space.
For any distribution $dP(x)$ on $\calM$, it may have a density with respect to $dV$, that is, $dP(x) = p(x) dV(x)$, where $p$ is the density function. In this paper, we consider densities that are H\"older continuous with respect to the metric $d_\calM$ and square-integrable on $(\calM, dV)$.
Because $d_\calM$ is intrinsic, the H\"older constants are intrinsically defined. Moreover, since the measure $dV$ is intrinsic, $dV$-integrals such as the squared $L^2$ divergence $\int_\calM (p(x)-q(x))^2 dV(x)$ between two distributions with densities $p$ and $q$ are also intrinsically defined.

\subsection{Notations}

Table \ref{tab:notations} lists the default notations used in this paper.
We may use abbreviated notation to omit the variable in an integral,  e.g., $\int f dV = \int f(x) dV(x)$.
The notation $ \wedge$ stands for the minimum of two numbers, i.e., $a \wedge b = \min \{ a, b\}$.
The paper considers the joint limiting process of sample size $n \to \infty$ and kernel bandwidth $\gamma \to 0$, but the main result is non-asymptotic and holds for finite sample size $n$ which is sufficiently large. 

With respect to a limiting process, e.g., $\gamma \to 0$, the default asymptotic notations are as follows: 
$f = O(|g|)$ means that there is constant $C$ such that $|f | \le C |g|$ eventually (meaning that there exists $\gamma_0$ s.t. when $\gamma < \gamma_0$ then $|g| \ge C|g|$).
We use $O_{\text{x}}(\cdot)$ to denote big-O notation with the constant depending on object $\text{x}$.
In this work, we consider constants that depend on the manifold $\calM$ and kernel function $h$ as absolute ones and mainly focus on the constant dependence on data densities $p$ and $q$. 
We will specify the constant dependence in the text, 
and we will also clarify the needed largeness of $n$ or the smallness of $\gamma$ for the bounds to hold.
Additionally, $f \sim g $  means that  $f$, $g \ge 0$ and  there exist constants $C_1, C_2 >0$ such that  $C_1 g \le f \le C_2 g$ eventually;
$f  \gtrsim g$ means that  $ f \ge C_1 g$ eventually for some $C_1 >0$;
and $f  \gg g$ means that  for $f, g > 0$, $f/g \to \infty$ in the limit.

\section{
Theoretical properties of kernel tests on manifold data}\label{sec:theory}

In this section, we study the property of the kernel MMD-like statistic in \eqref{eq:def-MMD2} for manifold data. Note that the kernel statistic $\widehat{T}$ can be computed from any two datasets $\{x_i\}_{i=1}^{n_X}$ and $\{y_j\}_{j=1}^{n_Y}$ as long as the bandwidth parameter $\gamma$  is specified, and there is no need to estimate the intrinsic dimension $d$ as an input parameter. The theory in this section studies the properties of the kernel test and the theoretical choice of $\gamma$ when manifold structure is present in the high dimensional data. We begin by formulating the problem, introducing the local kernel, and stating the main result regarding the test size and power.

\begin{figure}[t]
\centering 
\includegraphics[trim =  0 0 0 0, clip, height=.24\linewidth]{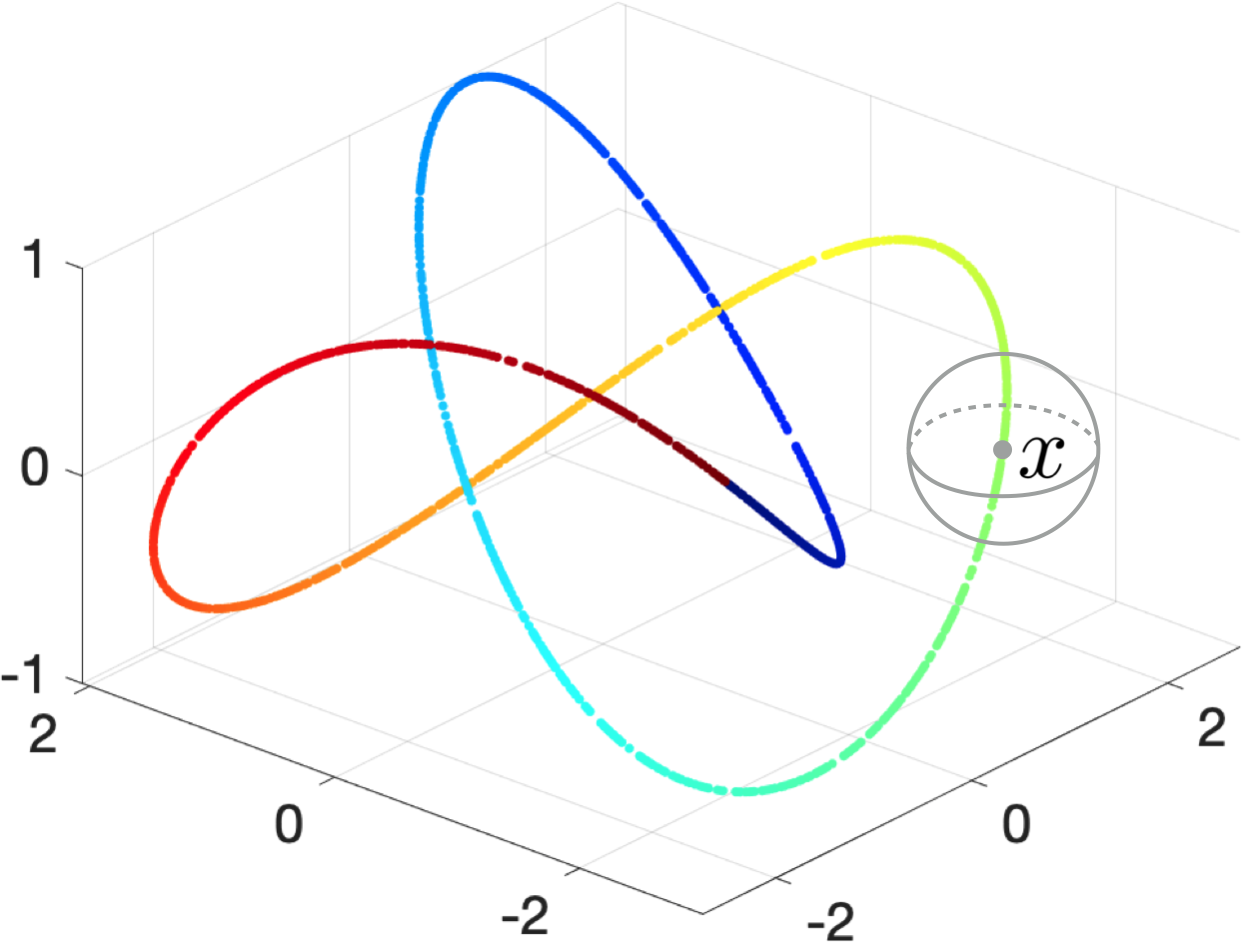} 
\hspace{30pt}
\includegraphics[trim =  0 0 0 0, clip, height=.23\linewidth]{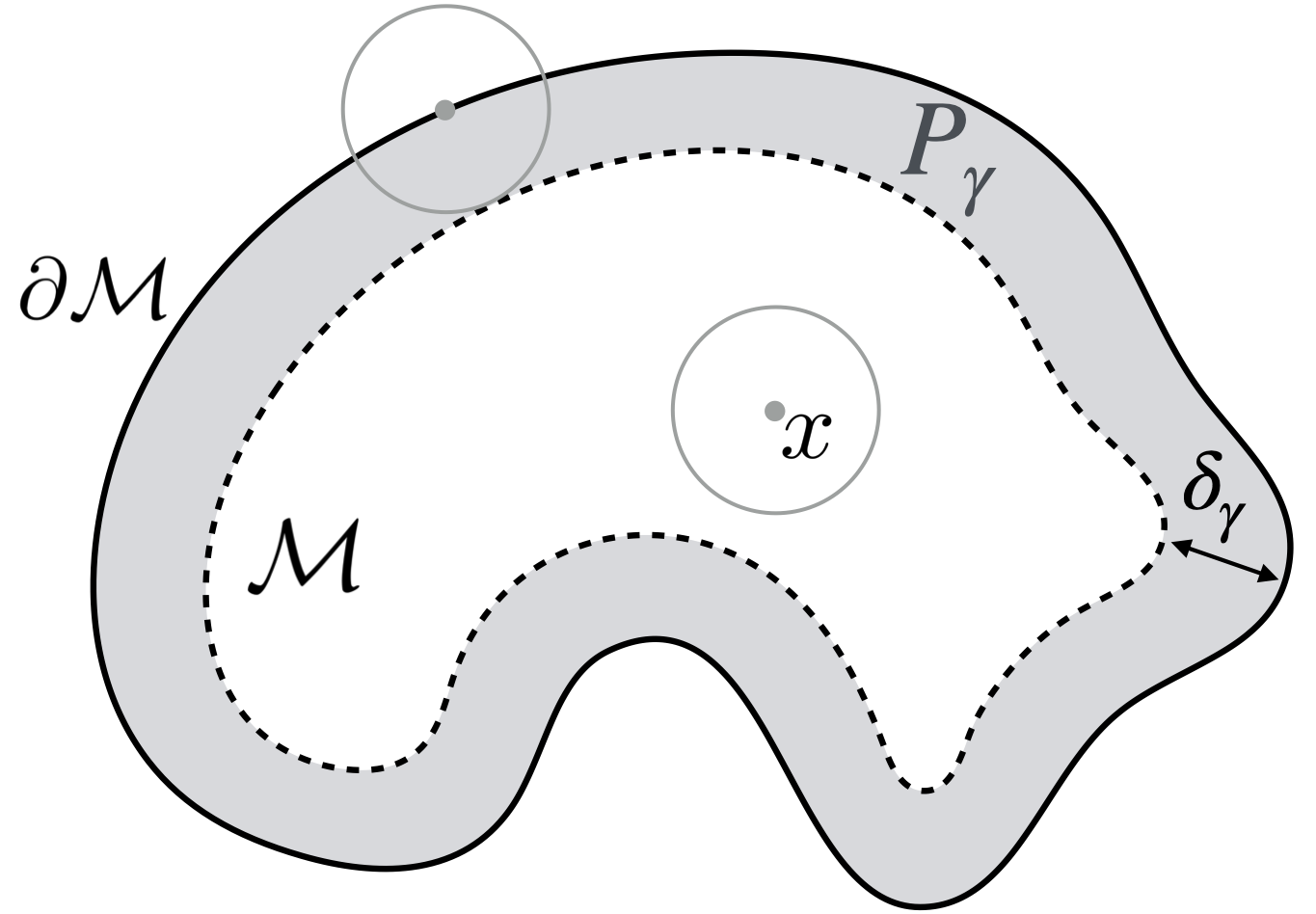} 
\vspace{5pt}
\caption{
\small
(Left) A one-dimensional manifold with no boundary (a closed curve) embedded in $\R^3$, 
and an Euclidean ball centered at a point $x$ on the manifold.
(Right) Illustration of a two-dimensional manifold with boundary, showing 
the near-boundary set $P_\gamma$ (gray belt),
and two Euclidean balls centered at a point away from the boundary and another point on the boundary, respectively.}
\label{fig:manifold-diag}
\end{figure}

\subsection{Manifold data in high-dimensional space}\label{subsec:manifold-setup}

We state the necessary assumptions on the manifold data and sampling densities. 
An example of high dimensional image data satisfying our assumption is provided in Example \ref{eg:manifold-data}, see Figure \ref{fig:mnist-show}.
In this section, we consider a compact manifold without a boundary:
\begin{assumption}[Data manifold]\label{assump:M}
$\calM$ is a $d$-dimensional compact connected $C^\infty$ manifold isometrically embedded in $\R^m$ without boundary.
\end{assumption}

An illustration of when $d=1$ and $m=3$ is shown in Figure \ref{fig:manifold-diag}(Left).
Our theory extends when the manifold has a smooth boundary, which will be discussed in Section \ref{subsec:manifold-boundary}. 
This section assumes that the data densities $p$ and $q$ are supported on $\calM$. In Section \ref{subsec:manifold+noise}, we will discuss the extension of our analysis to the case where the data lie near the manifold and contain a certain type of additive Gaussian noise. 

We introduce the following assumption on the H\"older regularity and boundedness of the data densities $p$ and $q$. Recall the definition of $\calH^{\beta}(\calM)$ in Section \ref{subsec:manifold-basic}.

\begin{assumption}[Data density]\label{assump:p}
Data densities $p$ and $q$ are in $\calH^{\beta}(\calM)$, $0 < \beta \le 2$, 
and the  H\"older constants of $p$ and $q$ are bounded by  $L_\rho$, namely $L_\rho = \max \{ L_p, L_q\}$.
Since H\"older continuity implies continuity, due to compactness of $\calM$, both densities are uniformly bounded, that is, there is constant $\rho_{\max}$ such that 
$$
 0 \le p(x) , \, q(x) \le \rho_{\max}, \quad  \forall x \in \calM.
$$
\end{assumption}

To illustrate that the manifold structure naturally arises in real-world data, we provide an example of high-dimensional data lying on intrinsically low-dimensional manifolds. In this example, the change in data densities $q$ from $p$ is induced by the change in densities on a latent manifold independent of the ambient space $\R^m$.  

\begin{example}[Manifold data with increasing $m$]\label{eg:manifold-data}
Consider data samples in the form of images $I_i$ that have $W\times W$ pixels and thus can be represented as vectors in $\R^m$, where $m=W^2$. The image $I_i$ is generated by evaluating a continuous function on an image grid given a latent variable $z_i$. 
Specifically,
\[
I_i( j_1, j_2) = F\left( \left( \frac{j_1}{W}, \frac{j_2}{W} \right)  ; z_i \right), \quad 1 \le j_1, j_2 \le W,
\]
where $F( u ; z_i)$ is a smooth mapping from $u \in [0,1]\times [0,1]$  to $\R$
that depends on a latent variable $z_i\in \calM_z$. 
For instance, suppose $\calM_z$ is a $d$-dimensional rotation group $SO(2)$, and the mapping $F(\cdot;z)$ corresponds to applying the rotation action $z\in SO(2)$ to the image, as illustrated in Figure \ref{fig:mnist-show}. Under generic assumptions on $F$, the continuous functions $F(\cdot;z)$ for all $z$ lie on a $d$-dimensional manifold in the function space. This construction defines the embedding map $\iota$ from the manifold $\calM_z$ to $\R^{W \times W}$.
In this example, when $W$ increases, namely as the discretization gets finer, the image manifold in $\R^{W \times W}$ (up to a scalar normalization) also approaches a continuous limit determined by the latent manifold $\calM_z$ (the rotation angle) and the mapping $F$ on $[0,1]^2\times \calM_z$.
\end{example}

\begin{figure}
\hspace{-16pt}
\includegraphics[trim =  0 0 0 0, clip, height=.29\linewidth]{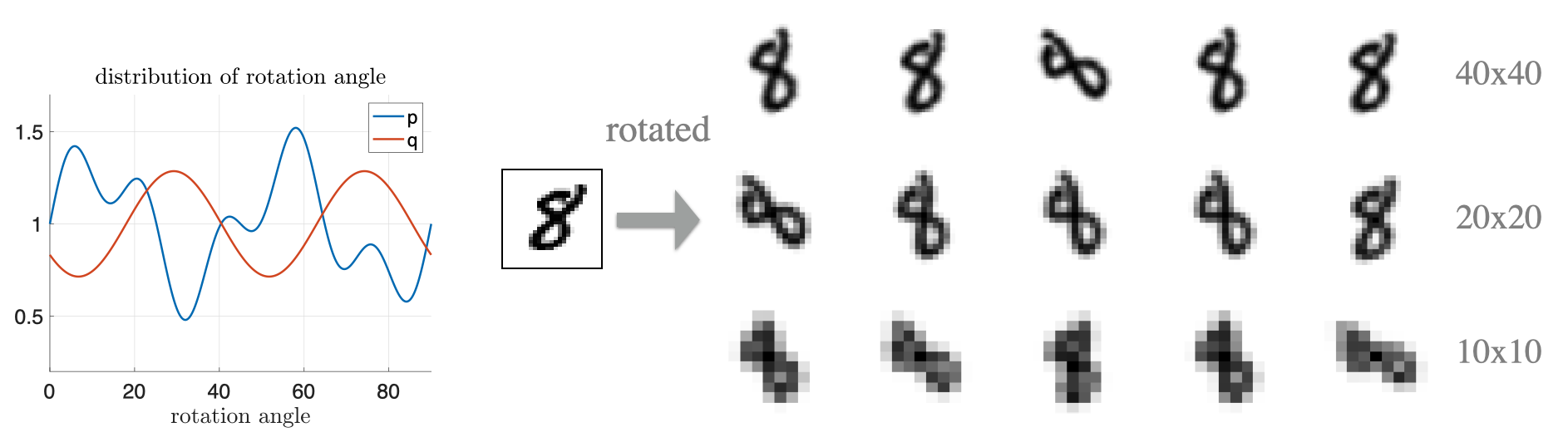} 
\caption{
\small
An example showing that the increase in the ambient dimension $m$ does not affect the intrinsic dimensionality $d$ nor the intrinsic geometry of the manifold data. 
An image of hand-written digit ``8'' is rotated by angles $z$ and at different image sizes. 
The images for changing angle $z$ lie on a one-dimensional manifold in the ambient space
and approach a certain continuous limit as image resolution refines.
The group element $z$ has two distributions,
which induce two distributions of data images in ambient space $\R^m$.
When $z$ changes from 0 to $2\pi$ the curve is closed and the data manifold has no boundary.
When $z$ changes from 0 to $\pi /2$ the curve has two endpoints and the data manifold has a boundary.
The two-sample test results on this data are provided in Section \ref{sec:experiments}.
}
\label{fig:mnist-show}
\end{figure}

\subsection{Local kernels on manifold and the population statistic}

We consider local kernel $K_\gamma(x,y)$ defined as in \eqref{eq:def-K-sigma} which is computed from Euclidean distances between data samples.
In the term ``local kernel'', ``local'' means a small kernel bandwidth parameter $\gamma$,
and typically $\gamma$ 
decreases as the sample size increases. 
The following class of non-negative differential kernel function $h$  contains $K_\gamma$ being the Gaussian RBF kernel  as a special case.

\begin{assumption}[Differentiable kernel]
\label{assump:h-C1}
We make the following assumptions about the function $h$, 
excluding the case where $h \equiv 0$:

(C1) {\it Regularity}.
$h$ is continuous on $[0,\infty)$,  $C^1$ on $(0, \infty)$. 

(C2) {\it Decay condition}. 
$h$ and $h'$ are  bounded on $(0,\infty)$ and have sub-exponential tail, specifically,
$\exists a, a_k >0$, s.t., $ |h^{(k)} (\xi )| \leq a_k e^{-a \xi}$ for all $\xi > 0$, $k=0,1$.
Without loss of generality, assume that $a_0 = 1$.

(C3) {\it Non-negativity}. $h \ge 0$ on $[0, \infty)$. 
 \end{assumption}
Similar conditions on $h$ have been used in \cite{coifman2006diffusion} for kernelized graph Laplacian constructed from manifold data. 
For $h$ that satisfies Assumption \ref{assump:h-C1}, we introduce
  the following moment constant of the kernel $h$,
 \begin{equation}\label{eq:def-m0}
m_0 [ h ] := \int_{\R^d} h( \| u \|^2) du,
 \end{equation}
which is finite due to (C2).
By (C1),(C2) and that $h$ is not a zero function, $m_0[h] > 0$.
We note that $0 \le K_\gamma(x,y) \le 1$ for any $x, y$, where $K_\gamma(x,y)$ is induced by $h$ as defined in \eqref{eq:def-K-sigma}, due to (C2) and (C3). 
Note that the kernel $K_\gamma(x,y)$  is not necessarily PSD, but the theory herein remains valid in this case. 
(For the prototypical choice of the Gaussian RBF kernel, the kernel is indeed PSD.) 
The non-negativity condition (C3) may be relaxed, 
as it is only used to guarantee that $m_0[h] > 0$ and in the extension to the manifold with boundary in Section \ref{subsec:manifold-boundary}.
We assume (C3) for simplicity.

The following lemma establishes the approximation of a H\"older function $f$ by its kernel integral on a manifold 
when $\gamma$ is small; this result is necessary for our subsequent analysis.

\begin{lemma}[{Kernel integral on manifold}]\label{lemma:local-kernel}
Suppose $\calM$ satisfies Assumption \ref{assump:M},  $h$ satisfies Assumption \ref{assump:h-C1},
and   $f$ is in $\calH^\beta(\calM)$,  $ 0< \beta \le 2$, with H\"older constant $L_f$.
Then there is $\gamma_0> 0$ which depends on $\calM$ only, and constant $C_1$ that depends on $(\calM,h)$,
such that when $ 0 < \gamma < \min \{ 1, \gamma_0 \}$,
{for any $ x \in \calM$,}
\begin{equation}\label{eq:kernel-expansion-2}
\begin{split}
& \left| 
\gamma^{-d}\int_{\calM} h \left( \frac{\| x- y \|^2 }{\gamma^2} \right) f(y) dV(y) 
  - m_0[h]  f(x) 
\right| 
\le 
C_1 ( L_f \gamma^\beta  + \|f\|_\infty \gamma^2).
\end{split}
\end{equation}
Specifically,  $\gamma_0$  depends on manifold reach and curvature, 
and $C_1 > 0$ depends on manifold curvature and volume, the 
kernel function $h$ (including the constants $a$, $a_1$ in Assumption \ref{assump:h-C1}(C2)),
and the intrinsic dimensionality $d$. 
\end{lemma}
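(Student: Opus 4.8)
The plan is to reduce the manifold integral to a Euclidean integral over the tangent space at $x$ via normal coordinates, and then perform a local Taylor-type expansion of $f$ matched against the moments of $h$. First I would fix a point $x\in\calM$ and use the exponential map $\exp_x:T_x\calM\to\calM$ to parametrize a geodesic ball $B_{\calM}(x,r)$ by $u\in\R^d$; choosing $r$ comparable to a fixed fraction of the manifold reach guarantees this chart is a diffeomorphism and that the volume distortion satisfies $dV(y)=(1+O(\|u\|^2))\,du$, with the implied constant controlled by the sectional curvature. The key geometric input is the comparison between the Euclidean chord $\|x-y\|$ and the geodesic distance $d_\calM(x,y)=\|u\|$: for an isometric embedding one has $\|x-y\|^2 = \|u\|^2 - \kappa(x,u)\|u\|^4 + O(\|u\|^5)$ for some bounded curvature-type term, so $\|x-y\|^2/\gamma^2 = \|u\|^2/\gamma^2 + O(\|u\|^4/\gamma^2)$. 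Plugging this into $h$ and using the $C^1$ bound and sub-exponential decay from Assumption~\ref{assump:h-C1}(C2), one writes $h(\|x-y\|^2/\gamma^2) = h(\|u\|^2/\gamma^2) + O\!\big(\|u\|^4 \gamma^{-2}\, e^{-a\|u\|^2/(2\gamma^2)}\big)$ after controlling the remainder (the Gaussian-type envelope absorbs the polynomial factor at scale $\|u\|\sim\gamma$).

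Next I would handle the density factor $f(y)=f(\exp_x(u))$. When $\beta\le 1$, I use $|f(\exp_x u)-f(x)|\le L_f\, d_\calM(x,\exp_x u)^\beta = L_f\|u\|^\beta$ directly, so the error from replacing $f(y)$ by $f(x)$ inside the integral is $O\!\big(L_f\int_{\R^d}\|u\|^\beta h(\|u\|^2/\gamma^2)\,du\big)=O(L_f\gamma^{d+\beta})$ after rescaling $u=\gamma v$. When $1<\beta\le 2$, I first Taylor-expand $f(\exp_x u)=f(x)+\langle\nabla_\calM f(x),u\rangle + R(u)$ with $|R(u)|\le L_f\|u\|^\beta$ (this uses the $C^1$-Hölder bound on $\nabla_\calM f$ along geodesics, together with the identity that the geodesic from $x$ in direction $u$ has initial velocity $u$ in normal coordinates); the linear term integrates to zero against the radially symmetric $h(\|u\|^2/\gamma^2)$, and the remainder again contributes $O(L_f\gamma^{d+\beta})$. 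Collecting these, the leading term is $\int_{\R^d} h(\|u\|^2/\gamma^2)\,du\cdot f(x) = \gamma^d m_0[h] f(x)$; the volume-distortion factor $(1+O(\|u\|^2))$ multiplied by $f(x)$ (bounded by $\|f\|_\infty$) yields the $O(\|f\|_\infty\gamma^{d+2})$ term, and the curvature remainder in $h$ multiplied by $f(x)$ yields another $O(\|f\|_\infty\gamma^{d+2})$ term. Finally, the contribution from $y$ outside the geodesic ball $B_\calM(x,r)$ is negligible: there $\|x-y\|\ge c\,r$ for a reach-dependent constant $c$, so $h(\|x-y\|^2/\gamma^2)\le e^{-ac^2 r^2/\gamma^2}$, which is super-polynomially small in $\gamma$ and hence absorbed; here $\|f\|_\infty$ times $\Vol(\calM)$ bounds the tail. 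Multiplying the whole estimate by $\gamma^{-d}$ gives \eqref{eq:kernel-expansion-2}.

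The main obstacle, and the place where care is needed, is making the chord-versus-geodesic expansion and the normal-coordinate volume expansion quantitative with constants that depend only on the manifold (through reach, curvature bounds, injectivity radius, and $\Vol(\calM)$) and uniformly in $x\in\calM$ — this is exactly where compactness of $\calM$ is used, to get a single $\gamma_0$ and a single $C_1$. A secondary technical point is that $h$ is only $C^1$, not $C^2$, so the expansion of $h$ around $\|u\|^2/\gamma^2$ must be done at first order only, using $h(s+\delta)-h(s)=\int_0^1 h'(s+t\delta)\,\delta\,dt$ together with the sub-exponential bound on $h'$; one must check that the perturbation $\delta = O(\|u\|^4/\gamma^2)$ stays in a regime where the exponential envelope $e^{-a(\|u\|^2/\gamma^2 + t\delta)}$ is still comparable to $e^{-a\|u\|^2/(2\gamma^2)}$ for $\gamma$ small, which holds once $r$ is chosen small enough relative to $a$. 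Everything else is a routine rescaling $u=\gamma v$ and integration of $\|v\|^k h(\|v\|^2)$, finite by Assumption~\ref{assump:h-C1}(C2).
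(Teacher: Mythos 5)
Your proposal is correct and follows essentially the same plan as the paper's proof: localize the integral (using reach/compactness to get a uniform chart), change variables to a chart on $\R^d$, expand the chord distance and the kernel to first order using only the $C^1$ regularity and sub-exponential bound on $h'$, Taylor-expand $f$ separately for $\beta\le 1$ and $1<\beta\le 2$ with the linear term vanishing by radial symmetry of the kernel, and bound the tail by $\Vol(\calM)\|f\|_\infty$ times the exponential decay. The only cosmetic differences are that the paper uses projected tangent-plane coordinates $u=\phi_x(y-x)$ (with $\|u\|\le\|x-y\|\le d_\calM(x,y)$) rather than the exponential-map normal coordinates you use, and truncates to a shrinking Euclidean ball of radius $\delta_\gamma\sim\gamma\sqrt{\log(1/\gamma)}$ rather than a fixed geodesic ball of radius comparable to the reach — neither choice changes the resulting estimates or the constants' dependence on $(\calM,h)$.
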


In particular, if $f$ is a constant function, then $L_f= 0$ and only the $O(\gamma^2)$ term remains in the bound in \eqref{eq:kernel-expansion-2}. 
The $O(\gamma^2)$ error is due to that the manifold has curvature while the local kernel function only accesses the Euclidean distance $\|x - y\|$ in the ambient space.
When $f$ is non-constant, the $O( \gamma^{\beta} )$ term results from the Taylor expansion (under manifold intrinsic coordinate) of $f$ at $x$,
and will be the leading term if $\beta < 2$.
When $\beta = 2$, the bound in \eqref{eq:kernel-expansion-2} becomes $O(\gamma^2)$,
which echoes the $O(\gamma^2)$ error in Lemma 8 of \cite{coifman2006diffusion}
(the latter was proved for $f$ with higher order regularity and under different technical assumptions).
The constant $\gamma_0$ in Lemma \ref{lemma:local-kernel} is for theoretical purposes, and, similar to other constant thresholds for the smallness of $\gamma$ in later analysis, it is generally not to be computed in practice. We will clarify the choice of bandwidth $\gamma$ in Remark \ref{rk:high-order-residual}.
The proof of Lemma \ref{lemma:local-kernel} follows the approach in \cite{coifman2006diffusion} using standard techniques of differential geometry,
and is included in 
Appendix \ref{sec:proofs}
for completeness.

The empirical test statistic $\widehat{T}$ is defined as in \eqref{eq:def-MMD2}.
Define the population kernel test statistic
\begin{align}
T & := 
\E_{x \sim p, \, y \sim p} K_\gamma(x,y)  
+\E_{x \sim q, \, y \sim q} K_\gamma(x,y) 
-2 \E_{x \sim p, \, y \sim q} K_\gamma(x,y) \nonumber \\
& = \int_\calM \int_\calM K_\gamma(x,y) (p-q)(x) (p-q)(y) dV(x) dV(y),
\label{eq:def-T}
\end{align}
which equals the population (squared) kernel MMD {when $K_\gamma$ is PSD}. 
Applying Lemma \ref{lemma:local-kernel} gives the leading term in $T$ as $\gamma \to 0$,
as characterized in the following lemma. 
Define the squared $L^2$-divergence between $p$ and $q$ as
\begin{equation}\label{eq:def-delta2}
\Delta_2 := \int_\calM (p-q)^2 dV =\Delta_2(p,q).
\end{equation}

\begin{lemma}\label{lemma:ET}
Under Assumptions \ref{assump:M}, \ref{assump:p}, \ref{assump:h-C1},
$\gamma_0$  and $C_1$ as in Lemma \ref{lemma:local-kernel},
when   $ 0 < \gamma < \min \{ 1, \gamma_0 \}$, 
\begin{equation}\label{eq:T-expansion}
\gamma^{-d} T =    m_0[h]  \Delta_2  + r_T,
\quad |r_T | \le   \tilde{C_1}    (L_\rho  +\rho_{\rm max} )  \gamma^{\beta }  \Delta_2^{1/2} ,
\end{equation}
where $\tilde{C_1} := 2 C_1   {  \rm Vol }(\calM)^{1/2}$ is a constant depending on $(\calM, h)$.
\end{lemma}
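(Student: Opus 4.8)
The plan is to apply Lemma~\ref{lemma:local-kernel} with $f = p-q$ and then integrate against $(p-q)$ once more. First I would observe that, under Assumption~\ref{assump:p}, the difference $f := p-q$ lies in $\calH^\beta(\calM)$ with H\"older constant $L_f \le L_\rho + L_\rho = 2L_\rho$ (by subadditivity of the seminorm, using the triangle inequality for the defining increments in both cases $\beta \le 1$ and $1<\beta\le 2$) and $\|f\|_\infty \le \rho_{\max}$ (since $0 \le p,q \le \rho_{\max}$, so $|p-q| \le \rho_{\max}$ pointwise). Actually a cleaner bound for the analysis is to keep $L_f$ and $\|f\|_\infty$ symbolic and substitute at the end; the key point is both are $O(L_\rho + \rho_{\max})$.

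Next I would write, starting from the integral representation \eqref{eq:def-T},
\[
\gamma^{-d} T = \int_\calM \left( \gamma^{-d} \int_\calM K_\gamma(x,y) f(y)\, dV(y) \right) f(x)\, dV(x).
\]
I would add and subtract $m_0[h] f(x)$ inside the outer integral to split this into the main term $m_0[h] \int_\calM f(x)^2\, dV(x) = m_0[h]\, \Delta_2$ plus a remainder
\[
r_T = \int_\calM \left( \gamma^{-d}\int_\calM K_\gamma(x,y) f(y)\, dV(y) - m_0[h] f(x) \right) f(x)\, dV(x).
\]
By Lemma~\ref{lemma:local-kernel}, the inner parenthesized quantity is bounded in absolute value by $C_1(L_f \gamma^\beta + \|f\|_\infty \gamma^2)$ uniformly in $x$, for $0 < \gamma < \min\{1,\gamma_0\}$. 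Hence $|r_T| \le C_1(L_f \gamma^\beta + \|f\|_\infty \gamma^2) \int_\calM |f(x)|\, dV(x)$, and by Cauchy--Schwarz on $(\calM, dV)$ we get $\int_\calM |f|\, dV \le \Vol(\calM)^{1/2} \left(\int_\calM f^2\, dV\right)^{1/2} = \Vol(\calM)^{1/2}\, \Delta_2^{1/2}$.

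Finally I would collect constants: since $\gamma < 1$ we have $\gamma^2 \le \gamma^\beta$ when $\beta \le 2$, so $L_f \gamma^\beta + \|f\|_\infty \gamma^2 \le (L_f + \|f\|_\infty)\gamma^\beta \le (2L_\rho + \rho_{\max})\gamma^\beta \le 2(L_\rho + \rho_{\max})\gamma^\beta$, giving $|r_T| \le 2C_1 \Vol(\calM)^{1/2} (L_\rho + \rho_{\max}) \gamma^\beta \Delta_2^{1/2} = \tilde C_1 (L_\rho + \rho_{\max})\gamma^\beta \Delta_2^{1/2}$ with $\tilde C_1 := 2 C_1 \Vol(\calM)^{1/2}$, as claimed. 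The constant $\tilde C_1$ depends only on $(\calM, h)$ because $C_1$ does (by Lemma~\ref{lemma:local-kernel}) and $\Vol(\calM)$ is intrinsic to $\calM$.

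There is no real obstacle here; the statement is essentially a corollary of Lemma~\ref{lemma:local-kernel}. The only points requiring a little care are (i) verifying that the H\"older seminorm of $p-q$ is controlled by that of $p$ and $q$ in both regimes $\beta \le 1$ and $1 < \beta \le 2$ (the latter uses linearity of $\nabla_\calM$ and the triangle inequality on gradient increments, plus $\|\nabla_\calM(p-q)\|_\infty \le \|\nabla_\calM p\|_\infty + \|\nabla_\calM q\|_\infty$), and (ii) the routine Cauchy--Schwarz step to convert the $L^1$ norm of $p-q$ into $\Delta_2^{1/2}$. Everything else is bookkeeping of constants, and the uniformity in $x$ in Lemma~\ref{lemma:local-kernel} is exactly what makes the outer integration clean.
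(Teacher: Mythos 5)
Your proof is correct and follows essentially the same route as the paper: apply Lemma~\ref{lemma:local-kernel} to $f=p-q$, subtract off the main term $m_0[h]f(x)$, and control the remainder by the uniform bound on the inner-integral error together with Cauchy--Schwarz to convert $\int|f|\,dV$ into $\Vol(\calM)^{1/2}\Delta_2^{1/2}$. The only cosmetic difference is that you use the slightly sharper pointwise bound $\|p-q\|_\infty \le \rho_{\max}$ where the paper uses $2\rho_{\max}$; both are absorbed into the same constant $\tilde C_1$.
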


We comment on the relationship between the population kernel statistic $T$ and the $L^2$-divergence $\Delta_2$ between the two densities $p$ and $q$.
Recall that $T$ by definition depends on the kernel bandwidth parameter $\gamma$.
By definition, if $\Delta_2 = 0$, then $p=q$ in the $L^2$ sense and this implies that $T =0$ for any $\gamma > 0$;
If $\Delta_2 > 0$, then Lemma \ref{lemma:ET} gives that 
\[
\gamma^{-d} T = \Delta_2^{1/2} \left( m_0[h] \Delta_2^{1/2} + O(\gamma^\beta) \right),
\]
which means that the right-hand side will be strictly positive when $\gamma$ is sufficiently small, and as a result, $T$ is also strictly positive (the magnitude is up to a scaling factor of $\gamma^{-d}$). When $\gamma$ is not small enough, then it is possible that $T $ becomes zero even $\Delta_2 > 0$.

\subsection{Control of the deviation of $\widehat{T}$ from mean}

We now control the deviation of the empirical test statistic $\widehat{T}$ around $T$,
where the latter equals $\E \widehat{T}$ up to an $O(1/n)$ bias.
For the sample sizes of the two sets of samples, our analysis needs $n_X$ and $n_Y$ to grow proportionally to one another, namely,
for some $ \rho_X  \in (0,1)$, 
\begin{equation}\label{eq:balance-1}
n = n_X + n_Y, \quad 
n_X / n \to \rho_X, \quad
\text{ as $n_X, n_Y \to \infty$.}
\end{equation}
As our analysis considers sufficiently large samples, 
we introduce the following technical condition based on \eqref{eq:balance-1} (the constant 0.9 can be changed to any positive number less than 1)
\begin{equation}\label{eq:rho_X}
0.9 \rho_X \le  \frac{n_X-1}{n}, 
\quad
0.9 (1-\rho_X) \le  \frac{n_Y-1}{n},  
\quad  0 < \rho_X <1.
\end{equation}
The condition stands for the requirement of the largeness of $n$ such that the balanced sizes of $n_X$ and $n_Y$ are achieved.
We call \eqref{eq:rho_X} the {\it balancing condition} and assume it holds for all $n$. 
Since in our non-asymptotic result, we will derive the needed large $n$ to guarantee the test level and power,
the balancing condition \eqref{eq:rho_X} allows us to focus the characterization of the needed $n$ on
constants related to the manifold, the two densities, and the kernel,
rather than the balancing of the two-sample sizes.

Proposition \ref{prop:conc-hatT} proves a sample complexity result of the statistic $\widehat T$, which controls the deviation $\widehat T - T $ using the concentration of U-statistics.
This estimation bound will be applied to control the upper tail of $\widehat T$ under $H_0$ and the lower tail of $\widehat T$ under $H_1$ respectively, and it can also  be of independent interest.
The  U-statistic argument was used in \cite[Theorem 10]{gretton2012kernel} but the deviation bound therein was based on the point-wise boundedness of the kernel and the influence of kernel bandwidth was not explicit.
Here we apply a Bernstein-type argument which allows to reveal the role of the bandwidth. The proof adopts the classical decoupling technique of the U-statistics \cite{hoeffding63probability} and is included in 
Appendix \ref{sec:proofs}  
for completeness.

\begin{proposition}[Control of $|\widehat T - T|$]
\label{prop:conc-hatT}
Under  Assumption \ref{assump:M}, \ref{assump:p}, \ref{assump:h-C1},
and the balancing condition \eqref{eq:rho_X}.
Define
\begin{equation}\label{eq:def-nu}
c: = 0.9 \min\{ \rho_X, 1- \rho_X\}, \quad
\nu := (m_0[h^2] +1) \rho_{\max}.
\end{equation}
Then, there is a constant $C_1^{(2)}>0$ depending on $(\calM, h)$
such that when
$ 0 <  \gamma < \min \{ 1, \gamma_0, 
(C_1^{(2)})^{-1/2} \}$, 
for any $ 0 < \lambda < 3 \sqrt{ c \nu \gamma^d n }$, 
with probability $\ge 1- 3 e^{-\lambda^2/8}$,
\[
\widehat{T} \le T +  \frac{{2}}{cn} + 4 \lambda \sqrt{ \frac{\nu}{c} \frac{ \gamma^d  }{ n } },
\]
and with probability $\ge 1- 3 e^{-\lambda^2/8}$,
\[
\widehat{T} \ge T { -\frac{2}{cn}}
            - 4 \lambda \sqrt{ \frac{\nu}{c} \frac{ \gamma^d  }{ n } }.
\]
The constant $C_1^{(2)}$ corresponds to the constant $C_1$ in Lemma \ref{lemma:local-kernel} with the function $h$ replaced by $h^2$.
\end{proposition}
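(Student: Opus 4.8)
The plan is to write $\widehat{T}$ as a combination of (off-diagonal) U-statistics plus a small diagonal correction, control the bias of this decomposition, and then concentrate each U-statistic by a Bernstein-type bound obtained through the classical Hoeffding decoupling. First I would peel off the diagonal from each self-term: since $K_\gamma(x,x)=h(0)\in[0,1]$ by Assumption \ref{assump:h-C1}(C2),(C3),
\[
\frac{1}{n_X^2}\sum_{i,i'}K_\gamma(x_i,x_{i'}) = U_X + \frac{h(0)-U_X}{n_X},
\qquad
U_X:=\frac{1}{n_X(n_X-1)}\sum_{i\ne i'}K_\gamma(x_i,x_{i'}),
\]
and symmetrically for the $Y$-term, while the cross term equals $2U_{XY}$ exactly with $U_{XY}:=\frac{1}{n_Xn_Y}\sum_{i,j}K_\gamma(x_i,y_j)$. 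As $0\le U_X,U_Y,h(0)\le1$ and the balance-ness condition \eqref{eq:rho_X} forces $n_X,n_Y>cn$, the two correction terms are each at most $1/(cn)$ in absolute value, so $\widehat{T}=(U_X+U_Y-2U_{XY})+r$ with $|r|\le 2/(cn)$. Since all indices inside $U_X,U_Y,U_{XY}$ are distinct, $\E[U_X+U_Y-2U_{XY}] = \E_{x,x'\sim p}K_\gamma+\E_{y,y'\sim q}K_\gamma-2\E_{x\sim p,y\sim q}K_\gamma = T$, which is $0$ under $H_0$.

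Next I would establish the variance proxy, which is where the bandwidth enters. Fix $x\in\calM$; bounding $p,q\le\rho_{\max}$,
\[
\E_{y\sim p}\bigl[K_\gamma(x,y)^2\bigr]
= \int_\calM h^2\!\left(\frac{\|x-y\|^2}{\gamma^2}\right)p(y)\,dV(y)
\le \rho_{\max}\int_\calM h^2\!\left(\frac{\|x-y\|^2}{\gamma^2}\right)dV(y).
\]
The function $h^2$ again satisfies Assumption \ref{assump:h-C1} (continuity and $C^1$ are inherited; $|h^2|$ and $|(h^2)'|=2|hh'|$ have sub-exponential tails; $h^2\ge0$), so Lemma \ref{lemma:local-kernel} applied with $h$ replaced by $h^2$ and $f\equiv1$ (hence $L_f=0$) gives $\bigl|\gamma^{-d}\int_\calM h^2(\|x-y\|^2/\gamma^2)dV(y)-m_0[h^2]\bigr|\le C_1^{(2)}\gamma^2$, with $C_1^{(2)}$ the constant of that lemma for $h^2$. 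Requiring $\gamma<\min\{1,\gamma_0,(C_1^{(2)})^{-1/2}\}$ makes $C_1^{(2)}\gamma^2<1$, so $\E_{y\sim p}[K_\gamma(x,y)^2]\le(m_0[h^2]+1)\rho_{\max}\gamma^d=\nu\gamma^d$; the identical bound holds with $p$ replaced by $q$. Consequently every i.i.d.\ ``block'' that will arise in the decoupling — $K_\gamma(X_1,X_2)$ with $X_1,X_2\sim p$, the analogue for $q$, and $K_\gamma(X_1,Y_1)$ with $X_1\sim p$, $Y_1\sim q$ — has variance at most $\nu\gamma^d$ and range in $[0,1]$.

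Then I would invoke Hoeffding's decoupling: using symmetry of $K_\gamma$, write $U_X$ as an average over random permutations of an average of $\lfloor n_X/2\rfloor$ independent blocks $K_\gamma(x_{2k-1},x_{2k})$, and $U_{XY}$ as an average over random injections $\tau$ of an average of $\min\{n_X,n_Y\}$ independent blocks $K_\gamma(x_k,y_{\tau(k)})$; Jensen's inequality bounds the centered moment generating function of each U-statistic by that of the corresponding i.i.d.\ average, to which the one-sided Bernstein inequality applies with variance proxy $\nu\gamma^d$, range $1$, and effective sample size $\lfloor n_X/2\rfloor\ge cn/2$ (the one-sample statistics are the binding case, since the two-sample one keeps $\min\{n_X,n_Y\}>cn$ blocks). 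Choosing the deviation level $t:=\lambda\sqrt{(\nu/c)(\gamma^d/n)}$, a direct computation shows that for $0<\lambda<3\sqrt{c\nu\gamma^d n}$ the Bernstein denominator $\nu\gamma^d+t/3$ is at most $2\nu\gamma^d$, i.e.\ we are in the sub-Gaussian regime, so each of the three one-sided events $\{U_X-\E U_X>t\}$, $\{U_Y-\E U_Y>t\}$, $\{U_{XY}-\E U_{XY}<-t\}$ (and their reverses for the $H_1$ bound) has probability at most $e^{-\lambda^2/8}$. A union bound over these three events, combined with $|U_X+U_Y-2U_{XY}-T|\le4t$ on the complement and $|r|\le2/(cn)$, yields $\widehat{T}\le 2/(cn)+4t$ under $H_0$ and $\widehat{T}\ge T-2/(cn)-4t$ under $H_1$, which are the claimed inequalities.

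The most delicate step is the decoupling together with the Bernstein bookkeeping: one must handle the rectangular, unbalanced two-sample statistic $U_{XY}$ correctly, and verify that the Bernstein parameters reproduce \emph{exactly} the stated admissible window $\lambda<3\sqrt{c\nu\gamma^d n}$ and the clean $e^{-\lambda^2/8}$ tail rather than merely an inequality of the right shape. Tracking the several $O(1/(cn))$ remainder terms so that they all fit inside the single $2/(cn)$ term, and confirming that $h^2$ legitimately inherits Assumption \ref{assump:h-C1} so that $C_1^{(2)}$ is well defined, are the other points requiring care.
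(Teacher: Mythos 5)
Your proposal is correct and follows essentially the same route as the paper: peel off the diagonal corrections (bounded by $2/(cn)$ using the balance-ness condition), bound the per-pair variance by $\nu\gamma^d$ via Lemma~\ref{lemma:local-kernel} applied to $h^2$ with $f\equiv 1$, decouple each U-statistic into an i.i.d.\ sum of $\lfloor N/2\rfloor$ (resp.\ $\min\{n_X,n_Y\}$) blocks via Hoeffding's permutation argument and Jensen, and then apply one-sided Bernstein with range~$1$, verifying that the stated $\lambda$-window puts you in the sub-Gaussian regime yielding the $e^{-\lambda^2/8}$ tail; the union over the three one-sided events gives the factor~$3$. The only cosmetic difference is that the paper bounds the two diagonal correction terms one-sidedly (separately for the $H_0$ and $H_1$ cases) rather than wrapping them in a single two-sided $|r|\le 2/(cn)$, but this changes nothing in the conclusion.
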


Due to the fact that the proof of Proposition \ref{prop:conc-hatT} reduces the concentration of the U-statistic to that of an $O(n)$-term independent sum, which is the same as the linear-time statistic (see Remark \ref{rk:lin-time-MMD}), an $O(n^{-1/2})$ fluctuation of the statistics $\widehat{T}$ around the mean is obtained (without considering $\gamma$ in the big-O notation and up to the $O(n^{-1})$ bias). It is worth noting that, under $H_0$, the deviation is expected to scale as $O(n^{-1})$ \cite{gretton2012kernel, cheng2020two}. In Section \ref{sec:discussion}, we will discuss the possible influence on the asymptotic rate for detecting $q \neq p$. In practice, the testing threshold is usually estimated empirically using bootstrap methods rather than chosen according to theory, because the theoretical thresholds obtained by inequality can be over-conservative and those by approximation can be less accurate. See Section \ref{sec:experiments} for more details about the algorithm in practice.

\subsection{Test level and power}

We are ready to derive the main theorem which characterizes the kernel test's level and power when applied to manifold data at a finite sample size.

\begin{theorem}[Power of kernel test]
\label{thm:power}
Under  Assumptions \ref{assump:M}, \ref{assump:p}, \ref{assump:h-C1},
and the balancing condition \eqref{eq:rho_X}, 
let the constants
$\gamma_0$ be as in Lemma \ref{lemma:local-kernel},
$\tilde{C}_1$ be as in Lemma \ref{lemma:ET},
and $c$, $\nu$, and $C_1^{(2)}$  be as in Proposition \ref{prop:conc-hatT}.
Define $ \lambda_1 := \sqrt{ 8 \log (3/ \alpha_{\rm level} )}$,
and let the threshold for the test be
$t_{\rm thres} := { {2}}/{(cn)} + 4 \lambda_1 \sqrt{ {\nu \gamma^d }/{(c n) }  }$.
For $q \neq p$ under  $H_1$, suppose $\Delta_2 = \int_\calM (p-q)^2 dV > 0$.
Then, when $\gamma$ is small enough such that
$0 < \gamma < \min \left\{ 1, \gamma_0, 
( C_1^{(2)} )^{-1/2} \right \}$ and
\begin{equation}\label{eq:cond-small-sigma-2}
 \tilde{ C}_1 { ( L_\rho + \rho_{\max} ) \gamma^{\beta} }  <    0.1 m_0[h]  \Delta_2^{1/2} ,
\end{equation}
and meanwhile, for some constant  $\lambda_2 > 0$, $n$ is large enough such that
\begin{equation}\label{eq:cond-sigma}
 \gamma^d n > \max \left\{  \frac{1}{ c \nu} \left(\frac{ \max\{ \lambda_1, \lambda_2\}}{3}\right)^2,\, \frac{10}{ c   m_0[h] \Delta_2 
 }, \, 
 \frac{\nu}{c} \left(\frac{8 (\lambda_1 + \lambda_2)}{    m_0[h] \Delta_2 
   }\right)^2 
 \right\},
\end{equation}
then
\begin{equation}\label{eq:type1-type2-thm}
\begin{split}
\Pr [ \widehat{T} > t_{\rm thres} | H_0 ] \le \alpha_{\rm level},  \quad
\Pr [ \widehat{T} \le t_{\rm thres} | H_1 ] \le 3 e^{-\lambda_2^2/8}.
\end{split}
\end{equation}
\end{theorem}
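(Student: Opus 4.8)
The plan is to establish the two probability bounds in \eqref{eq:type1-type2-thm} separately, each by a single application of the corresponding branch of Proposition \ref{prop:conc-hatT}; no union bound is needed between them since one is evaluated under $H_0$ and the other under $H_1$. As a preliminary step I would note that the hypothesis $0<\gamma<\min\{1,\gamma_0,(C_1^{(2)})^{-1/2}\}$ is exactly what Lemma \ref{lemma:local-kernel}, Lemma \ref{lemma:ET} and Proposition \ref{prop:conc-hatT} all require, and that the first entry in the $\max$ of \eqref{eq:cond-sigma}, namely $\gamma^d n>\frac{1}{c\nu}(\max\{\lambda_1,\lambda_2\}/3)^2$, is equivalent to $\max\{\lambda_1,\lambda_2\}<3\sqrt{c\nu\gamma^d n}$, so that both $\lambda_1$ and $\lambda_2$ fall in the admissible range $0<\lambda<3\sqrt{c\nu\gamma^d n}$ of Proposition \ref{prop:conc-hatT}.

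For the Type-I bound I would invoke Proposition \ref{prop:conc-hatT}(1) with $\lambda=\lambda_1=\sqrt{8\log(3/\alpha_{\rm level})}$. The threshold $t_{\rm thres}=2/(cn)+4\lambda_1\sqrt{\nu\gamma^d/(cn)}$ is by construction the right-hand side of the bound in that statement with $\lambda=\lambda_1$, so under $H_0$ the event $\{\widehat T\le t_{\rm thres}\}$ holds with probability at least $1-3e^{-\lambda_1^2/8}=1-\alpha_{\rm level}$ by the choice of $\lambda_1$; passing to complements gives $\Pr[\widehat T>t_{\rm thres}\mid H_0]\le\alpha_{\rm level}$.

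For the Type-II bound I would apply Proposition \ref{prop:conc-hatT}(2) with $\lambda=\lambda_2$, so that with probability at least $1-3e^{-\lambda_2^2/8}$ one has $\widehat T\ge T-2/(cn)-4\lambda_2\sqrt{\nu\gamma^d/(cn)}$. It then suffices to prove the deterministic inequality
\[
T>\frac{4}{cn}+4(\lambda_1+\lambda_2)\sqrt{\frac{\nu\gamma^d}{cn}}.
\]
To lower-bound $T$ I would combine Lemma \ref{lemma:ET} with the kernel-bias condition \eqref{eq:cond-small-sigma-2}: multiplying \eqref{eq:cond-small-sigma-2} through by $\Delta_2^{1/2}>0$ gives $|r_T|<0.1\,m_0[h]\,\Delta_2$, hence $T=\gamma^d(m_0[h]\Delta_2+r_T)>0.9\,m_0[h]\,\Delta_2\,\gamma^d$. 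The remaining two entries in the $\max$ of \eqref{eq:cond-sigma} are tuned precisely so that $4/(cn)<0.4\,m_0[h]\,\Delta_2\,\gamma^d$ (from $\gamma^d n>10/(c\,m_0[h]\Delta_2)$) and, after squaring, $4(\lambda_1+\lambda_2)\sqrt{\nu\gamma^d/(cn)}<0.5\,m_0[h]\,\Delta_2\,\gamma^d$ (from $\gamma^d n>\frac{\nu}{c}(8(\lambda_1+\lambda_2)/(m_0[h]\Delta_2))^2$); adding these two strict bounds yields $4/(cn)+4(\lambda_1+\lambda_2)\sqrt{\nu\gamma^d/(cn)}<0.9\,m_0[h]\,\Delta_2\,\gamma^d<T$, which closes the chain. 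Consequently $\widehat T>t_{\rm thres}$ on the good event, so $\Pr[\widehat T\le t_{\rm thres}\mid H_1]\le 3e^{-\lambda_2^2/8}$.

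The argument is essentially bookkeeping once Lemma \ref{lemma:ET} and Proposition \ref{prop:conc-hatT} are in hand; all manifold geometry has already been absorbed into the constants $\gamma_0,\tilde C_1,C_1^{(2)},m_0[h],\nu$. The one step requiring genuine care — and the one I would regard as the main obstacle — is the last deterministic inequality, where one must check that the three regimes singled out in \eqref{eq:cond-sigma} (feasibility of $\lambda_1,\lambda_2$; dominance of the $O((cn)^{-1})$ U-statistic bias term; dominance of the $O(\sqrt{\gamma^d/(cn)})$ fluctuation term), together with the kernel-bias condition \eqref{eq:cond-small-sigma-2}, are exactly what is needed, so that the explicit numerical constants $0.1,0.4,0.5,0.9$ and the factors $8$ and $1/9$ all come out consistently and the gap between the lower envelope of $\widehat T$ under $H_1$ and $t_{\rm thres}$ strictly closes rather than merely tying.
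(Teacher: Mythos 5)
Your proposal is correct and follows essentially the same route as the paper's own proof: verify $\lambda_1,\lambda_2$ are admissible via the first entry of the max in \eqref{eq:cond-sigma}, get the Type-I bound directly from Proposition \ref{prop:conc-hatT}(1) and the constructive choice of $t_{\rm thres}$, then for Type-II combine the lower bound $T>0.9\,m_0[h]\Delta_2\gamma^d$ (from Lemma \ref{lemma:ET} and \eqref{eq:cond-small-sigma-2}) with the two remaining entries of \eqref{eq:cond-sigma} to close the deterministic gap $t_{\rm thres}<T-2/(cn)-4\lambda_2\sqrt{\nu\gamma^d/(cn)}$. All the bookkeeping on the constants $0.1,0.4,0.5,0.9$ matches the paper exactly.
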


We give a few comments to interpret the result in Theorem \ref{thm:power}.
First, the choice of the test threshold in the theorem is a theoretical one to facilitate our analysis, especially to obtain the dependence of test power on various factors like the dimensionality of data.
Second, Theorem \ref{thm:power} considers a fixed alternative $q$, and the bound of testing power holds for finite samples and finite $\gamma$.
To obtain a test power close to 1, namely a Type-II error in \eqref{eq:type1-type2-thm} as small as $\epsilon$,
one can make $\lambda_2 = \sqrt{ 8 \log (3/ \epsilon )}$, 
and then the theorem guarantees the test power 
when $\gamma$ can be chosen to satisfy \eqref{eq:cond-small-sigma-2} and \eqref{eq:cond-sigma} simultaneously,  which requires $n$ to be large enough given $\Delta_2$. 
This also leads to an argument for, with large $n$, what is the smallest $\Delta_2$ (scales with a negative power of $n$) such that the $H_1$ can be correctly rejected using the kernel tests (with probability at least $1-\epsilon$). 
We call this the ``rate-for-detection'' and it is derived in Corollary \ref{cor:rate}.  
At last, in Theorem \ref{thm:power}, only the intrinsic dimensionality $d$ affects the testing power but not the ambient space dimensionality $m$.
The constants $\Delta_2$, $\rho_{\max}$, and $L_\rho$ are determined by $p$ and $q$ as H\"older functions on  $(\calM, dV)$
 and are intrinsically defined.

\begin{remark}[{Constant $m_0$}]
The constants $m_0[h^2]$ (appearing in the definition of  $\nu$) and $m_0[h]$ are integrals of the kernel function in $\R^d$ defined as in \eqref{eq:def-m0}.
The explicit values for the Gaussian RBF kernel are as follows:
\begin{example}[Constants for Gaussian $h$]
When $h( r ) = e^{- r /2}$,
\[
m_0 [h] = \int_{\R^d} e^{- |u|^2 /2} du = (2\pi)^{d/2},
\quad
m_0 [h^2] = \int_{\R^d} e^{- |u|^2 } du = \pi^{d/2}.
\]
For general $h$, both constants depend on $d$.
\end{example}
\end{remark}

We consider the scenario where $\Delta_2(p,q)$ is allowed to decrease to zero as the sample size increases. 
The following corollary shows that the kernel test can achieve a positive test power (at the test level)
as long as  $\Delta_2 \gtrsim n^{- { 2 \beta/( d + 4  \beta ) }} $,
and is asymptotically consistent (power approaches 1) when $\Delta_2$ is greater than that order.

\begin{corollary}[{Rate-for-detection}]\label{cor:rate}
Under the same assumptions as in Theorem \ref{thm:power}, 
suppose as $n$ increases, 
$\gamma \sim n^{-1/(d +  4  \beta)}$,
the densities $p$ and $q$ satisfy that their squared $L^2$-divergence 
$\Delta_2$ is positive 
and is less than an $O(1)$ constant determined by $\rho_{\rm max}$, $d$ and $h$,
and, for $0 < \epsilon < 1$, with large $n$,
\begin{equation}\label{eq:cond-rate}
\Delta_2  > c_3 \left( \log \frac{1}{ \alpha_{\rm level} } + \log \frac{1}{\epsilon}  \right)^{1/2}  n^{- 2\beta/(d + 4\beta)},
\end{equation}
where the constant $c_3$ depends on constants $\{ L_\rho, \rho_{\rm max}, \rho_X, d, \beta \}$ and $(\calM, h)$
and $ \alpha_{\rm level} < 1/2$.
Then, for large enough $n$, the kernel test achieves a test level $\alpha_{\rm level}$ and a test power at least $1-\epsilon$.
In particular, the test power $\to 1$ as $n\rightarrow \infty$ if $\Delta_2 \gg n^{- 2\beta/(d + 4\beta)}$. 
\end{corollary}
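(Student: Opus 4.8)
The plan is to obtain Corollary \ref{cor:rate} directly from Theorem \ref{thm:power} by substituting the prescribed bandwidth $\gamma \sim n^{-1/(d+4\beta)}$ and checking, for $n$ large and $c_3$ in \eqref{eq:cond-rate} chosen sufficiently large, that every hypothesis of the theorem holds. First I would fix $\lambda_1 = \sqrt{8\log(3/\alpha_{\rm level})}$ as in Theorem \ref{thm:power} and set $\lambda_2 = \sqrt{8\log(3/\epsilon)}$, so that the Type-II bound $3e^{-\lambda_2^2/8}$ in \eqref{eq:type1-type2-thm} equals $\epsilon$. With these choices the conclusion of Theorem \ref{thm:power} reads exactly ``test level $\alpha_{\rm level}$ and power $\ge 1-\epsilon$'', so it remains only to verify $0<\gamma<\min\{1,\gamma_0,(C_1^{(2)})^{-1/2}\}$, condition \eqref{eq:cond-small-sigma-2}, and condition \eqref{eq:cond-sigma}.

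The first of these is automatic for large $n$ since $\gamma\to 0$. For \eqref{eq:cond-small-sigma-2}, writing $C_\gamma, c_\gamma$ for the upper and lower proportionality constants of $\gamma \sim n^{-1/(d+4\beta)}$, one has $\gamma^\beta \le C_\gamma^\beta n^{-\beta/(d+4\beta)}$, while \eqref{eq:cond-rate} gives $\Delta_2^{1/2} \ge c_3^{1/2}\big(\log(1/\alpha_{\rm level})+\log(1/\epsilon)\big)^{1/4} n^{-\beta/(d+4\beta)}$; since $\alpha_{\rm level}<1/2$ the log-factor is bounded below by an absolute constant, so \eqref{eq:cond-small-sigma-2} reduces to a comparison of constants that holds as soon as $c_3$ is large relative to $\tilde C_1, L_\rho, \rho_{\max}, m_0[h]$ and $C_\gamma$. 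For \eqref{eq:cond-sigma} I would use $\gamma^d n \ge c_\gamma^d n^{4\beta/(d+4\beta)}$ and bound the right-hand maximum termwise: the first term is a fixed constant, beaten since $\gamma^d n\to\infty$; the second term is $O(\Delta_2^{-1})=O(n^{2\beta/(d+4\beta)})$, which is $o(n^{4\beta/(d+4\beta)})$; and the third term is $O\big((\lambda_1+\lambda_2)^2\Delta_2^{-2}\big)$, which by \eqref{eq:cond-rate} and $(\lambda_1+\lambda_2)^2\lesssim \log(1/\alpha_{\rm level})+\log(1/\epsilon)$ is at most $(C/c_3^2)\,n^{4\beta/(d+4\beta)}$ for an absolute constant $C$, hence dominated by $\gamma^d n$ once $c_3$ is taken large enough (now also depending on $\nu, c, m_0[h], c_\gamma$, i.e.\ ultimately on $\{L_\rho,\rho_{\max},\rho_X,d,\beta\}$ and $(\calM,h)$, matching the stated dependence list). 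The ``$O(1)$ upper bound on $\Delta_2$'' merely records that we are in the vanishing-divergence regime; it holds automatically because $p,q$ are probability densities bounded by $\rho_{\max}$, giving $\Delta_2\le 2\rho_{\max}$, and it keeps the constant comparisons above consistent.

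With all hypotheses of Theorem \ref{thm:power} verified, \eqref{eq:type1-type2-thm} delivers level $\alpha_{\rm level}$ and power $\ge 1-\epsilon$ for all sufficiently large $n$. For the final claim that the power $\to 1$ when $\Delta_2 \gg n^{-2\beta/(d+4\beta)}$, I would rerun the same argument with a slowly vanishing sequence $\epsilon_n\to 0$ chosen so that $(\log(1/\epsilon_n))^{1/2}\to\infty$ while $\Delta_2 \gg (\log(1/\epsilon_n))^{1/2}\, n^{-2\beta/(d+4\beta)}$ still holds (possible precisely because the ratio diverges), so that \eqref{eq:cond-rate} remains satisfied with $\epsilon=\epsilon_n$ and the power is $\ge 1-\epsilon_n\to 1$.

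The main obstacle is bookkeeping rather than conceptual: one must track the constants through \eqref{eq:cond-small-sigma-2} and the three branches of \eqref{eq:cond-sigma} so that a single threshold $c_3$ with the advertised dependence works for all of them at once, and one must handle the $\log(1/\alpha_{\rm level})$ and $\log(1/\epsilon)$ factors uniformly (using $\alpha_{\rm level}<1/2$ and $\epsilon<1$) so that they can be absorbed into $c_3$; no analytic input beyond Theorem \ref{thm:power} is required.
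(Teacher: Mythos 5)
Your proposal is correct and follows essentially the same route as the paper's proof: fix $\lambda_1,\lambda_2$ via the target level and power, substitute the scaling $\gamma \sim n^{-1/(d+4\beta)}$, and track constants to see that \eqref{eq:cond-small-sigma-2} and \eqref{eq:cond-sigma} follow from \eqref{eq:cond-rate} once $c_3$ is taken large, with the same diagonalization for the ``power $\to 1$'' claim. One small difference worth noting: the paper invokes the hypothesis that $\Delta_2$ is below an $O(1)$ threshold $d_2$ (depending on $m_0[h]$ and $\nu$) precisely so that the \emph{third} term in \eqref{eq:cond-sigma} dominates, and then only needs to verify that single term; you instead bound all three terms of the max separately (fixed constant, $O(\Delta_2^{-1})$, $O((\lambda_1+\lambda_2)^2\Delta_2^{-2})$), which sidesteps that hypothesis. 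This is a legitimate and arguably cleaner route, but your remark that the $O(1)$ upper bound on $\Delta_2$ ``holds automatically'' from $\Delta_2\le 2\rho_{\max}$ slightly misreads the statement: the corollary posits $\Delta_2 < d_2$ for a specific $d_2$ which need not be as large as $2\rho_{\max}$, so it is a genuine hypothesis in the paper's argument (even if your termwise bounding renders it unnecessary). Since your verification of \eqref{eq:cond-sigma} never uses that hypothesis, your proof stands; just avoid asserting the hypothesis is vacuous.
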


\begin{remark}[Choice of bandwidth]\label{rk:high-order-residual}

As shown in Corollary \ref{cor:rate},
when $\Delta_2$ is small as in the regime therein,
the bandwidth needs to scale with $n^{-1/(d + {4}  \beta)}$ so that the test can have power.
Such a kernel bandwidth $\gamma \to 0$ as $n$ increases.
The analysis suggests using small-bandwidth kernels for the test to detect small changes in distribution when large data samples are available.
In contrast, the median distance choice of bandwidth \cite{gretton2012kernel} may lead to $\gamma$ of order $O(1)$ in this case:
on a manifold of diameter $O(1)$, suppose the data density is uniform,
then the median of pairwise distance is generally $O(1)$.
Thus the median distance $\gamma$ may not be optimal 
for high-dimensional data, for example, when data lie on or near intrinsically low-dimensional manifolds or sub-manifolds, 
and there are sufficiently many samples in the dataset to detect a small departure of the density. 
We show in Section \ref{sec:experiments} that high-dimensional data kernel tests with a smaller bandwidth can outperform those with the median distance bandwidth in experiments. 
Theoretically, note that for kernel tests on data in Euclidean space, the optimal $\gamma$ is shown to also scale with a negative power $n$ to achieve minimax rate of detection \cite{li2019optimality}. We further discuss the rate and the relation to this work in the discussion section.

In practice, kernel bandwidth is a hyper-parameter that can be determined by some cross-validation procedure at the cost of additional computation. 
The optimal choice of bandwidth depends on data distribution and sample size and would be difficult to predict theoretically. 
In particular, our theory (starting from Lemma \ref{lemma:local-kernel} to Theorem \ref{thm:power}) needs $\gamma$ to be less than some $O(1)$ constant, and these theoretical constants can be difficult to obtain in practice especially when the data manifold is unknown. Our analysis does not suggest estimating these constants as a manner to gauge whether the kernel bandwidth is proper or not. 
Instead, the interpretation of our theory should be that, under the necessary conditions, there exists a $\gamma$ such that the kernel test is guaranteed to distinguish the density departure. Such $\gamma$ can be found, e.g., by cross-validation in practice. Of course, this only happens with sufficient data samples, and when the sample size is not large enough then the test power cannot be guaranteed - the selected $\gamma$ in practice may still lead to a test with power that is not guaranteed by our theory here.
Our rate for detection result provides a theoretical scaling of $\gamma$, which may provide guidance for the range of the value to search for in practice.
 For example, if the manifold intrinsic dimension is known {\it a priori} or can be estimated from data \cite{pettis1979intrinsic,farahmand2007manifold,levina2004maximum,brito2013intrinsic,mordohai2010dimensionality}, our theoretical scaling would suggest how the bandwidth parameter should scale as the sample size increases. 
Generally, when more data samples are available, the theory suggests searching the smaller value range from the median distance, which can improve the detection ability of the kernel test for small density departures.
\end{remark}

\begin{remark}[Higher H\"older regularity]
\label{rk:beta>2}
As shown in {the proof of  Corollary \ref{cor:rate},
the improvement of detection rate from higher regularity $\beta$ is via analyzing how to fulfill \eqref{eq:cond-small-sigma-2} by setting $\gamma$ sufficiently small.}
The condition  \eqref{eq:cond-small-sigma-2} is based on the bound of $|r_T|$ in Lemma \ref{lemma:ET},
and the latter is proved by Lemma \ref{lemma:local-kernel} which gives an $O(\gamma^{\beta \wedge 2})$ bound in \eqref{eq:kernel-expansion-2}.
This means that when $\beta > 2$ (and $\| p-q\|_\infty > 0 $) the bound of $|r_T|$ will remain $O(\gamma^2)$.
As a result, higher H\"older regularity of the densities beyond two will not further improve the rate {under the current analysis.}
\end{remark}

\begin{remark}[Linear-time statistic]
\label{rk:lin-time-MMD}
When $n_X = n_Y = 2 m$, the linear-time test statistic, following \cite[Section 6]{gretton2012kernel}, is defined as
$\widehat{T}_{\rm lin} = \frac{1}{m} \sum_{i=1}^m  h(z_{2i-1}, z_{2i})$, where $z_i = (x_i, y_i)$  and 
$h( z_i, z_j)  : =    K_\gamma( x_{i}, x_{j} ) 
 -  K_\gamma( x_{i}, y_{j} ) 
 - K_\gamma( y_{i}, x_{j} ) 
 + K_\gamma( y_{i}, y_{j} )$.
 The construction in \cite{gretton2012kernel} is for kernel MMD test, but $\widehat{T}_{\rm lin}$ is well-defined when the kernel $K_\gamma$ is not PSD. 
 The statistic  $\widehat{T}_{\rm lin}$ can be computed using $O(n)$ time and memory. 
 The mean  $\E \widehat{T}_{\rm lin} = T$ as has been analyzed in Lemma \ref{lemma:ET};
 The deviation of $ \widehat{T}_{\rm lin}$ from mean observes bounds of the same order as in Proposition \ref{prop:conc-hatT},
 since the finite-sample concentration of $\widehat{T}_{\rm lin}$ is that of an independent sum of $n_X/2$ terms
 (technically the decoupling argument in the proof of Proposition \ref{prop:conc-hatT} reduces the concentration of the U-statistic to that of the $(2i-1, 2i)$-indexed independent sum).
As a result, the same test power analysis and rate of detection as proved in Theorem \ref{thm:power} {and  Corollary \ref{cor:rate}}
hold for the linear-time statistic $\widehat{T}_{\rm lin}$. 
\end{remark}

\section{Theoretical extensions to manifold with boundary and noisy data}\label{sec:theory-extend}

In this section, we extend the analysis in Section \ref{sec:theory} 
to two important cases, namely when the data manifold has a boundary and data has additive noise in high dimensional ambient space.

\subsection{Manifold with boundary}\label{subsec:manifold-boundary}

In many scenarios the data manifold has a boundary.
For instance, when the range of rotation angle in Example \ref{eg:manifold-data} is less than $[0, 2\pi]$ the curve in the image space is not closed, and this is an example of manifold having boundaries, also see Figure \ref{fig:mnist-show}. Another reason to consider boundary is the applicability of our theory to the Euclidean case (the manifold is ``flat''), where after assuming compact support of the distributions the support domain will have a boundary, see more in Remark \ref{rk:flat}.

For our analysis, when a data point $x$ approaches the manifold boundary the support of the local kernel will also intersect with the boundary, {which makes the expression of local kernel integral in Lemma \ref{lemma:local-kernel} not hold and voids the subsequent analysis.} 
The current section is devoted to extending the theory in  Section \ref{sec:theory} to the case of the manifold with boundary by first extending Lemma \ref{lemma:local-kernel}. We assume

\begin{assumption}\label{assump:M-boundary}
$\calM$ is a $d$-dimensional  compact $C^\infty$ sub-manifold isometrically embedded in $\R^m$,
where the boundary $\partial \calM$ is also $C^\infty$.
\end{assumption}

The analysis proceeds using similar techniques 
and is based on the local kernel integral lemma (Lemma \ref{lemma:local-kernel-boundary}), which handles when $x$ is on or near to $\partial \calM$. 
Theorem \ref{thm:power} then extends under an additional Assumption \ref{assump:contri-delta2-belt} and certain modifications of the constants and condition \eqref{eq:cond-small-sigma-2}, see the specifics in  Theorem \ref{thm:power-boundary}.

\begin{remark}[Euclidean space]\label{rk:flat}
When data densities $p$ and $q$ are compactly supported on some domain $\Omega$ in $\R^m$
and $\Omega$ has a smooth boundary, 
this is a special case of the manifold-with-boundary setting
where $d = m$.
Our theoretical result thus covers such cases.
When $m$ is large, there is a curse-of-dimensionality revealed by the 
$\gamma^{d}$ factor in the required lower bound of $n$ in the condition \eqref{eq:cond-sigma}.
\end{remark}

We start by establishing the following lemma, which is the counterpart of Lemma \ref{lemma:local-kernel}.

\begin{lemma}\label{lemma:local-kernel-boundary}
Suppose $\calM$ satisfies Assumption \ref{assump:M-boundary},
$h$ satisfies Assumption \ref{assump:h-C1},
and $f$ is in $\calH^{\beta}(\calM)$, 
{$ 0 < \beta \le 2$},
with  H\"older constant $L_f$. 
Let $d_{E}(x,\partial \calM):= \inf_{ y \in \partial \calM} \| x - y\| $,
and define $\delta_\gamma := \sqrt{ \frac{d+10}{a} \gamma^2 \log \frac{1}{\gamma}}$.
Then, 
there is $\gamma_0' > 0$ which depends on $\calM$ only,
such that when $ 0 < \gamma < \min \{ \gamma_0', 1 \}$,

(i) For any $x \in \calM$ such that $d_{E}(x,\partial \calM) > \delta_\gamma$,
\eqref{eq:kernel-expansion-2} holds.

(ii) There is constant $C_1'$ that depends on $(\calM , h)$,
such that for any $x$ s.t. $d_{E}(x,\partial \calM) \le \delta_\gamma$,
there exists a function $m_0^{(\gamma)}[h](x)$ depending on $\gamma$
s.t.  $ 0 \le m_0^{(\gamma)}[h](x)  \le m_0[h]$ for all $x$ and
\begin{equation}\label{eq:kernel-expansion-3}
\begin{split}
&\left|
 \gamma^{-d}\int_{\calM} h\left( \frac{\| x- y \|^2 }{\gamma^2} \right) f(y) dV(y)  - m_0^{(\gamma)}[h](x)  f(x)
 \right| 
\le C_1' ( L_f \gamma^{ \beta   { \wedge 1 }} + \|f\|_\infty \gamma^{2}).
\end{split}
\end{equation}
\end{lemma}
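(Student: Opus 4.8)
The plan is to adapt the proof of Lemma~\ref{lemma:local-kernel} to handle the boundary, splitting the manifold into an ``interior'' region (points at extrinsic distance more than $\delta_\gamma$ from $\partial\calM$) and a ``near-boundary belt'' $P_\gamma := \{x \in \calM : d_E(x,\partial\calM) \le \delta_\gamma\}$. For part (i), the key observation is that the Gaussian-type sub-exponential decay of $h$ (Assumption~\ref{assump:h-C1}(C2)) makes the kernel integral over the geodesic ball of radius $O(\gamma \sqrt{\log(1/\gamma)})$ around $x$ capture all but a $\gamma$-polynomial tail: the choice $\delta_\gamma = \sqrt{\tfrac{d+10}{a}\gamma^2\log\tfrac1\gamma}$ is precisely calibrated so that $e^{-a\delta_\gamma^2/\gamma^2} = \gamma^{d+10}$, which after the $\gamma^{-d}$ normalization leaves an $O(\gamma^{10})$ error, negligible against $\gamma^\beta$. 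Hence if $d_E(x,\partial\calM) > \delta_\gamma$, the relevant geodesic ball does not meet $\partial\calM$ (using that extrinsic and intrinsic distances are comparable at small scales, via the reach/curvature bound defining $\gamma_0'$), so the local computation is identical to the no-boundary case and \eqref{eq:kernel-expansion-2} holds verbatim.

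For part (ii), I would work in normal coordinates (or boundary normal coordinates) centered at the nearest boundary point, where $\calM$ locally looks like a half-space $\{u \in \R^d : u_1 \ge \phi(u_2,\dots,u_d)\}$ with $\phi$ smooth and $\phi(0)=0$, $\nabla\phi(0)=0$. The leading term of $\gamma^{-d}\int_\calM h(\|x-y\|^2/\gamma^2) dV(y)$ is then $\int_{H_x} h(\|u\|^2) du$ where $H_x$ is the half-space (through the origin of the rescaled coordinate, shifted according to how far $x$ sits inside), and this integral is a number in $[0, m_0[h]]$ depending on $\gamma$ through the rescaled position of $x$; this defines $m_0^{(\gamma)}[h](x)$. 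One then Taylor-expands $f$ around $x$ as before: the zeroth-order term gives $m_0^{(\gamma)}[h](x) f(x)$, the first-order term (present only when $\beta > 1$) contributes an $O(L_f \gamma)$ error because on a truncated (half-ball) domain the linear term no longer integrates to zero by symmetry — this is exactly why the bound degrades to $\gamma^{\beta \vee 1}$ rather than $\gamma^\beta$ — and the remainder contributes $O(L_f \gamma^\beta + \|f\|_\infty \gamma^2)$ from curvature and the H\"older remainder, just as in Lemma~\ref{lemma:local-kernel}. The non-negativity (C3) of $h$ guarantees $m_0^{(\gamma)}[h](x) \ge 0$.

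The main obstacle I anticipate is controlling the geometry of the near-boundary belt uniformly: one needs that for every $x \in P_\gamma$ there is a well-defined nearest boundary point, a local graph parametrization of $\calM$ over a half-disk in the tangent space with derivative bounds depending only on the curvature and reach of $\calM$ and of $\partial\calM$, and that the discrepancy between the extrinsic Euclidean ball $\{\|x-y\| \le \gamma\sqrt{\log(1/\gamma)}\}$ and the intrinsic geodesic half-ball is $O(\gamma^3)$-small after normalization — all of this must hold simultaneously for all $x$, which is what forces $\gamma_0'$ to depend on the second fundamental forms of both $\calM$ and $\partial\calM$. Once this uniform chart machinery is in place, the estimate \eqref{eq:kernel-expansion-3} follows by the same Taylor-expansion-plus-tail-truncation bookkeeping as in the interior case, with the single new feature being the loss of the odd-moment cancellation. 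I would relegate the detailed differential-geometric estimates to the appendix and present only the decomposition and the half-space normal form in the main argument.
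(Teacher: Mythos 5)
Your proposal is correct and follows essentially the same route as the paper's proof: split into a near-boundary belt $P_\gamma$ and its complement, truncate the kernel integral to a $\delta_\gamma$-ball (with $\delta_\gamma$ calibrated so that $e^{-a\delta_\gamma^2/\gamma^2}=\gamma^{d+10}$), handle the interior exactly as in Lemma~\ref{lemma:local-kernel}, and near the boundary use boundary normal coordinates around the nearest boundary point so that the leading term becomes the half-space integral defining $m_0^{(\gamma)}[h](x)\in[0,m_0[h]]$ (by (C3)), with the loss of odd-moment cancellation producing the extra $O(L_f\gamma)$ term and hence $\gamma^{\beta\vee 1}$. The paper delegates the uniform chart/normal-form machinery near $\partial\calM$ to Lemma~9 of \cite{coifman2006diffusion} rather than re-deriving it, but the substance is the same.
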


Similarly, as in Lemma \ref{lemma:local-kernel},  $\gamma_0' $ depends on manifold reach and curvature,
and the constant $C_1' $ depends on manifold curvature and volume, and the kernel function $h$. 
The lemma shows that  the error bound 
at $x$ that is $\delta_\gamma$ away from $\calM$ is $O(\gamma^{\beta \wedge 2})$ same as before,
and at $x$ that is within $\delta_\gamma$  distance from $\calM$ is $O(\gamma^{\beta \wedge 1})$.
This reflects the degeneracy of the kernel integral approximation at $x$, which is close to the manifold boundary.

When $\gamma < \min \{ \gamma_0' , 1\}$,
we define $P_\gamma : = \{ x\in \calM, \, d_{E}(x ,\partial \calM) \le \delta_\gamma \}$,
which is the $ \delta_\gamma$-near-boundary set   {as shown in Figure \ref{fig:manifold-diag}(Right)}. 
To extend Lemma \ref{lemma:ET}, we introduce the assumption that the major part of $\Delta_2$ is not coming from the integral on $P_\gamma$. 
  
\begin{assumption}\label{assump:contri-delta2-belt}
For $q \neq p$,  there are positive constants $\gamma_0'' $ and $C_3$ possibly depending on $\calM$  (and independent from $p$ and $q$),
such that when $\gamma < \gamma_0''$,
\[
\int_{P_\gamma} (p-q)^2 dV \le C_3 \delta_\gamma  \int_{\calM} (p-q)^2 dV.
\]
\end{assumption}
We then extend Lemma \ref{lemma:ET} which bounds the error between $\gamma^{-d}T$ and $m_0[h] {\Delta_2(p, q)}$
in the following lemma.

\begin{lemma}\label{lemma:ET-boundary}
Under Assumptions \ref{assump:p}, \ref{assump:h-C1}, \ref{assump:M-boundary}, \ref{assump:contri-delta2-belt},
$\gamma_0'$ as in Lemma \ref{lemma:local-kernel-boundary}.
Then, 
when   $ 0 < \gamma < \min \{ 1, \gamma_0' ,   {\gamma_0''} \}$, 
we have that
\begin{equation}\label{eq:T-expansion-boundary}
\begin{split}
\gamma^{-d} T 
&= m_0[h] \Delta_2  + r_T,
\quad 
|r_T |  \le 
  {  
 C_3 \delta_\gamma m_0[h] \Delta_2 + 
(L_\rho  +\rho_{\rm max} )  (\tilde{C_1}     \gamma^{\beta }  
 +  \tilde{C}_2' \gamma^{\beta \wedge 1}  \delta_\gamma )  \Delta_2^{1/2},}
\end{split}
\end{equation}
where the constants $\tilde{C}_1$  
(as in Lemma \ref{lemma:ET})
and $\tilde{C}_2'$ depend on $(\calM, h)$ only,
including manifold curvature and volume, 
the regularity and volume of $\partial \calM$, 
and the intrinsic dimensionality $d$.
\end{lemma}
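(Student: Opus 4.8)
The plan is to mimic the proof of Lemma~\ref{lemma:ET}, but to split the outer integral according to whether a point lies in the near-boundary set $P_\gamma$. Write $f:=p-q$. By Fubini (justified since $0\le K_\gamma\le 1$ and $\calM$ is compact),
\[
\gamma^{-d} T = \int_\calM g_\gamma(x)\, f(x)\, dV(x), \qquad
g_\gamma(x) := \gamma^{-d}\int_\calM h\!\left(\frac{\|x-y\|^2}{\gamma^2}\right) f(y)\, dV(y),
\]
so that $r_T:=\gamma^{-d}T - m_0[h]\Delta_2 = \int_\calM \big(g_\gamma(x)-m_0[h]f(x)\big) f(x)\, dV(x)$. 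I then split $\calM=(\calM\setminus P_\gamma)\cup P_\gamma$ and estimate the two pieces separately, using everywhere that $L_f=L_{p-q}\le 2L_\rho$ and $\|f\|_\infty\le 2\rho_{\max}$ (triangle inequality in the definition of the H\"older constant), and that $\gamma^2\le\gamma^{\beta}$ and $\gamma^2\le\gamma^{\beta\vee1}$ for $0<\gamma<1$ and $\beta\le 2$.

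On $\calM\setminus P_\gamma$ I apply Lemma~\ref{lemma:local-kernel-boundary}(i), which is exactly \eqref{eq:kernel-expansion-2}, so $|g_\gamma-m_0[h]f|\le C_1(L_f\gamma^\beta+\|f\|_\infty\gamma^2)\le 2C_1(L_\rho+\rho_{\max})\gamma^\beta$; Cauchy--Schwarz then gives $\int_{\calM\setminus P_\gamma}|g_\gamma-m_0[h]f|\,|f|\,dV\le 2C_1(L_\rho+\rho_{\max})\gamma^\beta\,\Vol(\calM)^{1/2}\Delta_2^{1/2}$, which is the $\tilde C_1(L_\rho+\rho_{\max})\gamma^\beta\Delta_2^{1/2}$ term with $\tilde C_1=2C_1\Vol(\calM)^{1/2}$ as in Lemma~\ref{lemma:ET}. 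On $P_\gamma$ I decompose $g_\gamma-m_0[h]f=(g_\gamma-m_0^{(\gamma)}[h]f)+(m_0^{(\gamma)}[h]-m_0[h])f$. For the first piece, Lemma~\ref{lemma:local-kernel-boundary}(ii) gives $|g_\gamma-m_0^{(\gamma)}[h]f|\le C_1'(L_f\gamma^{\beta\vee1}+\|f\|_\infty\gamma^2)\le 2C_1'(L_\rho+\rho_{\max})\gamma^{\beta\vee1}$; Cauchy--Schwarz on $P_\gamma$ together with the elementary tube estimate $\Vol(P_\gamma)=O(\delta_\gamma)$ (for $\gamma$ small the Euclidean near-boundary set sits inside a collar of $\partial\calM$ of comparable geodesic width, so the constant depends on $\Vol(\partial\calM)$ and the boundary regularity) and Assumption~\ref{assump:contri-delta2-belt}, which yields $\int_{P_\gamma}f^2\,dV\le C_3\delta_\gamma\Delta_2$, gives $\int_{P_\gamma}|g_\gamma-m_0^{(\gamma)}[h]f|\,|f|\,dV\le 2C_1'(L_\rho+\rho_{\max})\gamma^{\beta\vee1}\,\Vol(P_\gamma)^{1/2}(C_3\delta_\gamma\Delta_2)^{1/2}=O\big((L_\rho+\rho_{\max})\gamma^{\beta\vee1}\delta_\gamma\Delta_2^{1/2}\big)$, producing the $\tilde C_2'(L_\rho+\rho_{\max})\gamma^{\beta\vee1}\delta_\gamma\Delta_2^{1/2}$ term. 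For the second piece, $0\le m_0^{(\gamma)}[h](x)\le m_0[h]$ gives $|m_0^{(\gamma)}[h]-m_0[h]|\le m_0[h]$, hence $\int_{P_\gamma}|m_0^{(\gamma)}[h]-m_0[h]|f^2\,dV\le m_0[h]\int_{P_\gamma}f^2\,dV\le C_3\delta_\gamma m_0[h]\Delta_2$ by Assumption~\ref{assump:contri-delta2-belt}, which is precisely the first term of the claimed bound. Summing the three contributions and collecting constants (all depending only on $(\calM,h)$, the regularity and volume of $\partial\calM$, and $d$) yields \eqref{eq:T-expansion-boundary}; the ranges $\gamma<\gamma_0'$ and $\gamma<\gamma_0''$ are exactly what is needed for Lemma~\ref{lemma:local-kernel-boundary} and Assumption~\ref{assump:contri-delta2-belt} to apply.

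The only genuinely new ingredient beyond Lemma~\ref{lemma:ET} is the near-boundary bookkeeping: one must pair the \emph{degenerate} $O(\gamma^{\beta\vee1})$ estimate of Lemma~\ref{lemma:local-kernel-boundary}(ii), and the crude bound $|m_0^{(\gamma)}[h]-m_0[h]|\le m_0[h]$, against the smallness of $P_\gamma$ --- both its volume, $O(\delta_\gamma)$, and its $\Delta_2$-mass, $O(\delta_\gamma\Delta_2)$, supplied by Assumption~\ref{assump:contri-delta2-belt} --- so that these terms do not dominate. Verifying that the powers of $\delta_\gamma$ and $\gamma$ come out exactly as in \eqref{eq:T-expansion-boundary} is where care is needed, but no analytic tool beyond Cauchy--Schwarz and the tube volume estimate is required.
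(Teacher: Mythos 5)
Your proposal is correct and follows essentially the same path as the paper's proof: split the outer $dV(x)$-integral at $P_\gamma$, apply Lemma~\ref{lemma:local-kernel-boundary}(i) off the near-boundary set and (ii) on it, further split the near-boundary contribution into the $(m_0^{(\gamma)}-m_0)f^2$ piece (bounded via Assumption~\ref{assump:contri-delta2-belt} and $0\le m_0^{(\gamma)}\le m_0$) and the residual piece (bounded by Cauchy--Schwarz together with the tube-volume estimate $\Vol(P_\gamma)\lesssim\delta_\gamma|\partial\calM|$ and Assumption~\ref{assump:contri-delta2-belt}). The only difference is cosmetic: the paper collects the three terms as $\textcircled{1}+\textcircled{2}+\textcircled{3}$ before estimating, while you estimate directly by domain; the constants $\tilde C_1,\tilde C_2'$ come out identically.
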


Next, we extend Proposition \ref{prop:conc-hatT} 
after replacing the constant $C_1^{(2)}$ with some $C_1'^{,(2)}$,
and $\gamma_0$ with $\gamma_0'$, in the statement (details in the proof),
and this allows extending Theorem \ref{thm:power} to a data manifold with smooth boundary in the following theorem.

 \begin{theorem}\label{thm:power-boundary}
 Under Assumptions \ref{assump:p}, \ref{assump:h-C1}, \ref{assump:M-boundary}, and \ref{assump:contri-delta2-belt},
the same bound of test power as in Theorem \ref{thm:power} holds 
with the following changes:
(i)  replacing the constant $C_1^{(2)}$ with  $C_1'^{,(2)}$
and requiring $ 0 < \gamma < \min \{ 1, \gamma_0' ,   {\gamma_0''} \}$, 
(ii) condition \eqref{eq:cond-small-sigma-2} is replaced by 
\begin{equation}\label{eq:cond-small-gamma-boundary}
C_3 \delta_\gamma  < 0.05,
\quad 
(L_\rho + \rho_{\max} ) \left( \tilde{C}_1   \gamma^\beta  
	+ \tilde{C}_2'  \gamma^{\beta \wedge 1}  \delta_\gamma \right)
< 0.05 m_0[h] \Delta_2^{1/2},
\end{equation}
where the constants $ \tilde{C}_1 $ and $\tilde{C}_2'  $ are as in Lemma \ref{lemma:ET-boundary} and depend on $(\calM, h)$ only.
 \end{theorem}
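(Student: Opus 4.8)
The plan is to mirror the proof of Theorem~\ref{thm:power} step by step, substituting the boundary-aware versions of the two ingredients that change: the expansion of $\gamma^{-d}T$ (now Lemma~\ref{lemma:ET-boundary} in place of Lemma~\ref{lemma:ET}) and the concentration inequality for $\widehat{T}$ (Proposition~\ref{prop:conc-hatT} with the constant $C_1^{(2)}$ replaced by $C_1'^{,(2)}$ and $\gamma_0$ by $\gamma_0'$, which follows because the U-statistic concentration argument only relies on the pointwise kernel bound $0\le K_\gamma\le 1$ and on the second-moment estimate $\E_{y\sim p}K_\gamma(x,y)^2 = O(\gamma^d)$; the latter is exactly Lemma~\ref{lemma:local-kernel-boundary} applied with $h$ replaced by $h^2$, whose near-boundary degeneracy only makes the bound smaller since $0\le m_0^{(\gamma)}[h^2](x)\le m_0[h^2]$). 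So first I would check that Proposition~\ref{prop:conc-hatT} goes through verbatim in the boundary setting with these cosmetic replacements, since $\nu=(m_0[h^2]+1)\rho_{\max}$ is still a valid second-moment bound.

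Next I would redo the Type-I error bound. Under $H_0$ the statement of Proposition~\ref{prop:conc-hatT}(1) is unchanged, so choosing $\lambda=\lambda_1=\sqrt{8\log(3/\alpha_{\rm level})}$ and $t_{\rm thres}=2/(cn)+4\lambda_1\sqrt{\nu\gamma^d/(cn)}$ gives $\Pr[\widehat{T}>t_{\rm thres}\mid H_0]\le 3e^{-\lambda_1^2/8}=\alpha_{\rm level}$, provided $\lambda_1<3\sqrt{c\nu\gamma^d n}$, which is the first branch of condition~\eqref{eq:cond-sigma}. This part is identical to Theorem~\ref{thm:power}.

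Then comes the Type-II bound, which is where the new condition~\eqref{eq:cond-small-gamma-boundary} enters. I would start from Lemma~\ref{lemma:ET-boundary}: $\gamma^{-d}T = m_0[h]\Delta_2 + r_T$ with $|r_T|\le C_3\delta_\gamma m_0[h]\Delta_2 + (L_\rho+\rho_{\max})(\tilde{C}_1\gamma^\beta+\tilde{C}_2'\gamma^{\beta\vee1}\delta_\gamma)\Delta_2^{1/2}$. The first half of~\eqref{eq:cond-small-gamma-boundary}, $C_3\delta_\gamma<0.05$, controls the multiplicative error term by $0.05\,m_0[h]\Delta_2$; the second half controls the remaining additive error by $0.05\,m_0[h]\Delta_2^{1/2}\cdot\Delta_2^{1/2}=0.05\,m_0[h]\Delta_2$. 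Hence $|r_T|\le 0.1\,m_0[h]\Delta_2$ and therefore $\gamma^{-d}T\ge 0.9\,m_0[h]\Delta_2$, i.e. $T\ge 0.9\,m_0[h]\Delta_2\gamma^d$. From here the argument is exactly as in Theorem~\ref{thm:power}: apply Proposition~\ref{prop:conc-hatT}(2) with $\lambda=\lambda_2$ to get, with probability $\ge 1-3e^{-\lambda_2^2/8}$, $\widehat{T}\ge T-2/(cn)-4\lambda_2\sqrt{\nu\gamma^d/(cn)}$, and then use the second and third branches of~\eqref{eq:cond-sigma} to show $T-2/(cn)-4\lambda_2\sqrt{\nu\gamma^d/(cn)}>t_{\rm thres}$. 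Concretely, the ``$10/(c m_0[h]\Delta_2)$'' branch forces $2/(cn)\le 0.2\,m_0[h]\Delta_2\gamma^d$ so the bias is absorbed, and the last branch forces $4(\lambda_1+\lambda_2)\sqrt{\nu\gamma^d/(cn)}\le$ a constant fraction of $m_0[h]\Delta_2\gamma^d$ so both the threshold's $\lambda_1$-term and the deviation's $\lambda_2$-term are absorbed; the surplus $0.9\,m_0[h]\Delta_2\gamma^d$ dominates all of these. This yields $\Pr[\widehat{T}\le t_{\rm thres}\mid H_1]\le 3e^{-\lambda_2^2/8}$, completing~\eqref{eq:type1-type2-thm}.

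I expect the only real obstacle to be bookkeeping: verifying that the numerical constants $0.05, 0.05, 0.1, 0.2$ propagate correctly so that the slack in~\eqref{eq:cond-sigma} (still stated with the same numbers $10$ and $8$) is enough, given that the bound on $|r_T|$ now splits into two pieces each controlled at the $0.05$ level rather than a single $0.1$-level bound. Since $0.05+0.05=0.1$, the resulting bound $\gamma^{-d}T\ge 0.9\,m_0[h]\Delta_2$ is the same as in the boundaryless case, so~\eqref{eq:cond-sigma} carries over unchanged — but one must double-check that $C_3\delta_\gamma<0.05$ is genuinely implied whenever $\gamma$ is small (true since $\delta_\gamma=\sqrt{(d+10)\gamma^2\log(1/\gamma)/a}\to 0$) and that the analogue of condition~\eqref{eq:cond-small-sigma-2} being satisfiable still only requires $\gamma\sim n^{-1/(d+4\beta)}$ up to the extra logarithmic factor in $\delta_\gamma$, which does not change the polynomial rate and hence leaves Corollary~\ref{cor:rate} intact for the boundary case as well. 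Everything else is a direct transcription of the proof of Theorem~\ref{thm:power}.
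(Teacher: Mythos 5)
Your proposal is correct and matches the paper's own proof in both structure and substance: you first carry over the concentration bound by showing the second-moment estimate $\E K_\gamma^2 = O(\gamma^d)$ survives the boundary case via Lemma~\ref{lemma:local-kernel-boundary} applied to $h^2$ (using $0\le m_0^{(\gamma)}[h^2]\le m_0[h^2]$, which is exactly the observation the paper makes), and then you re-derive the key inequality $|r_T|<0.1\,m_0[h]\Delta_2$ by splitting condition~\eqref{eq:cond-small-gamma-boundary} into two $0.05$-level pieces that control the multiplicative and additive error terms from Lemma~\ref{lemma:ET-boundary}, after which the Type-I and Type-II arguments transcribe verbatim from Theorem~\ref{thm:power}. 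The bookkeeping paragraph is sound and even anticipates the rate discussion the paper gives after the theorem.
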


Based on \eqref{eq:cond-small-gamma-boundary},
which is again implied by the technical bound in \eqref{eq:T-expansion-boundary},
the above theorem induces a detection rate similar to in  Corollary \ref{cor:rate}:
Specifically, first note that $\delta_\gamma \sim \gamma \sqrt{ \log (1/\gamma)}$ which is $o(1)$ as $\gamma \to 0$,
 thus $C_3 \delta_\gamma  < 0.05$ is satisfied when $\gamma $ is less than some $O(1)$ threshold. 
 In the second equation in \eqref{eq:cond-small-gamma-boundary}, 
 note that $ \gamma^{\beta \wedge 1} \delta_\gamma \sim  \gamma^{\beta \wedge 1 + 1} \sqrt{ \log (1/\gamma)}$ which is dominated by 
 $ \gamma^{\beta}$ (when $\beta =2$, there is a factor of $\sqrt{ \log (1/\gamma)}$).
 Thus the smallness of $\gamma$ requirement is the same as in the proof of  Corollary \ref{cor:rate} 
 (up to  a factor of $\sqrt{ \log (1/\gamma)}$).
 The largeness of $n$ requirement is the same as before. 
 As a result, the rate of detection for small enough $\Delta_2$ in the order of $n$
  and the optimal scaling of $\gamma$ are the same as in  Corollary \ref{cor:rate}.

In case when Assumption \ref{assump:contri-delta2-belt} does not hold, 
one can derive upper bound of $|r_T|$ using similar techniques as in Lemma \ref{lemma:ET-boundary},
and the rest of the analysis also generalizes. 
As the bound of  $|r_T|$ will be worsen
(due to that the kernel integral approximation error degenerates near the boundary as shown in Lemma \ref{lemma:local-kernel-boundary}(ii)),
the resulting rate is also worse than in Theorem \ref{thm:power-boundary}.
Details are omitted. 

In line with the theoretical results, the experiments in Section \ref{sec:experiments}  are conducted on manifold data where $\calM$ has a boundary.
In Section \ref{subsec:exp-mnist-1}, the data manifold is a continuous curve in the ambient space with endpoints. In Section \ref{subsec:exp-mnist-2}, the original MNIST image data lie close to a collection of sub-manifolds in the ambient space, and it is also a case of a manifold with a boundary.

\subsection{Near-manifold noisy data}\label{subsec:manifold+noise}

In applications, data points may not lie exactly on the low-dimensional manifold but only near it. Since kernel $K_\gamma(x,y)$ is computed from Euclidean distances among data points,
one can expect that if data samples are lying within a distance proportional to $\gamma$ from the manifold $\calM$,
then the integration of kernel $K_\gamma( x,y)$ over such data distributions will preserve the magnitude to be of order $\gamma^d$ and will not have a curse of dimensionality. 

An important case is when near-manifold data are produced by adding Gaussian noise, which is distributed as $\calN(0, \sigma^2 I_m)$,
 to data points that are lying on a manifold.
In this case, 
to make the off-manifold perturbation to be of length up to constant times of $\gamma$ (with high probability), 
it allows $\sigma$ to be up to $c\gamma/\sqrt{m}$ for some $c > 0$.
Here, we show that Theorem \ref{thm:power} can be extended under this noise regime {for Gaussian kernel $h$.
The analysis may also extend to other types of kernel functions}. 

Specifically, let $x_i =  x_i^{(c)} + \xi_i^{(1)}$, 
$x_i^{(c)} \sim p_\calM$,
$\xi_i^{(1)} \sim \calN(0, \sigma_{(1)}^2 I_m) $,
and $y_i =  y_i^{(c)} + \xi_i^{(2)}$, 
$y_i^{(c)} \sim q_\calM$,
$\xi_i^{(2)} \sim \calN(0, \sigma_{(2)}^2 I_m) $,
where the manifold clean data $x_i^{(c)}$ and $ y_i^{(c)}$ are independent from 
the ambient space Gaussian noise $\xi_i^{(1)}$ and $\xi_i^{(2)}$. 
When $p_\calM$ and $q_\calM$ satisfies Assumption \ref{assump:p} and $h$ is Gaussian kernel, 
 Theorem \ref{thm:power} extends when, for some $c > 0$,
 \begin{equation}
 \sigma_{(1)}^2+\sigma_{(2)}^2 \le \frac{c^2}{m } \gamma^2.    
 \end{equation}
The argument is based on that the proof of Theorem \ref{thm:power} relies on the approximation of kernel integrals 
$\E_{x \sim p, y \sim p}   K_\gamma( x,y) $ and the boundedness  of 
$\E_{x \sim p, y \sim p}   K_\gamma( x,y)^2 $ at the order $O(\gamma^d)$,
and similarly with $\E_{x \sim p, y \sim q}  $, $\E_{x \sim q, y \sim q}  $.
Thus, when kernel $h$ is Gaussian,
and 
 $p$ (and $q$) equals $p_\calM$ (and $q_\calM$) convolved with a Gaussian with coordinate variance $\lesssim$
 $\gamma^2/m$ in $\R^m$,
these integrals can be shown to be equivalent to those integrated over $p_\calM$ and $q_\calM$ 
with another Gaussian kernel having bandwidth $\tilde{\gamma}$, 
where $\tilde{\gamma}/\gamma$ is bounded between 1 and the absolute constant $\sqrt{1+c^2/m}$. 
As a result,
the integrals of $K_\gamma( x,y)$ and $K_\gamma( x,y)^2$ can be computed same as before in Lemma \ref{lemma:ET} and Proposition \ref{prop:conc-hatT},
leading to a result of Theorem \ref{thm:power}  after replacing the roles of $p$ and $q$ with $p_\calM$ and $q_\calM$.
Details are left to 
Appendix \ref{subsec:proofs-4.2}.

This suggests that when the coordinate-wise noise level $\sigma$ in $\R^m$ is bounded at the level of $\gamma/\sqrt{m}$,
the behavior of the two-sample test  with kernel $K_\gamma(x,y)$ 
applied to manifold-plus-noise data 
is essentially close to as if applied to the clean on-manifold data,
and the testing power is determined by the on-manifold distributions $p_\calM$ and $q_\calM$.
 Experiments of data with additive Gaussian noise are given in Section \ref{sec:experiments},  
 which verifies this theoretical prediction. 
In practice, 
we observe that the testing performances on clean and noisy data will stay close for small noise level $\sigma$, and start to show discrepancies when $\sigma$ exceeds a certain level.

\section{Numerical experiments}\label{sec:experiments}

In this section, we present several numerical examples to demonstrate the validity of our theory.
We first study a synthetic example of image data lying on a manifold,
and then a density departure example using the MNIST dataset.
The summary of the algorithm, including computation of the test threshold by bootstrap \cite{arcones1992bootstrap},
is provided in 
Appendix \ref{appA}.
Code available at the public repository \url{https://github.com/xycheng/manifold_mmd}.

The notations are as follows: 
$n_{\rm run}$ is the number of replicas used to estimate the test power,
and $n_{\rm boot}$ the number of bootstrap samples in computing the test threshold.
We set the test level $\alpha_{\rm level} = 0.05$ throughout. 
In our experiments, we test over a range of kernel bandwidth parameters $\gamma$.
In practice, $\gamma$ can also be chosen adaptively from data,
e.g., the 
{\it median distance bandwidth} is set to be the median of all pairwise distances in the two sample datasets. 
Our theory in Section \ref{sec:theory} suggests that
the median distance $\gamma$ may not always be the optimal choice: 
for manifold data of intrinsically low dimensionality, kernels with smaller bandwidth can achieve better testing power when there are sufficiently many samples. 
We verify this in experiments. 
In addition, we examine the Gaussian kernel and several possibly non-PSD kernels,
and verify that the latter can also achieve high test power as suggested by our theory.

\begin{figure}
\centering
\includegraphics[trim =  0 0 0 0, clip, height=.32\linewidth]{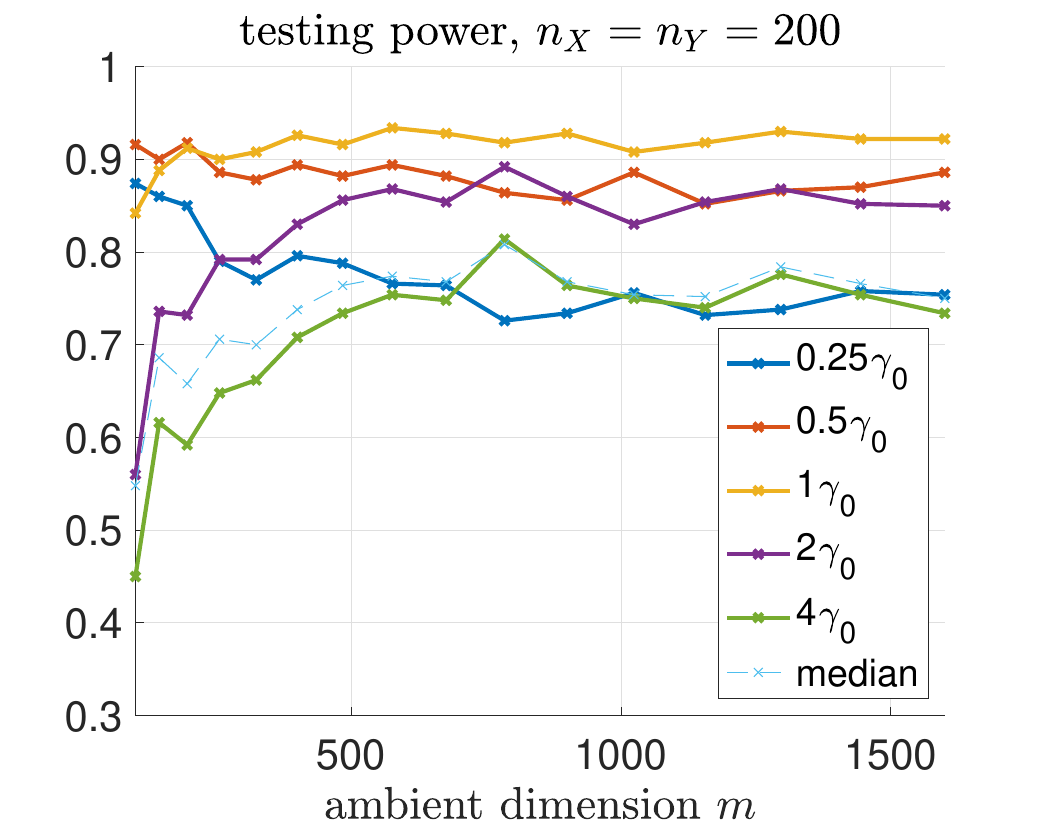} 
\includegraphics[trim =  0 0 0 0, clip, height=.32\linewidth]{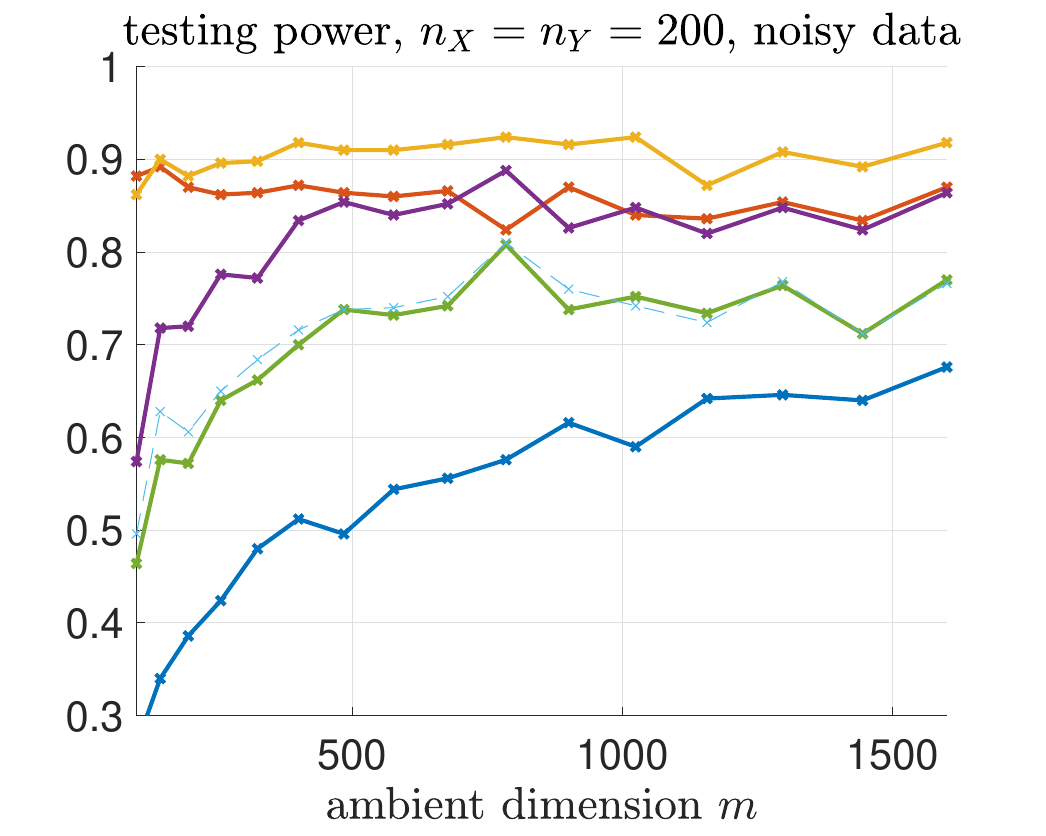} 
\raisebox{28pt}{
\includegraphics[trim =  0 0 0 0, clip, height=.22\linewidth]{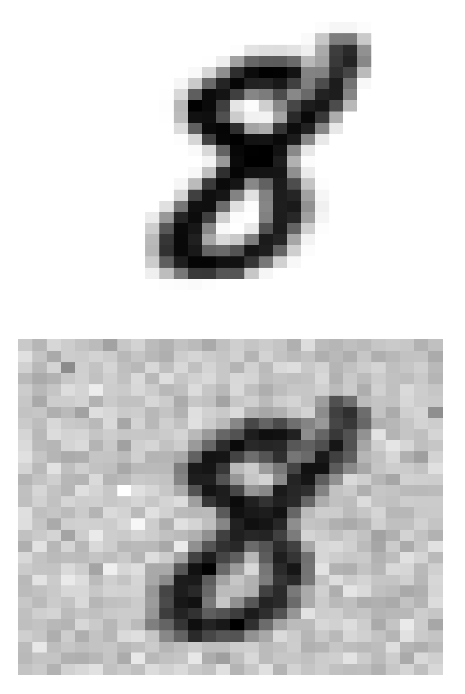} 
}
\vspace{-0pt}
\caption{
\small
An example of simulated manifold data on which {the kernel test power}
 does not drop as the ambient dimension $m$ increases,
where the intrinsic dimension $d$ remains constant.
Gaussian kernel test statistics are computed on two datasets of rotated images with different distributions of rotation angles.
Images are of sizes 10$\times$10, $\cdots$, 40$\times$40,
and thus $m$ increases from 100 to 1600.
The test is computed with 5 values of kernel bandwidth as in \eqref{eq:5-bandwidth},
and that is chosen by the median distance from the data. 
The test power is estimated using $n_{\rm run} = 500$.
(Left) Results on clean images.
(Middle) Results on images with additive Gaussian noise,
 where the noise level is chosen to be small and satisfies the condition in Section \ref{subsec:manifold+noise}. 
(Right) Example clean and noisy images (size 30$\times$30).
}
\label{fig:mnist-1}
\end{figure}

\subsection{Images  with differently distributed rotation angles}
\label{subsec:exp-mnist-1}

\subsubsection{Clean data}

We construct  two  datasets consisting of randomly rotated copies of an image of the handwritten digit `8',
which are resized to be of different resolutions.
The data-generating process was introduced in Example \ref{eg:manifold-data} and illustrated in Figure \ref{fig:mnist-show}. This experiment is designed such that we specify the true distributions on the latent manifold (rotation angles), which induces the distributions of observed manifold data. The distributions $p$ and $q$ of the two datasets are induced by different rotation angle distributions, and the densities of rotation angles (between 0 to 90 degrees) are shown in the left of Figure \ref{fig:mnist-show}.
The image size changes from  $10\times 10$ to $40 \times 40$, 
and as a result, the data dimensionality increases from 100 to 1600.
Note that since the rotation is only up to $\pi/2$,
the corresponding manifold is a 1D curve with two endpoints,
namely a manifold with a boundary.

Note that the image pixel values maintain the same magnitude as $W$ increases,
and then the value of
$ \frac{1}{m}\sum_{u=1}^{m} I_i(u)^2$ approaches an $O(1)$ limit,
 which is the squared integral of the underlying continuous function on $[0,1]^2$. 
This means that 
 the image data vectors of size $W\times W$ need to divide by $\sqrt{m}$, $m=W^2$, 
so as to obtain isometric embedding of a manifold of diameter $O(1)$ in $\R^m$,
In experiments  we use bandwidth $\gamma$ to resized  images,
can we call  ${\gamma}/{\sqrt{m}}$ the ``pixel-wise bandwidth''.
The pixel-wise bandwidth corresponds to the ``$\gamma$" in theory in Section \ref{sec:theory}. 

In computing the {Gaussian kernel test statistics},
we use bandwidth parameters over 5 values  such that 
\begin{equation}\label{eq:5-bandwidth}
\frac{\gamma}{\sqrt{m}} =  \gamma_{0} \left\{ \frac{1}{4}, \frac{1}{2}, 1, 2, 4 \right\}, \quad m=W^2, \quad W= 10, \cdots, 40,
\end{equation}
with $\gamma_{0} = 20$,
which we call the ``baseline pixel-wise bandwidth''. 
The median distance gives
the pixel-wise bandwidth is about 70. 
Since the grayscale images take pixel values between 0 and 255,
the pixel-wise bandwidth  being $20$ is relatively small,
and is smaller than that chosen by the median distance. 
The estimated two-sample testing power on clean images is shown in 
Figure \ref{fig:mnist-1} (Left), 
which is computed using $n_{\rm boot} = 400$  and $n_{\rm run} = 500$.
It can be seen that all the bandwidth choices give certain test power,
which is consistent {across $m$ as $m$ increases (showing a tendency of convergence after $m$ exceeds 500 till 1600).}
The performance with pixel-wise bandwidth equal to $20$ appears to be the best
and is better than the bandwidth by median distance.

\subsubsection{Noisy data}

We add pixel-wise Gaussian noise of standard deviation $\sigma_0 = 20$
 to the resized image data of dimension $m$,
 that is, $\sigma_0 = \gamma_0$ the baseline 
 pixel-wise bandwidth in the previous clean data experiment. This falls under the scenario in Section \ref{subsec:manifold+noise}:
As was pointed out in the experiment with clean data, 
normalized clean image $I_i/\sqrt{m}$ lie on an $O(1)$ manifold,
where $I_i $ has size $W \times W$, $m=W^2$,
and  thus the pixel-wise bandwidth corresponds to the ``$\gamma$'' in the theory,
If we add Gaussian noise $\calN( 0, \sigma_0^2 I_m) $ to the clean image $I_i$,
it corresponds to adding noise $\calN( 0, (\sigma_0^2 /m) I_m) $ to $I_i/\sqrt{m}$.
Thus $\sigma_0/\sqrt{m} $ is the ``$\sigma$'' in Section \ref{subsec:manifold+noise}.
The small noise regime in Section \ref{subsec:manifold+noise} requires ``$\sigma < c \gamma/\sqrt{m}$"
for some constant $c$,
and here, ``$\gamma$'' there is $\gamma_0$, and ``$\sigma$'' there is  $\sigma_0/\sqrt{m}$,
thus the condition 
 translates into $\sigma_0/\sqrt{m} < c {\gamma_0}/{\sqrt{m}}$,
 which is satisfied if we set $\sigma_0 = \gamma_0$.

An example pair of clean and noise-corrupted images are shown in Figure \ref{fig:mnist-1} (Right). 
We conduct the two-sample testing experiments in the same way as in the experiment with clean data,
and
the the estimated testing power is shown in Figure \ref{fig:mnist-1} (Middle). 
The performance with the four  pixel-wise bandwidth, which is greater than $\gamma_{0}/2$
are about the same as on the clean data;
With the smallest pixel bandwidth $\gamma_{0}/4$,
the test power degenerates  and becomes worse than the choice by median distance,
 and the drop is more significant when dimensionality $m$ is small.
This suggests that this kernel bandwidth is too small for the amount of additive noise 
at the values of $m$ and sample size $n_X$ and $n_Y$.

\begin{figure}[t]
 \begin{minipage}[t]{1\linewidth}
\includegraphics[trim =  0 0 0 0, clip, height=.26\linewidth]{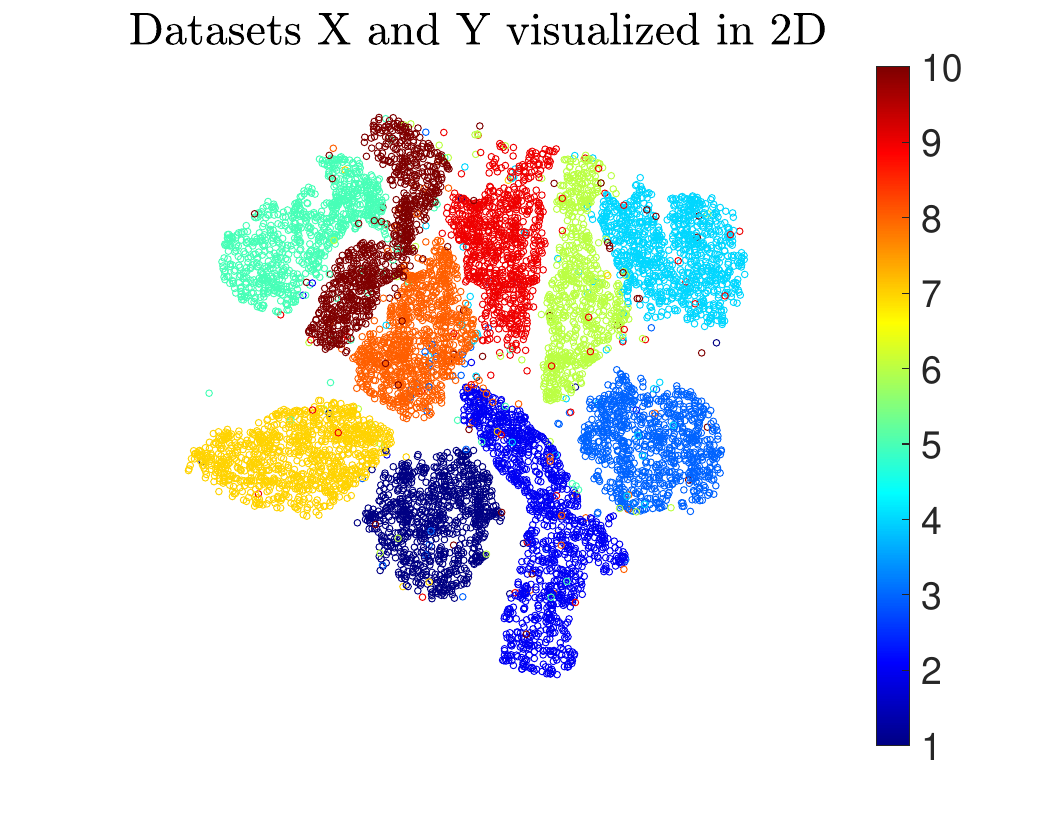} 
\includegraphics[trim =  0 0 0 0, clip, height=.26\linewidth]{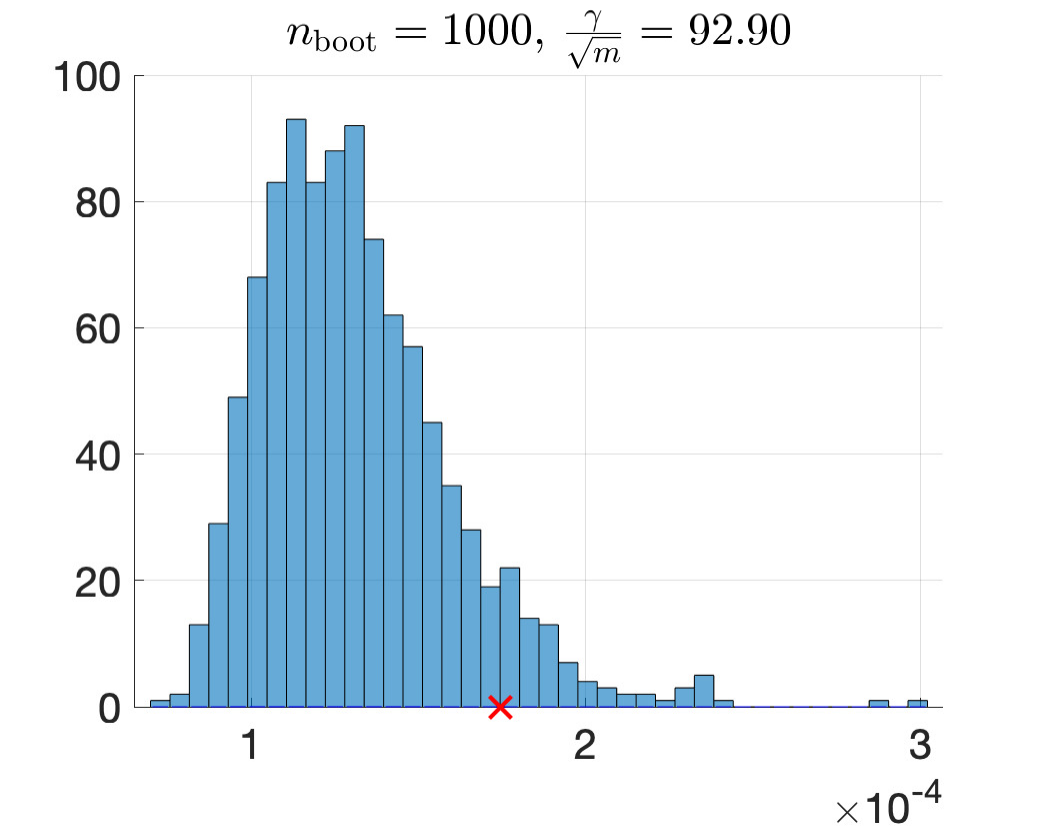} 
\includegraphics[trim =  0 0 0 0, clip, height=.26\linewidth]{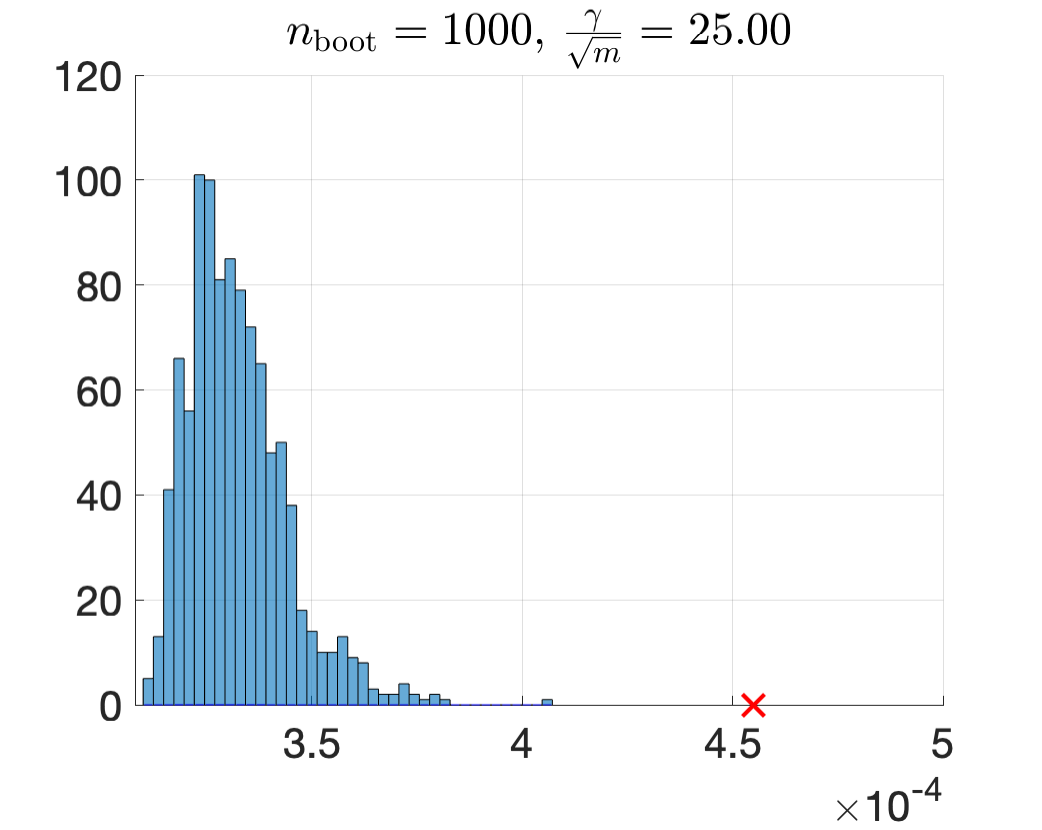} 
\vspace{10pt}
\end{minipage}\\
 \begin{minipage}[t]{1.0\linewidth}
 \hspace{-10pt}
\includegraphics[trim =  0 0 0 0, clip, height=.265\linewidth]{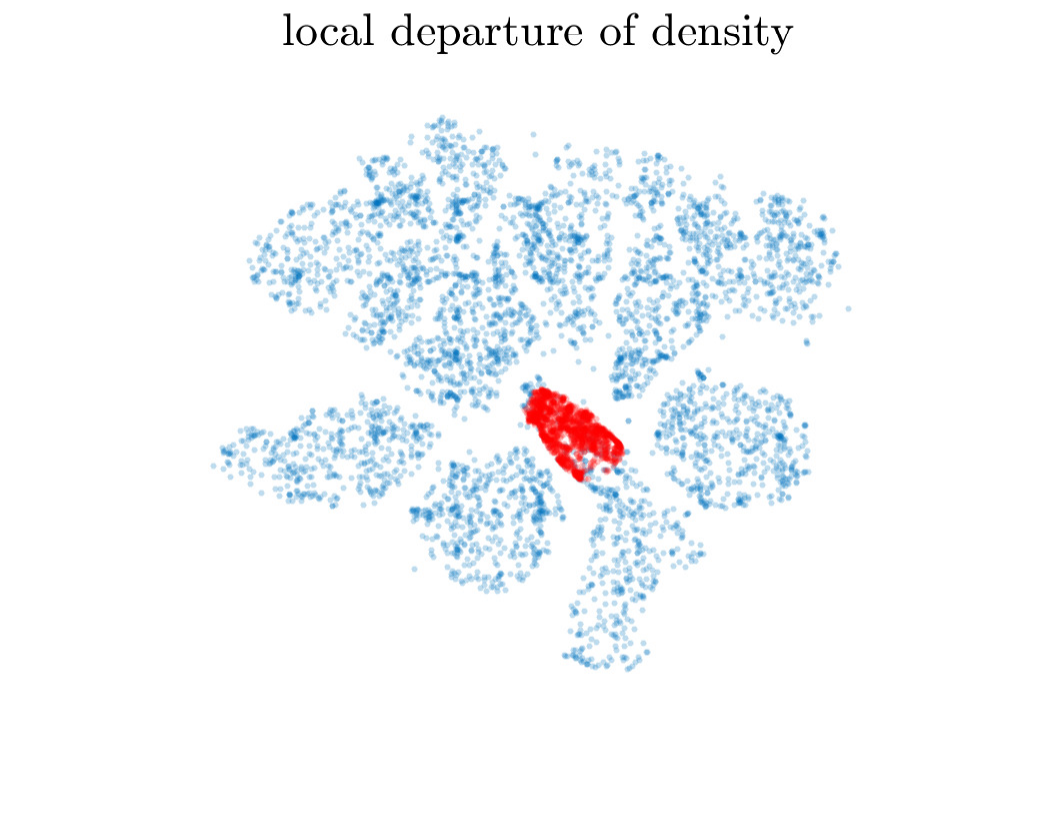} 
\includegraphics[trim =  0 0 0 0, clip, height=.265\linewidth]{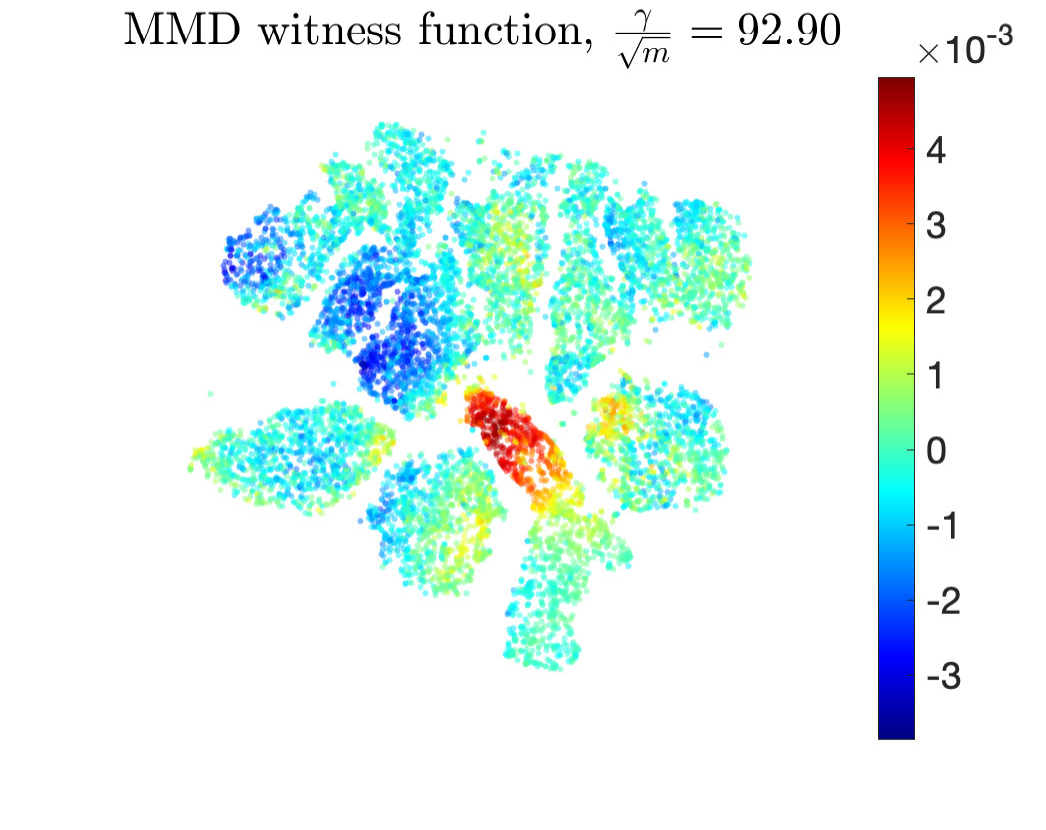} 
\includegraphics[trim =  0 0 0 0, clip, height=.265\linewidth]{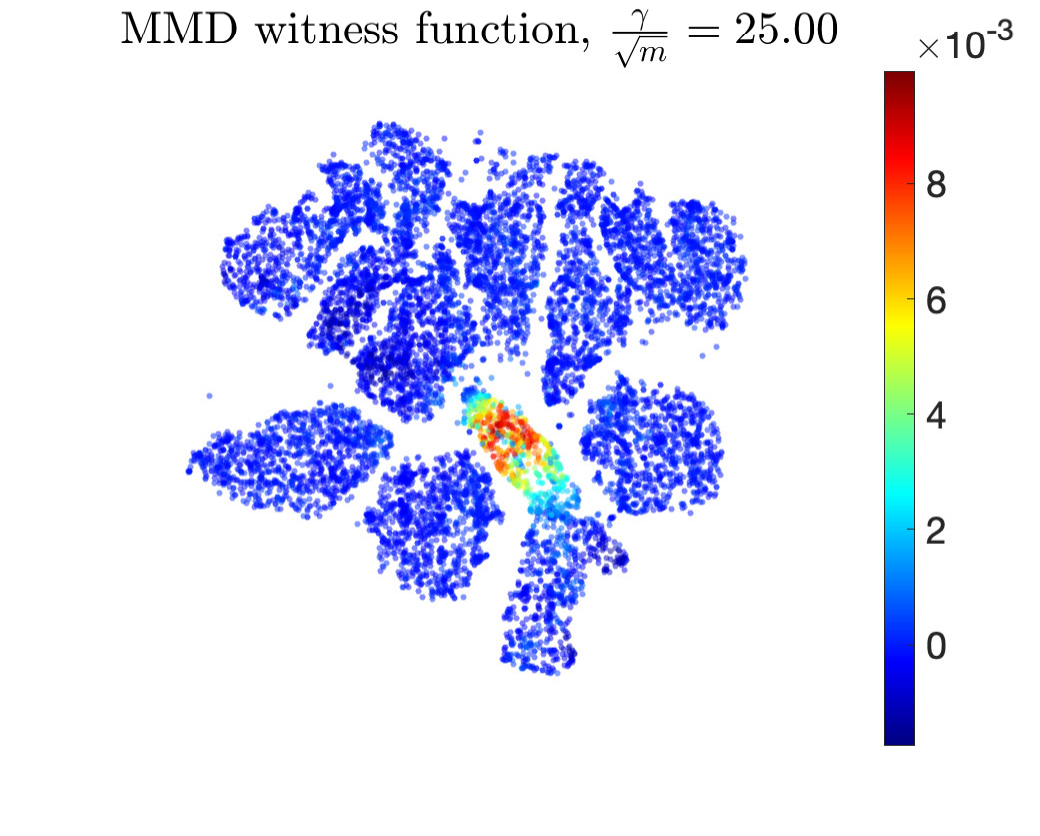} 
\end{minipage}
\vspace{-15pt}
\caption{
Kernel two-sample test to detect a local density departure of the MNIST image data distributions.
(Top left) Datasets $X$ and $Y$ are visualized in 2D by tSNE, colored by 10-digit class labels.
(Top middle and right) 
Kernel test statistic $\widehat{T}$ (red cross) 
plotted against the histogram of test statistic under $H_0$ computed by bootstrap \cite{arcones1992bootstrap} (blue bar, see more in 
Appendix \ref{appA}.
The middle plot is for the Gaussian kernel test using median distance $\gamma$,
and the right plot is by using a smaller $\gamma$.
(Bottom left) The local cohort density $p_{\rm cohort}$ is illustrated by red dots.
(Bottom middle and right)
The witness function defined in \eqref{eq:def-witness}
for kernel using median distance $\gamma$
and a smaller bandwidth, respectively.
}
\label{fig:mnist-3}
\end{figure}

\subsection{Density departure in MNIST dataset}\label{subsec:exp-mnist-2}

In this experiment, we compute the Gaussian kernel two-sample test on the original MNIST digit image dataset,
where samples are of dimensionality 28$\times$28.
The data densities $p$ and $q$ are generated in the following way:
$p$ is uniformly subsample from the MNIST dataset, namely $p=p_{\rm data}$.
 Though we only have finite samples (the MNIST dataset has 70000 images in 10 classes) of $p_{\rm data}$,
 as we subsample $n_X=$6000 from the whole dataset,
 it is approximate as if drawn from the population density $p$.
 $q$ is constructed as a mixture  of
 $q = 0.975 p_{\rm data} + 0.025 p_{\rm cohort}$,
 where $p_{\rm cohort}$ is the distribution of a local cohort within the samples of digit ``1'', having about 1700 samples.
 Since $n_Y$ is set to be about $6000$, and we subsample about 150 from the local cohort,
 the way we simulate samples in $Y$ is approximately as if drawn from the density $q$.
The local cohort corresponding to $p_{\rm cohort}$ is illustrated in Figure \ref{fig:mnist-3} (Bottom left), indicating the place where the density $q$ departures from $p$.
 The experiment is conducted on one realization of dataset $X$ and $Y$,
 where $n_X = 6000$, $n_Y=5990$.

We apply the kernel test with two bandwidths, one using the median distance, 
which gives the pixel-wise bandwidth $\gamma/\sqrt{m} = 92.9 $, and here $m = 28^2$;
and the other takes $\gamma/\sqrt{m} = 25$.
The test statistic $\widehat{T}$ under $H_1$ vs. the histogram under $H_0$ computed by bootstrap with $n_{\rm boot} =1000$ 
are shown in the top panel of  Figure \ref{fig:mnist-3}.
It can be seen that with the smaller bandwidth,
the test statistic shows a more clear rejection of $H_0$, indicating better testing power. 

The  {\it witness function} of kernel MMD \cite{gretton2012kernel} is defined as 
\begin{equation}\label{eq:def-witness}
\hat{w}(x) = -\frac{1}{n_X} \sum_{i=1}^{n_X} K_\gamma(x, x_i) +  \frac{1}{n_Y} \sum_{j=1}^{n_Y} K_\gamma(x, y_j),
\end{equation}
and we visualize $\hat{w}$ with the Gaussian kernel as heat-map on the 2D embedding in the bottom panel of  Figure \ref{fig:mnist-3}.
The witness function indicates where the two densities differ.
Compared with the ground truth in the bottom left plot of the departure $p_{\rm cohort}$,
the witness function computed with the local kernel better detects the density departure than the median distance kernel,
and this is consistent with the better test statistic separation in the top panel plots. 

 As a remark,
unlike in Section \ref{subsec:exp-mnist-1}, 
the MNIST image data do not lie on any constructed manifold induced by latent group action, but only lie near certain manifold-like structures in the ambient space 
- the latent manifold reveals all possible variations of images of the 10 digits,
and since there are 10 classes, there are possibly 10 sub-manifolds (which may be of different intrinsic dimensionalities on each piece), as illustrated by the 2D embedding by tSNE (t-distributed Stochastic Neighbor Embedding \cite{van2014accelerating}) in 
Figure \ref{fig:mnist-3} (Top left).
Thus, the case does not fall under the exact theoretical assumption of manifold data in Section \ref{sec:theory}, even though manifold-like structures are likely to be present in the dataset. The experimental results show that in this generalized case, there may still be a benefit to testing power by using a more localized kernel with a smaller bandwidth than the median distance bandwidth.

\begin{figure}[t!]
 \begin{minipage}[t]{1.0\linewidth}
   \hspace{-10pt}
   \raisebox{5pt}{
\includegraphics[trim =  0 0 0 0, clip, height=.26\linewidth]{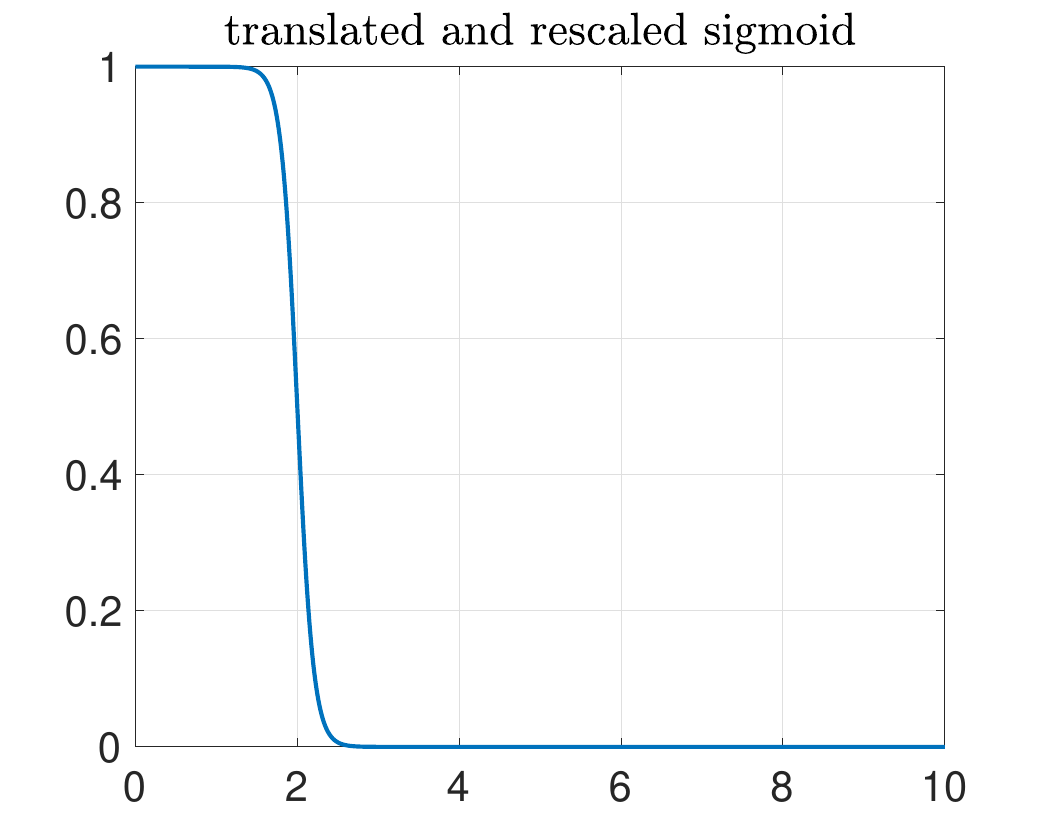}}
\includegraphics[trim =  0 0 0 0, clip, height=.27\linewidth]{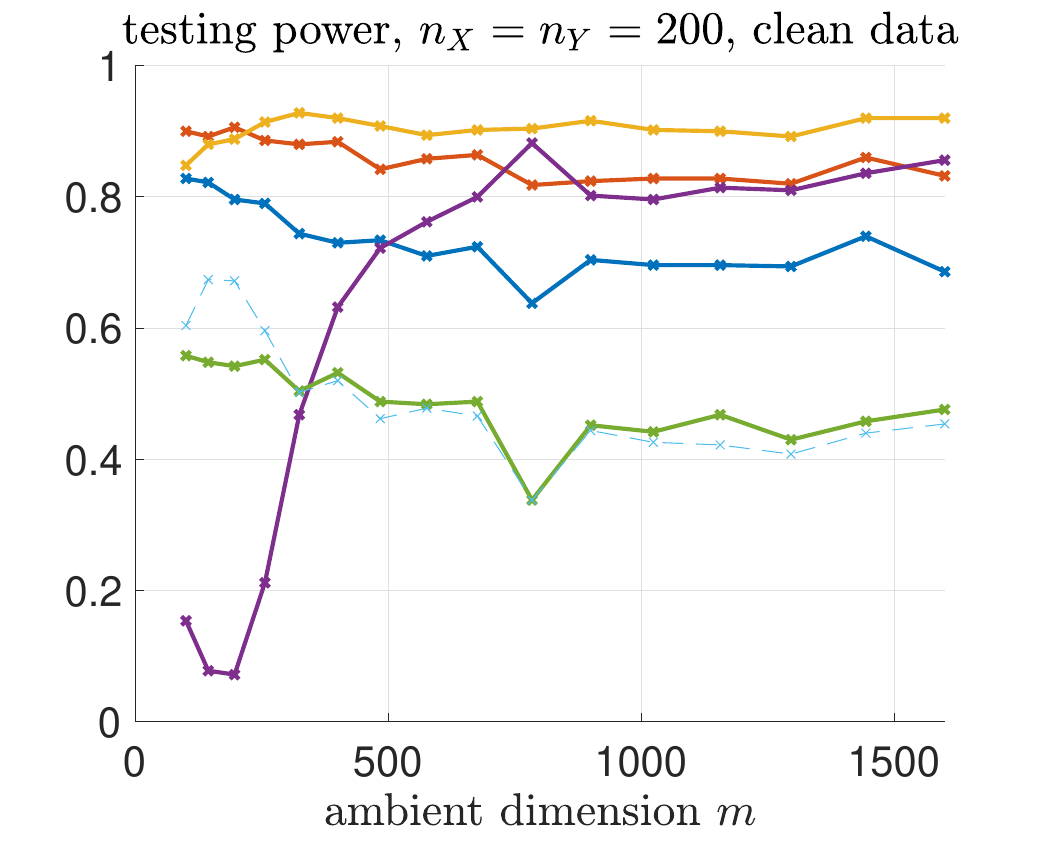} 
\includegraphics[trim =  0 0 0 0, clip, height=.27\linewidth]{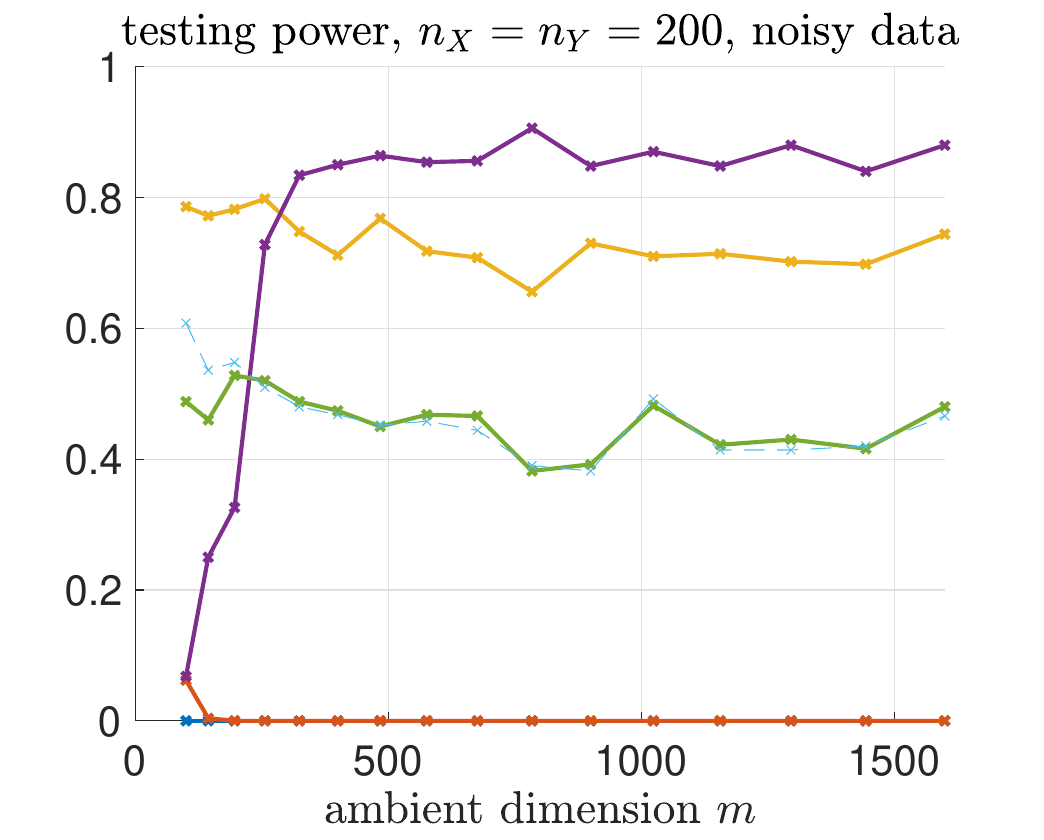} 
\vspace{2pt}
\end{minipage}
 \begin{minipage}[t]{1.0\linewidth}
  \hspace{-10pt}
    \raisebox{5pt}{
\includegraphics[trim =  0 0 0 0, clip, height=.26\linewidth]{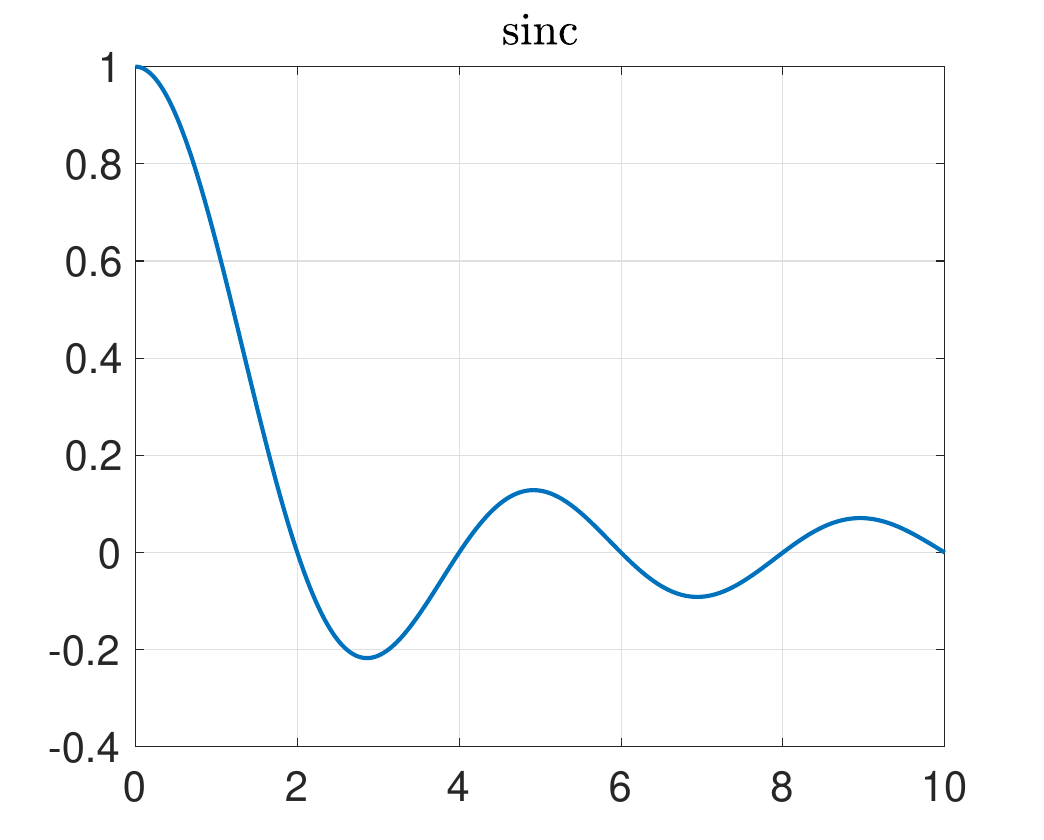} }
\includegraphics[trim =  0 0 0 0, clip, height=.27\linewidth]{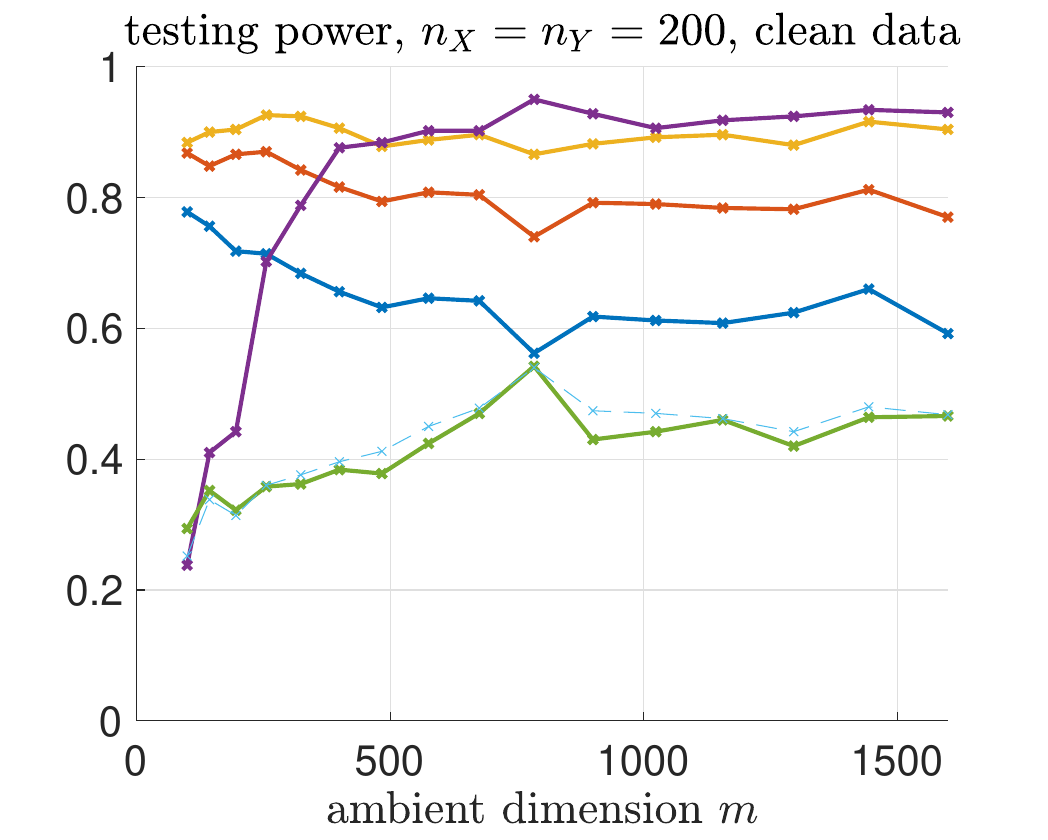} 
\includegraphics[trim =  0 0 0 0, clip, height=.27\linewidth]{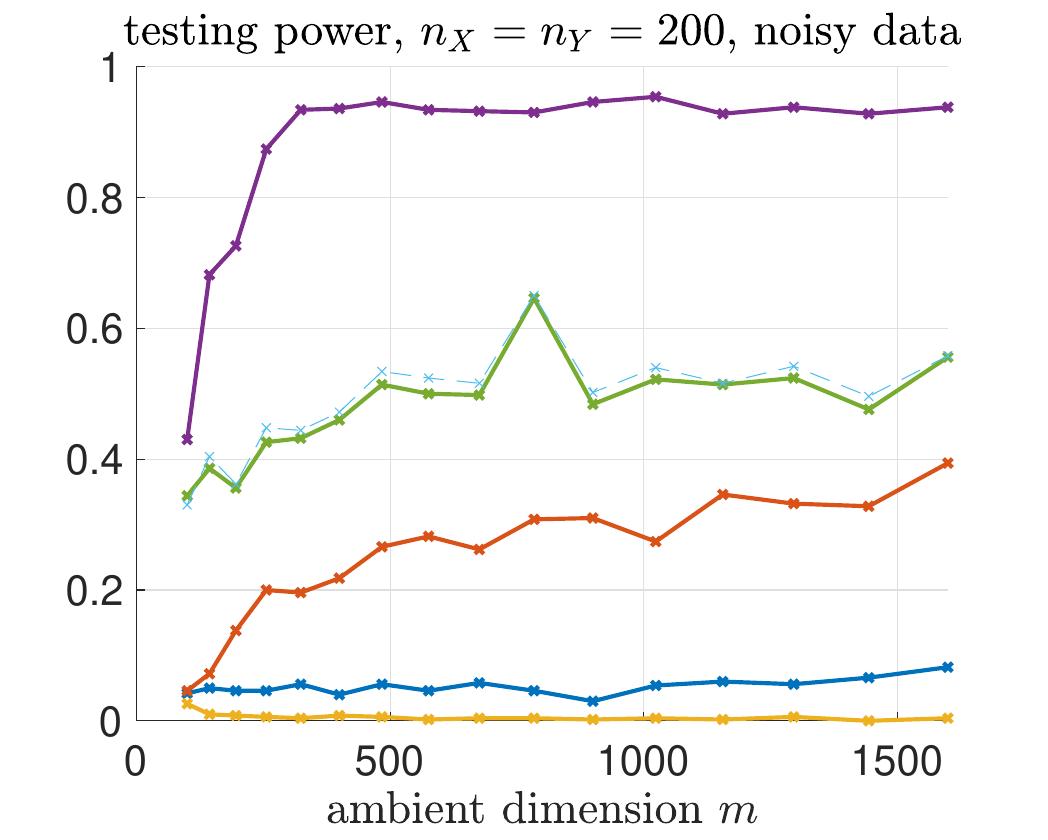} 
\vspace{2pt}
\end{minipage}
 \begin{minipage}[t]{1.0\linewidth}
 \hspace{-10pt}
 \raisebox{5pt}{
\includegraphics[trim =  0 0 0 0, clip, height=.26\linewidth]{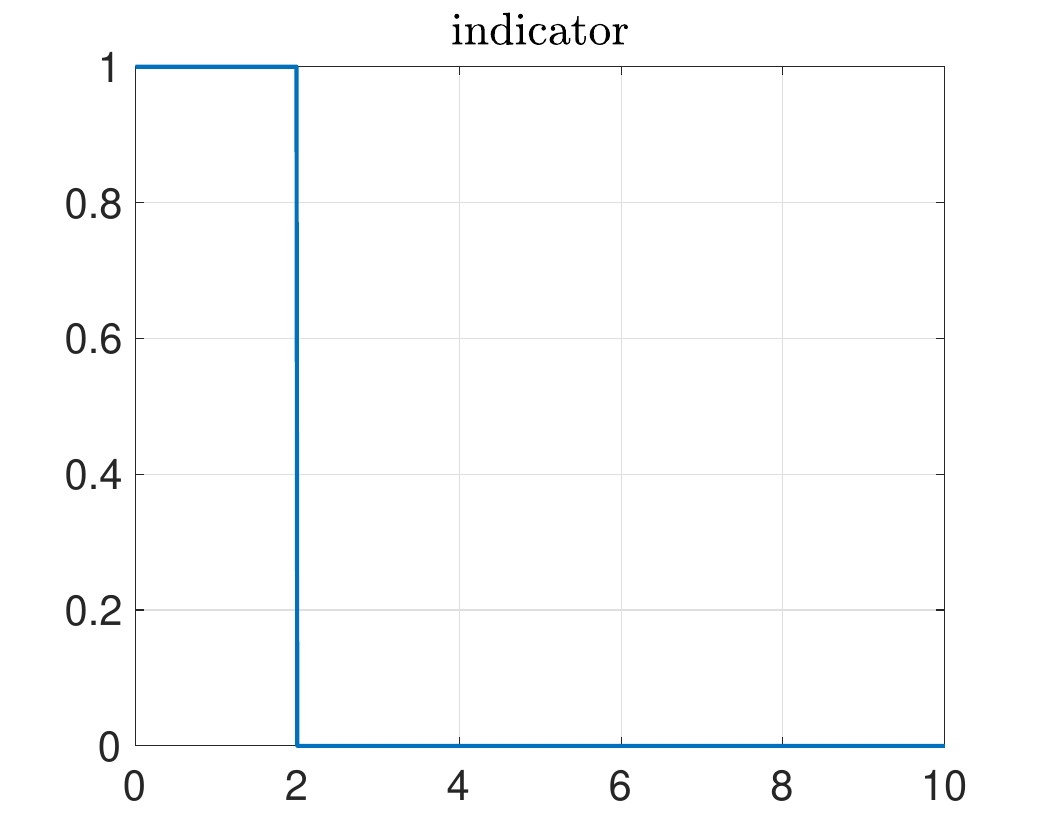} }
\includegraphics[trim =  0 0 0 0, clip, height=.27\linewidth]{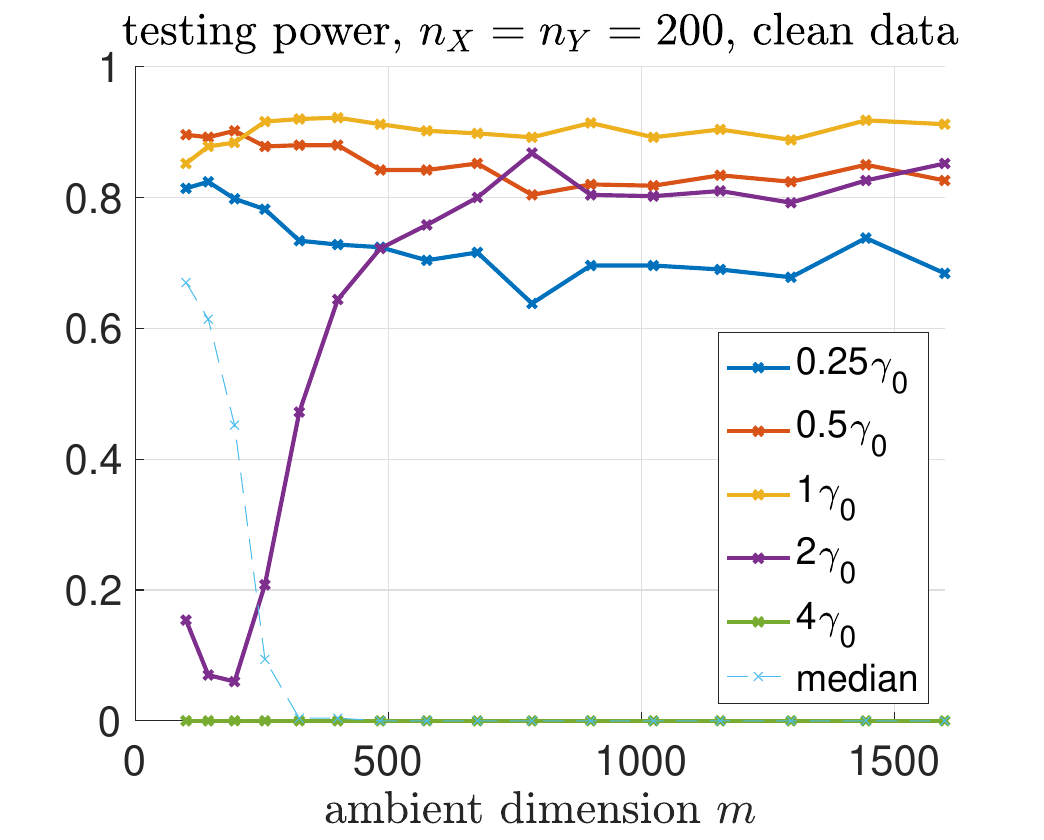} 
\includegraphics[trim =  0 0 0 0, clip, height=.27\linewidth]{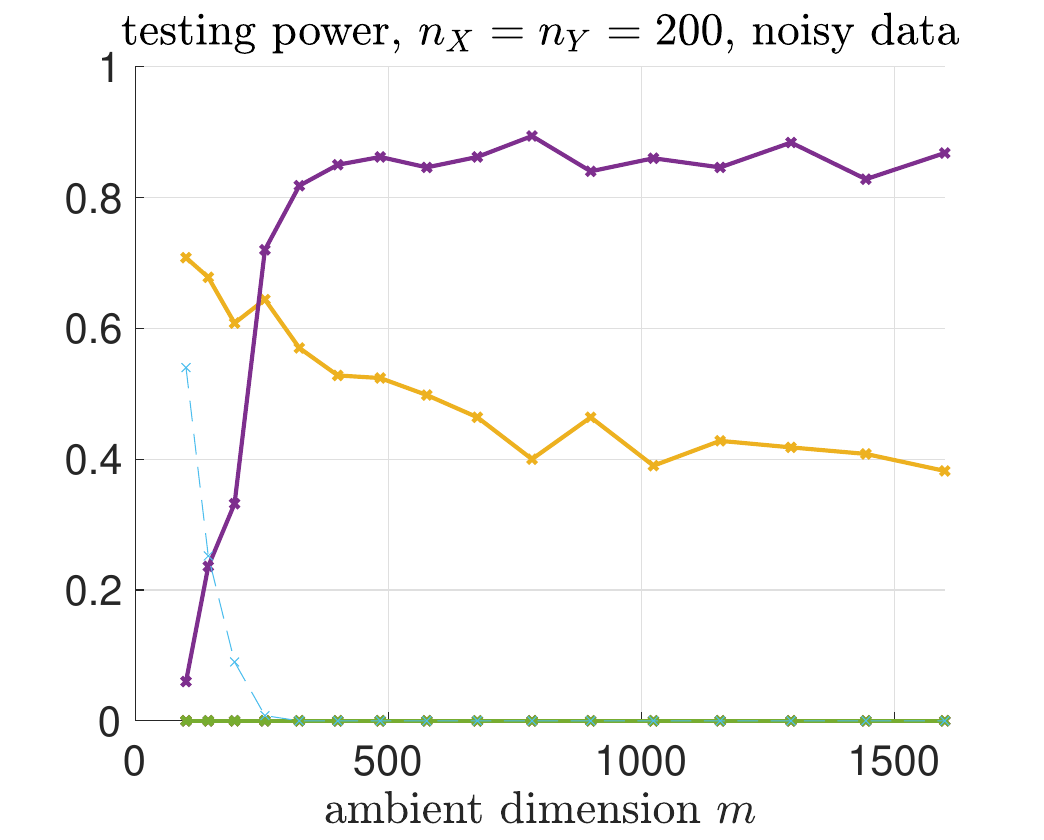} 
\end{minipage}
\vspace{-5pt}
\caption{
Two-sample test with non-Gaussian kernels that may not be PSD.
(Left column) Three choices of kernel function $h$ as in Section \ref{subsec:exp-nonPSD}.
(Right two columns) 
Same plots of testing powers on clean and noisy data of rotated images
as in Figure \ref{fig:mnist-1}, of the three kernel functions respectively.}
\label{fig:non-PSD}
\end{figure}

\subsection{Two-sample tests with non-PSD kernels}\label{subsec:exp-nonPSD}

Using the same data and experimental set-up as in Section \ref{subsec:exp-mnist-1}, 
we examine different choices of the kernel function $h$, which are non-Gaussian and possibly non-PSD. 
The indicator kernel corresponds to the ``epsilon-graph'' construction frequently used in manifold learning, e.g. in ISOMAP \cite{tenenbaum2000global}.
The example kernels here are mainly designed to verify the theoretical prediction that the test with a non-PSD kernel can still have power, with no suggestion of any immediate practical advantage of these kernels for the testing problems.
Specifically, we study

\begin{itemize}
\item
Sigmoid kernel: 
$h(r) = \frac{\exp\{ - 10 (r-2) \} }{1+\exp\{ - 10 (r-2) \}}$, 
which is a translated and rescaled sigmoid function and satisfies Assumption \ref{assump:h-C1}, but generally gives a non-PSD kernel $K_\gamma(x,y)$ for data in $\R^m$.

\item
Sinc kernel:
 $h(r) = {\sin( \frac{\pi}{2} r)}/( {\frac{\pi}{2} r})$, which takes both positive and negative values on $(0,\infty)$, 
and  only has $1/r$ decay, violating both (C2) and (C3).

\item
Indicator kernel:
 $h( r) = {\bf  1}_{[0,2)}(r)$, which is not continuous on $(0,\infty)$, violating (C1).

\end{itemize}

Plots of the kernel function $h(r)$ as univariate functions are shown in the left column of Figure \ref{fig:non-PSD}. 
The testing power for clean and noisy data over a range of kernel bandwidth, including the median distance bandwidth,
are shown in the right two columns (same plots as in Figure \ref{fig:mnist-1}).
The results show that these kernel tests,
when the kernel is non-PSD and even violates the theoretical assumption, 
can obtain testing power if the kernel bandwidth is properly chosen.
In addition, the optimal bandwidth may not be the median distance: 
the best performance achieved by the three kernels for clean data is obtained with $\gamma_0$ or $2\gamma_0 $ in \eqref{eq:5-bandwidth}, 
which is smaller than the median distance as explained in Section \ref{subsec:exp-mnist-1}.
With noisy data, the testing powers worsen, but the power achieved by the best bandwidth is again comparable to that on the clean data for all three kernels; 
though the best bandwidth takes a different value from that under the clean data
(which can be anticipated based on the analysis in 
Appendix \ref{subsec:proofs-4.2}).
Comparing the (translated and rescaled) sigmoid kernel and the indicator kernel in Figure \ref{fig:non-PSD}, 
it can be seen that the smoothness of the kernel function leads to better noise robustness;
On this data, the Gaussian kernel shows better noise robustness than the non-PSD kernels, comparing Figure \ref{fig:mnist-1} with Figure \ref{fig:non-PSD}.

In all experimental trials presented in this paper, we observe that the testing power remains constant as the dimensionality of the data, denoted by $m$, increases and ultimately converges. Additionally, our results demonstrate that the curse-of-dimensionality does not affect manifold data and support theoretical proposals that suggest the kernel's consistency and power in two-sample tests do not depend on whether or not the kernel is PSD, which would result in the test statistic being an RKHS MMD.

\section{Discussion}\label{sec:discussion}

We provide a curse-of-dimensionality free result for kernel {MMD-like} statistics, which we expect to be tight for the linear-time version of the kernel statistic (see Remark \ref{rk:lin-time-MMD}) but not necessarily for the full (quadratic-time) kernel statistic. For the full kernel statistic computed from data in $d$-dimensional Euclidean space, \cite{li2019optimality} showed that the Gaussian kernel statistic achieves a detection rate of $\Delta_2 \gg n^{- { 4 \beta/( d + 4 \beta )}}$, which is minimax optimal against smooth alternatives (c.f. Theorem 5 of \cite{li2019optimality}, recall that $\Delta_2$ is the squared $L^2$ divergence). One may conjecture the same minimax rate for data with intrinsic data dimension $d$. This rate of $n^{- { 4 \beta/( d + 4 \beta ) }}$ is better than our proved rate of $\Delta_2 \gtrsim n^{- { 2 \beta/( d + 4 \beta ) }}$ in Corollary \ref{cor:rate}.
This gap may be due to the control of the fluctuation of the U-statistic by Proposition \ref{prop:conc-hatT} not being tight (see the comment after the proposition). Obtaining a test power result at a finite sample size that matches the conjectured minimax rate under our manifold data setting is left to future work.
However, the picture differs when considering computational complexity. Since the vanilla computation of the full statistic has $O(n^2)$ complexity, with the same computational and memory cost, the linear-time statistic can process $\tilde{n} \sim n^2$ samples. According to the proved rate by our result, it pushes the detection boundary to be $\Delta_2 \gtrsim \tilde{n}^{-{ 2 \beta/( d + 4 \beta ) }} \sim n^{- { 4 \beta/( d + 4 \beta ) }}$, which is the same as the conjectured optimal rate. The linear-time statistic can be computed online \cite{flynn2019change}, and thus can be viewed as trading the smaller variance by revisiting all the samples for faster computation on the fly. In summary, the statistical optimality of two-sample kernel statistics applied to intrinsically $d$-dimensional data remains to be further studied. It would be interesting to design kernel tests that achieve the theoretical detection rate with matched computational complexity.

Our work can be extended in several other directions. 
For instance, the current paper only considers isotropic kernels with fixed bandwidth. It would be of interest to generalize to other types of kernels used in practice, such as anisotropic kernels using local Mahalanobis distance, other non-Euclidean metrics, kernels with adaptive bandwidth \cite{zelnik2005self,cheng2021convergence}, asymmetric kernels with a reference set \cite{jitkrittum2016interpretable,cheng2020two}, and so on.
To expand the theoretical framework, it would be interesting to go beyond the compactness assumption of the manifold, which would allow extracting the low intrinsic dimensionality or low complexity of high dimensional data distributions that are unbounded or have long tails. One may extend the theory to more complicated manifold structures, like multiple sub-manifolds of different intrinsic dimensions or with complicated boundaries.  
More advanced analysis of the near-manifold setting, for example, by analyzing general high dimensional noise, would also be a useful extension.
Algorithm-wise, while the theory in this work suggests using smaller bandwidth depending on data intrinsic dimensionality and sample size, providing a theoretical scaling of $\gamma$ for large $n$, it remains to further develop efficient algorithms to choose kernel bandwidth in practice.
Developing efficient kernel testing methods to reduce storage and computational costs would also be desirable.
At last, it is natural to extend to other kernel-based testing problems, such as goodness-of-fit tests \cite{chwialkowski2016kernel,jitkrittum2020testing,shapiro2021goodness}, and general hypothesis tests. Reducing sampling complexity by the intrinsic low-dimensionality of manifold data may also be beneficial therein.

%


\section*{Acknowledgments}
%
The work was supported by NSF DMS-2134037.
X.C. was also partially supported by NSF DMS-2237842 and DMS-2007040.
Y.X. was also partially supported by an NSF CAREER CCF-1650913, NSF DMS-2134037, CMMI-2015787, CMMI-2112533, DMS-1938106, and DMS-1830210.




\bibliographystyle{plain}
\bibliography{mmd_ref}

\begin{thebibliography}{10}

\bibitem{arcones1992bootstrap}
Miguel~A Arcones and Evarist Gine.
\newblock On the bootstrap of u and v statistics.
\newblock {\em Ann. Statist.}, pages 655--674, 1992.

\bibitem{balasubramanian2021optimality}
Krishnakumar Balasubramanian, Tong Li, and Ming Yuan.
\newblock On the optimality of kernel-embedding based goodness-of-fit tests.
\newblock {\em J. Mach. Learn. Res.}, 22:1--45, 2021.

\bibitem{belkin2003laplacian}
Mikhail Belkin and Partha Niyogi.
\newblock Laplacian eigenmaps for dimensionality reduction and data
  representation.
\newblock {\em Neural Comput.}, 15(6):1373--1396, 2003.

\bibitem{belkin2007convergence}
Mikhail Belkin and Partha Niyogi.
\newblock Convergence of laplacian eigenmaps.
\newblock In {\em Advances in Neural Information Processing Systems}, pages
  129--136, 2007.

\bibitem{bhattacharya2020asymptotic}
Bhaswar~B Bhattacharya.
\newblock Asymptotic distribution and detection thresholds for two-sample tests
  based on geometric graphs.
\newblock {\em Ann. Statist.}, 48(5):2879--2903, 2020.

\bibitem{bhuyan2013network}
Monowar~H Bhuyan, Dhruba~Kumar Bhattacharyya, and Jugal~K Kalita.
\newblock Network anomaly detection: methods, systems and tools.
\newblock {\em IEEE communications surveys \& tutorials}, 16(1):303--336, 2013.

\bibitem{bikowski2018demystifying}
Mikołaj Bińkowski, Danica~J. Sutherland, Michael Arbel, and Arthur Gretton.
\newblock Demystifying {MMD} {GAN}s.
\newblock In {\em International Conference on Learning Representations}, 2018.

\bibitem{borgwardt2006integrating}
Karsten~M Borgwardt, Arthur Gretton, Malte~J Rasch, Hans-Peter Kriegel,
  Bernhard Sch{\"o}lkopf, and Alex~J Smola.
\newblock Integrating structured biological data by kernel maximum mean
  discrepancy.
\newblock {\em Bioinformatics}, 22(14):e49--e57, 2006.

\bibitem{brito2013intrinsic}
MR~Brito, Adolfo~J Quiroz, and Joseph~E Yukich.
\newblock Intrinsic dimension identification via graph-theoretic methods.
\newblock {\em J. Multivariate Anal.}, 116:263--277, 2013.

\bibitem{buades2005non}
Antoni Buades, Bartomeu Coll, and J-M Morel.
\newblock A non-local algorithm for image denoising.
\newblock In {\em 2005 IEEE Computer Society Conference on Computer Vision and
  Pattern Recognition (CVPR'05)}, volume~2, pages 60--65. IEEE, 2005.

\bibitem{calder2019improved}
Jeff Calder and Nicolas~Garcia Trillos.
\newblock Improved spectral convergence rates for graph {{Laplacian}s on
  $\epsilon$-graphs and k-NN graphs}.
\newblock {\em Appl. Comput. Harmon. Anal.}, 60:123--175, 2022.

\bibitem{cao2018change}
Yang Cao, Arkadi Nemirovski, Yao Xie, Vincent Guigues, and Anatoli Juditsky.
\newblock Change detection via affine and quadratic detectors.
\newblock {\em Electron. J. Stat.}, 12(1):1--57, 2018.

\bibitem{Chandola2009}
Varun Chandola, Arindam Banerjee, and Vipin Kumar.
\newblock Anomaly detection: A survey.
\newblock {\em ACM Comput. Surv.}, 41(3), July 2009.

\bibitem{chandola2010anomaly}
Varun Chandola, Arindam Banerjee, and Vipin Kumar.
\newblock Anomaly detection for discrete sequences: A survey.
\newblock {\em IEEE transactions on knowledge and data engineering},
  24(5):823--839, 2010.

\bibitem{chen2017new}
Hao Chen and Jerome~H Friedman.
\newblock A new graph-based two-sample test for multivariate and object data.
\newblock {\em J. Amer. Statist. Assoc.}, 112(517):397--409, 2017.

\bibitem{cheng2020two}
Xiuyuan Cheng, Alexander Cloninger, and Ronald~R Coifman.
\newblock Two-sample statistics based on anisotropic kernels.
\newblock {\em Inf. Inference}, 9(3):677--719, 2020.

\bibitem{cheng2021convergence}
Xiuyuan Cheng and Hau-Tieng Wu.
\newblock {Convergence of graph Laplacian with kNN self-tuned kernels}.
\newblock {\em Inf. Inference}, 09 2021.

\bibitem{cheng2021eigen}
Xiuyuan Cheng and Nan Wu.
\newblock Eigen-convergence of gaussian kernelized graph laplacian by manifold
  heat interpolation.
\newblock {\em Appl. Comput. Harmon. Anal.}, 61:132--190, 2022.

\bibitem{chwialkowski2016kernel}
Kacper Chwialkowski, Heiko Strathmann, and Arthur Gretton.
\newblock A kernel test of goodness of fit.
\newblock JMLR: Workshop and Conference Proceedings, 2016.

\bibitem{chwialkowski2015fast}
Kacper~P Chwialkowski, Aaditya Ramdas, Dino Sejdinovic, and Arthur Gretton.
\newblock Fast two-sample testing with analytic representations of probability
  measures.
\newblock In {\em Advances in Neural Information Processing Systems}, pages
  1981--1989, 2015.

\bibitem{coifman2006diffusion}
Ronald~R Coifman and St{\'e}phane Lafon.
\newblock Diffusion maps.
\newblock {\em Appl. Comput. Harmon. Anal.}, 21(1):5--30, 2006.

\bibitem{delbarrio1999}
Eustasio del Barrio, Juan~A. Cuesta-Albertos, Carlos Matrán, and Jes{\'u}s~M.
  Rodriguez-Rodriguez.
\newblock Tests of goodness of fit based on the $l_2$-wasserstein distance.
\newblock {\em Ann. Statist.}, 27(4):1230--1239, 08 1999.

\bibitem{do1992riemannian}
Manfredo~Perdigao Do~Carmo.
\newblock {\em Riemannian geometry}.
\newblock Springer, 1992.

\bibitem{dunson2021spectral}
David~B Dunson, Hau-Tieng Wu, and Nan Wu.
\newblock Spectral convergence of graph {Laplacian} and heat kernel
  reconstruction in {$L^\infty$} from random samples.
\newblock {\em Appl. Comput. Harmon. Anal.}, 55:282--336, 2021.

\bibitem{farahmand2007manifold}
Amir~Massoud Farahmand, Csaba Szepesv{\'a}ri, and Jean-Yves Audibert.
\newblock Manifold-adaptive dimension estimation.
\newblock In {\em Proceedings of the 24th international conference on Machine
  learning}, pages 265--272, 2007.

\bibitem{flynn2019change}
Thomas Flynn and Shinjae Yoo.
\newblock Change detection with the kernel cumulative sum algorithm.
\newblock In {\em 2019 IEEE 58th Conference on Decision and Control (CDC)},
  pages 6092--6099, 2019.

\bibitem{gretton2012kernel}
Arthur Gretton, Karsten~M Borgwardt, Malte~J Rasch, Bernhard Sch{\"o}lkopf, and
  Alexander Smola.
\newblock A kernel two-sample test.
\newblock {\em J. Mach. Learn. Res.}, 13(Mar):723--773, 2012.

\bibitem{Gretton09}
Arthur Gretton, Kenji Fukumizu, Za\"{\i}d Harchaoui, and Bharath~K.
  Sriperumbudur.
\newblock A fast, consistent kernel two-sample test.
\newblock In {\em Advances in Neural Information Processing Systems 22}, pages
  673--681. Curran Associates, Inc., 2009.

\bibitem{Ga1991}
L{\'a}szl{\'o} Gy{\"o}rfi and Edward~C. Van Der~Meulen.
\newblock {\em A consistent goodness of fit test based on the total variation
  distance}, pages 631--645.
\newblock Springer Netherlands, Dordrecht, 1991.

\bibitem{hein2005graphs}
Matthias Hein, Jean-Yves Audibert, and Ulrike Von~Luxburg.
\newblock From graphs to manifolds--weak and strong pointwise consistency of
  graph laplacians.
\newblock In {\em International Conference on Computational Learning Theory},
  pages 470--485. Springer, 2005.

\bibitem{higgins2003introduction}
James~J Higgins.
\newblock Introduction to modern nonparametric statistics.
\newblock 2003.

\bibitem{hoeffding63probability}
W~Hoeffding.
\newblock Probability inequalities for sums of bounded random variables.
\newblock {\em J. Amer. Statist. Assoc.}, 58:13--30, 1963.

\bibitem{horvath2012inference}
Lajos Horv{\'a}th and Piotr Kokoszka.
\newblock {\em Inference for Functional Data with Applications}.
\newblock Springer Science \& Business Media, 2012.

\bibitem{hotelling1931}
Harold Hotelling.
\newblock The generalization of student's ratio.
\newblock {\em Ann. Math. Statist.}, 2(3):360--378, 08 1931.

\bibitem{jitkrittum2020testing}
Wittawat Jitkrittum, Heishiro Kanagawa, and Bernhard Sch{\"o}lkopf.
\newblock Testing goodness of fit of conditional density models with kernels.
\newblock In {\em Conference on Uncertainty in Artificial Intelligence}, pages
  221--230. PMLR, 2020.

\bibitem{jitkrittum2016interpretable}
Wittawat Jitkrittum, Zolt{\'a}n Szab{\'o}, Kacper~P Chwialkowski, and Arthur
  Gretton.
\newblock Interpretable distribution features with maximum testing power.
\newblock In {\em Advances in Neural Information Processing Systems}, pages
  181--189, 2016.

\bibitem{jitkrittum2017linear}
Wittawat Jitkrittum, Wenkai Xu, Zolt{\'a}n Szab{\'o}, Kenji Fukumizu, and
  Arthur Gretton.
\newblock A linear-time kernel goodness-of-fit test.
\newblock In {\em Advances in Neural Information Processing Systems}, pages
  262--271, 2017.

\bibitem{levina2004maximum}
Elizaveta Levina and Peter Bickel.
\newblock Maximum likelihood estimation of intrinsic dimension.
\newblock {\em Advances in neural information processing systems}, 17, 2004.

\bibitem{li2017mmd}
Chun-Liang Li, Wei-Cheng Chang, Yu~Cheng, Yiming Yang, and Barnab{\'a}s
  P{\'o}czos.
\newblock {MMD GAN}: Towards deeper understanding of moment matching network.
\newblock In {\em Advances in Neural Information Processing Systems}, pages
  2203--2213, 2017.

\bibitem{li2019optimality}
Tong Li and Ming Yuan.
\newblock On the optimality of gaussian kernel based nonparametric tests
  against smooth alternatives.
\newblock {\em arXiv preprint arXiv:1909.03302}, 2019.

\bibitem{lloyd2015statistical}
James~R Lloyd and Zoubin Ghahramani.
\newblock Statistical model criticism using kernel two sample tests.
\newblock In {\em Advances in Neural Information Processing Systems}, pages
  829--837, 2015.

\bibitem{lopez2017revisiting}
David Lopez-Paz and Maxime Oquab.
\newblock Revisiting classifier two-sample tests.
\newblock In {\em International Conference on Learning Representations}, 2017.

\bibitem{massey1951kolmogorov}
Frank~J Massey.
\newblock The kolmogorov-smirnov test for goodness of fit.
\newblock {\em J. Amer. Statist. Assoc.}, 46(253):68--78, 1951.

\bibitem{mordohai2010dimensionality}
Philippos Mordohai and G{\'e}rard Medioni.
\newblock Dimensionality estimation, manifold learning and function
  approximation using tensor voting.
\newblock {\em J. Mach. Learn. Res.}, 11(1):411--450, 2010.

\bibitem{ozakin2009submanifold}
Arkadas Ozakin and Alexander~G Gray.
\newblock Submanifold density estimation.
\newblock In {\em Advances in Neural Information Processing Systems}, pages
  1375--1382, 2009.

\bibitem{pettis1979intrinsic}
Karl~W Pettis, Thomas~A Bailey, Anil~K Jain, and Richard~C Dubes.
\newblock An intrinsic dimensionality estimator from near-neighbor information.
\newblock {\em IEEE Transactions on pattern analysis and machine intelligence},
  (1):25--37, 1979.

\bibitem{peyre2009manifold}
Gabriel Peyr{\'e}.
\newblock Manifold models for signals and images.
\newblock {\em Computer vision and image understanding}, 113(2):249--260, 2009.

\bibitem{PFANZAGL96}
J.~Pfanzagl and O.~Sheynin.
\newblock {Studies in the history of probability and statistics XLIV A
  forerunner of the t-distribution}.
\newblock {\em Biometrika}, 83(4):891--898, 12 1996.

\bibitem{Pratt1981}
John~W. Pratt and Jean~D. Gibbons.
\newblock {\em Kolmogorov-Smirnov two-Sample tests}, pages 318--344.
\newblock Springer New York, New York, NY, 1981.

\bibitem{ramdas2017wasserstein}
Aaditya Ramdas, Nicol{\'a}s Garc{\'\i}a~Trillos, and Marco Cuturi.
\newblock On wasserstein two-sample testing and related families of
  nonparametric tests.
\newblock {\em Entropy}, 19(2):47, 2017.

\bibitem{ramdas2015decreasing}
Aaditya Ramdas, Sashank~Jakkam Reddi, Barnab{\'a}s P{\'o}czos, Aarti Singh, and
  Larry Wasserman.
\newblock On the decreasing power of kernel and distance based nonparametric
  hypothesis tests in high dimensions.
\newblock In {\em Twenty-Ninth AAAI Conference on Artificial Intelligence},
  2015.

\bibitem{ren2019likelihood}
Jie Ren, Peter~J Liu, Emily Fertig, Jasper Snoek, Ryan Poplin, Mark Depristo,
  Joshua Dillon, and Balaji Lakshminarayanan.
\newblock Likelihood ratios for out-of-distribution detection.
\newblock {\em Advances in neural information processing systems}, 32, 2019.

\bibitem{saelens2019comparison}
Wouter Saelens, Robrecht Cannoodt, Helena Todorov, and Yvan Saeys.
\newblock A comparison of single-cell trajectory inference methods.
\newblock {\em Nature biotechnology}, 37(5):547--554, 2019.

\bibitem{sandler2018mobilenetv2}
Mark Sandler, Andrew Howard, Menglong Zhu, Andrey Zhmoginov, and Liang-Chieh
  Chen.
\newblock Mobilenetv2: Inverted residuals and linear bottlenecks.
\newblock In {\em Proceedings of the IEEE Conference on Computer Vision and
  Pattern Recognition}, pages 4510--4520, 2018.

\bibitem{serfling2009approximation}
Robert~J Serfling.
\newblock {\em Approximation Theorems of Mathematical Statistics}.
\newblock John Wiley \& Sons, 2009.

\bibitem{shapiro2021goodness}
Alexander Shapiro, Yao Xie, and Rui Zhang.
\newblock Goodness-of-fit tests on manifolds.
\newblock {\em IEEE Trans. Inform. Theory}, 67(4):2539--2553, 2021.

\bibitem{singer2006graph}
Amit Singer.
\newblock From graph to manifold laplacian: The convergence rate.
\newblock {\em Appl. Comput. Harmon. Anal.}, 21(1):128--134, 2006.

\bibitem{sriperumbudur2012empirical}
Bharath~K Sriperumbudur, Kenji Fukumizu, Arthur Gretton, Bernhard
  Sch{\"o}lkopf, Gert~RG Lanckriet, et~al.
\newblock On the empirical estimation of integral probability metrics.
\newblock {\em Electron. J. Stat.}, 6:1550--1599, 2012.

\bibitem{sutherland2017generative}
Danica~J Sutherland, Hsiao-Yu Tung, Heiko Strathmann, Soumyajit De, Aaditya
  Ramdas, Alex Smola, and Arthur Gretton.
\newblock Generative models and model criticism via optimized maximum mean
  discrepancy.
\newblock In {\em International Conference on Learning Representations}, 2017.

\bibitem{tenenbaum2000global}
Joshua~B Tenenbaum, Vin~de Silva, and John~C Langford.
\newblock A global geometric framework for nonlinear dimensionality reduction.
\newblock {\em Science}, 290(5500):2319--2323, 2000.

\bibitem{trillos2020error}
Nicol{\'a}s~Garc{\'\i}a Trillos, Moritz Gerlach, Matthias Hein, and Dejan
  Slep{\v{c}}ev.
\newblock Error estimates for spectral convergence of the graph laplacian on
  random geometric graphs toward the laplace--beltrami operator.
\newblock {\em Foundations of Computational Mathematics}, 20(4):827--887, 2020.

\bibitem{van2020trajectory}
Koen Van~den Berge, Hector Roux~de B{\'e}zieux, Kelly Street, Wouter Saelens,
  Robrecht Cannoodt, Yvan Saeys, Sandrine Dudoit, and Lieven Clement.
\newblock Trajectory-based differential expression analysis for single-cell
  sequencing data.
\newblock {\em Nature communications}, 11(1):1201, 2020.

\bibitem{van2014accelerating}
Laurens Van Der~Maaten.
\newblock Accelerating t-sne using tree-based algorithms.
\newblock {\em J. Mach. Learn. Res.}, 15(1):3221--3245, 2014.

\bibitem{wynne2022kernel}
George Wynne and Andrew~B Duncan.
\newblock A kernel two-sample test for functional data.
\newblock {\em J. Mach. Learn. Res.}, 23(1):3159--3209, 2022.

\bibitem{xie2020sequential}
Liyan Xie and Yao Xie.
\newblock Sequential change detection by optimal weighted $\ell_2$ divergence.
\newblock {\em IEEE Journal on Selected Areas in Information Theory},
  2(2):747--761, 2021.

\bibitem{Xie_2013}
Yao Xie and David Siegmund.
\newblock Sequential multi-sensor change-point detection.
\newblock {\em Ann. Statist.}, 41(2):670–692, Apr 2013.

\bibitem{zelnik2005self}
Lihi Zelnik-Manor and Pietro Perona.
\newblock Self-tuning spectral clustering.
\newblock In {\em Advances in neural information processing systems}, pages
  1601--1608, 2005.

\bibitem{zhao2021detection}
Jun Zhao, Ariel Jaffe, Henry Li, Ofir Lindenbaum, Esen Sefik, Ruaidhr{\'\i}
  Jackson, Xiuyuan Cheng, Richard~A Flavell, and Yuval Kluger.
\newblock Detection of differentially abundant cell subpopulations in scrna-seq
  data.
\newblock {\em Proc. Natl. Acad. Sci.}, 118(22), 2021.

\bibitem{zhu2018ldmnet}
Wei Zhu, Qiang Qiu, Jiaji Huang, Robert Calderbank, Guillermo Sapiro, and
  Ingrid Daubechies.
\newblock Ldmnet: Low dimensional manifold regularized neural networks.
\newblock In {\em Proceedings of the IEEE Conference on Computer Vision and
  Pattern Recognition}, pages 2743--2751, 2018.

\end{thebibliography}

\begin{appendix}

\section{Proofs}\label{sec:proofs}

\subsection{Proofs in Section \ref{sec:theory}}

\begin{proof}[Proof of Lemma \ref{lemma:local-kernel}]
We denote by $B_r^l(x)$ the Euclidean ball in $\R^l$ of radius $r > 0$ centered at $x$.
When $l=m$, 
we omit the superscript $l$ and use $B_r(x)$ to denote the Euclidean ball in the ambient space $\R^m$. 
In big-O notations, 
we  use subscript ${}_{MC}$ to emphasize constant dependence on manifold curvature,
 ${}_{MV}$ for that on manifold volume,
and 
${}_{GM}$ for that on ``Gaussian moments'' namely the up to the 4-th  moments of normal densities in $\R^d$.  

First, we use the sub-exponential decay of $h$ to truncate the integral of $dV(y)$ in \eqref{eq:kernel-expansion-2} on $\calM \cap B_{\delta_\gamma}(x)$ only with an $O(\|f\|_\infty \gamma^{10})$ residual,
where $\delta_\gamma := \sqrt{ \frac{d+10}{a} \gamma^2 \log \frac{1}{\gamma}}$.
This is because
when  $\|y -x \| \ge \delta_\gamma$, 
by Assumption \ref{assump:h-C1}(C2), $h(\frac{\| x- y \|^2 }{\gamma^2})  \le e^{- a \delta_\gamma^2/\gamma^2} = \gamma^{d+10}$,
and then 
\begin{equation}\label{eq:int-out-ball}
 \begin{split}
 & \left| \gamma^{-d}\int_{\calM \backslash  B_{\delta_\gamma}(x)} h \left( \frac{\| x- y \|^2 }{\gamma^2} \right) f(y) dV(y)  \right|
 \le 
\gamma^{-d} \int_{\calM \backslash  B_{\delta_\gamma}(x)} \gamma^{d+10} |f(y)| dV(y)  \\
& ~~~
 \le \|f\|_\infty  \Vol(\calM ) \gamma^{10} 
 = O_{MV}( \|f\|_\infty \gamma^{10}), \quad \forall x\in \calM. 
 \end{split}
\end{equation}
The 10-th power here is only to guarantee that the residual is of a higher order of $\gamma$ compared to the other residual terms below, and 10 can be made another large constant by increasing the constant factor `10' in the definition of $\delta_\gamma$ accordingly.

Next, we use a local expansion of $f$ and kernel function to compute the integral of $dV(y)$ on $B_{\delta_\gamma}(x)$. 

By compactness of $\calM$, there is $\gamma_1(\calM)$ such that when $\gamma < \gamma_1$ and then $\delta_\gamma$ is sufficiently small, 
then $B_{\delta_\gamma}(x) \cap \calM$ lies inside the geodesic ball of the injective radius of $\calM$ and
is isomorphic to a ball in $\R^d$.
Then the local chart is well-defined, and we use the local projected coordinate on the tangent plane: 
Let $\phi_x$ be the projection to $T_x(\calM)$, for each $y \in \calM \cap B_{\delta_\gamma}(x)$,
and define the projected coordinate
$u := \phi_x(y-x)$.
 By the local metric and volume comparisons with respect to the projected coordinate $u$
(see Lemma 6 and 7 in \cite{coifman2006diffusion}, also see Lemma A.1 in \cite{cheng2021convergence}), we have
\begin{equation}\label{eq:local-metric-vol}
\begin{split}
d_\calM(x,y ) &  = \|u\| (1+ O_{MC}(\|u\|^2)),
\quad \|y - x\| =  \|u\| (1+ O_{MC}(\|u\|^2)), \\
 \left| \det\left(\frac{dy}{du}\right) \right| 
 & = 1+ O_{MC}(\|u\|^2),
 \end{split}
\end{equation}
where the constants are  uniform for all $x$, due to compactness of $\calM$.
As a result, \eqref{eq:local-metric-vol} implies that 
there is another $\gamma_2(\calM)$ such that when $\gamma < \gamma_2$,  for any $x \in \calM$,
\begin{equation}\label{eq:local-metric-2}
\begin{cases}
&0.9 \| y-x \| \le  0.9 d_\calM(x,y)  \le \| u \|_{\R^d} \le \|y-x\| \le d_\calM(x,y), 
\\
 & \left| \det\left(\frac{dy}{du}\right) \right|  \le 2,
 \quad \| y - x\|^2 = \|u\|^2 + O_{MC}(\|u\|^4),
 \end{cases} ~
 \forall y  \in {\calM} \cap B_{\delta_\gamma}(x).
\end{equation}
This $\gamma_2(\calM)$ depends on manifold curvature, and is small when manifold curvature has a large magnitude.

Set $\gamma_0 = \min\{ \gamma_1, \, \gamma_2\}$, when $\gamma < \min \{ \gamma_0 , 1\}$,
we have  \eqref{eq:local-metric-vol} and \eqref{eq:local-metric-2},
  where the big-O notation means that the corresponding term 
  has absolute value bounded by the implied constant times the quantity inside the $O(\cdot)$.

We now expand $f$ locally at $x$,
and we separate the two cases, $\beta \le 1$, and $1< \beta \le 2$.

\vspace{5pt}
\noindent
\underline{Case 1}:  $ 0 < \beta \le 1$.

Let $y \in B_{\delta_\gamma}(x) \cap \calM$,
because $f$ has H\"older constant $L_f$, 
$| f(y) - f(x) | \le L_f d_\calM(x,y)^\beta$.
Because locally $d_\calM(x,y) = O( \|u \|)$ by \eqref{eq:local-metric-vol}, 
we have $d_\calM(x,y)^\beta = O( \|u \|^\beta)$.
Thus, with absolute constant in big-O,
\begin{equation}\label{eq:f-expand-case1}
 f(y) = f(x) +  O( L_f \|u\|^\beta),
 \quad \forall y \in \calM \cap B_{\delta_\gamma}(x).
\end{equation}
Meanwhile, by \eqref{eq:local-metric-2} and that $h$ is $C^1$ on $(0,\infty)$,
\begin{align*}
h \left(\frac{\| x-y\|^2}{\gamma^2} \right)
& = h \left(\frac{ \| u\|^2 + O_{MC}(\|u\|^4) }{\gamma^2} \right) 
=  h \left( \frac{ \| u\|^2  }{\gamma^2} \right)  + h'(\xi(u)) O_{MC} \left( \frac{ \| u\|^4}{\gamma^2} \right),
\end{align*}
where $\xi(u)$ is between $\frac{ \| u\|^2  }{\gamma^2}$ and $\frac{ \| x-y\|^2  }{\gamma^2}$.
By \eqref{eq:local-metric-2}, $\xi(u) \ge \frac{ \| u\|^2  }{\gamma^2}$, and then by Assumption \ref{assump:h-C1}(C2),
\begin{equation}
| h'(\xi(u)) | \le a_1 e^{-a \frac{ \| u\|^2  }{\gamma^2} }.
\end{equation}
Then, we define $B_x':=\phi_x( \calM \cap  B_{\delta_\gamma}(x) ) \subset \R^d$ and have that
\begin{align}
 & \gamma^{-d}\int_{\calM \cap  B_{\delta_\gamma}(x)} h \left( \frac{\| x- y \|^2 }{\gamma^2} \right) f(y) dV(y)  \label{eq:integral-on-ball-1}\\
& = \gamma^{-d}\int_{ B_x'} 
\left(  h \left(\frac{ \| u\|^2  }{\gamma^2} \right)  +  a_1 e^{-a \frac{ \| u\|^2  }{\gamma^2} }  O_{MC}( \frac{ \| u\|^4}{\gamma^2})  \right) f(y(u)) 
\left| {\rm det}\left(\frac{dy}{du}\right) \right| du
\nonumber \\
& = \gamma^{-d}\int_{ B_x'} h \left(\frac{ \| u\|^2  }{\gamma^2} \right) (f(x) +  O( L_f \|u\|^\beta)) \left| {\rm det}\left(\frac{dy}{du}\right) \right|  du \nonumber  \\
& ~~~ 
+ \gamma^{-d}\int_{ B_x'}a_1 e^{-a \frac{ \| u\|^2  }{\gamma^2} }  O_{MC} \left( \frac{ \| u\|^4}{\gamma^2}\right) \|f\|_\infty  \left| {\rm det}\left(\frac{dy}{du}\right) \right|   du
= \textcircled{1} +  \textcircled{2}  + \textcircled{3}, \nonumber 
\end{align}
where, by \eqref{eq:local-metric-vol}, 
\[
 \textcircled{1} : =
f(x)   \gamma^{-d}  \int_{ B_x'} h\left( \frac{ \| u\|^2  }{\gamma^2} \right)  (1+O_{MC}(\|u\|^2) du,
\]
and by \eqref{eq:local-metric-2}, $\left| \det\left(\frac{dy}{du}\right) \right|  \le 2$,
\[
 \textcircled{2} : =
    \gamma^{-d}  \int_{ B_x'} h \left(\frac{ \| u\|^2  }{\gamma^2}\right)  O( L_f \|u\|^\beta) 2 du,
\]
\[
 \textcircled{3} : =
   \gamma^{-d}\int_{ B_x'}a_1 e^{-a \frac{ \| u\|^2  }{\gamma^2} }  O_{MC} \left( \frac{ \| u\|^4}{\gamma^2}\right) \|f\|_\infty 2  du.
\]
To proceed, 
note that $B_x'$ is not a ball on $\R^d$ but is contained between two spheres: by \eqref{eq:local-metric-2},
we have $B_{0.9 \delta_\gamma}^d(0) \subset B_x' \subset B_{\delta_\gamma}^d(0)$.
This together with the exponential decay of $h$ gives that 
\begin{align*}
&\left| \gamma^{-d}  \int_{ B_x'} 
h\left(\frac{ \| u\|^2  }{\gamma^2} \right)  du -  m_0  \right| 
\le \left| \gamma^{-d}  \int_{ \R^d \backslash B_{0.9 \delta_\gamma}^d(0) } 
h\left( \frac{ \| u\|^2  }{\gamma^2} \right) du \right| \\
&~~~
 \le  \int_{v \in \R^d, \|v\| \ge 0.9 \delta_\gamma/\gamma} e^{- a \| v\|^2}  dv 
 = O_{GM,a}( \gamma^{10}).
\end{align*}
Meanwhile,
$\gamma^{-d}  \int_{ B_x'} h(\frac{ \| u\|^2  }{\gamma^2}) \|u\|^2 du = O_{GM,a}(\gamma^2).
$
Thus,
\[
 \textcircled{1}  =
f(x) (  m_0[h] + O_{GM,a}(\gamma^{10}) + O_{GM,a,MC}(\gamma^2) )
= m_0f(x)  + O_{GM,a,MC}(\|f\|_\infty \gamma^2).
\]
Similarly, $ \gamma^{-d}  \int_{ B_x'} h(\frac{ \| u\|^2  }{\gamma^2})   \|u\|^\beta du = O_{GM,a}(\gamma^\beta)$, and then
\[
 \textcircled{2} 
  = O_{GM,a}( L_f  \gamma^\beta).
\]
Finally, $\gamma^{-d}\int_{ B_x'} e^{-a \frac{ \| u\|^2  }{\gamma^2} } \frac{ \| u\|^4}{\gamma^2}du = O_{GM,a}(\gamma^2)$, and then 
\[
 \textcircled{3} 
  = O_{GM,a, a_1, MC}( \|f\|_\infty \gamma^2).
\]
Putting together, we have  that \eqref{eq:integral-on-ball-1} equals
\[
 \textcircled{1} +  \textcircled{2}  + \textcircled{3}= 
 m_0[h] f(x)  + O_{GM,a, a_1, MC}(\|f\|_\infty \gamma^2)
 +O_{GM,a}( L_f  \gamma^\beta),
\]
and the integral outside $B_{\delta_\gamma}(x)$ gives $O_{MV}(\|f\|_\infty \gamma^{10})$ as shown in \eqref{eq:int-out-ball}.
Because $\gamma < 1$, $\gamma^{10} \le \gamma^2$.
Combined together, this proves  \eqref{eq:kernel-expansion-2} under case 1
with $C_1$ being some $C_1^{( \text{case }1)}$,
which depends on $h$ 
(including constants $a$, $a_1$), $d$, and manifold curvature and volume.

\vspace{5pt}
\noindent
\underline{Case 2}:  $1 < \beta \le 2$.

For $y \in \calM \cap B_{\delta_\gamma}(x) $ and thus $d_\calM(x,y) < {\rm inj}(x)$, 
using the local normal coordinates at $x$ denoted as $s$, 
we write $y = \exp_x (s)$, $s \in T_x \calM $, and $\|s\| = d_\calM(x,y)$.
The Taylor expansion of $f$ at $x$ gives that 
\[
f(y) = f(x) + \nabla_\calM f(x)^T s + O( L_f' \| s \|^\beta ),
\quad 
L_f': = \sup_{ y \in \calM, \, y\neq x} {\| \nabla_\calM f(x)- \nabla_\calM f(y) \|}/{d_\calM(x,y)^{\beta -1}}.
\]
Because locally $s = u + O_{MC}( \| u\|^3)$, 
$\|s\| = O( \|u \|)$, 
we have
\[
f(y) = f(x) 
     + \nabla_\calM f(x)^T( u +  O_{MC}( \| u\|^3) )
    + O( L_f' \| u \|^\beta ).
\]
Recall the definition of  H\"older constant $L_f$ when $ \beta > 1$, we have 
$\|\nabla_\calM f(x) \|  \| u\|^3
+  L_f' \| u \|^\beta 
=O(  L_f \| u \|^\beta )$.
As a result,
the expansion \eqref{eq:f-expand-case1} has the following  counterpart for any $ y \in \calM \cap B_{\delta_\gamma}(x)$,
\begin{align}
 f(y ) 
& = f(x) + \nabla_\calM f(x)^T u +    O_{MC} ( L_f \|u\|^\beta).
\label{eq:f-expand-case2}
\end{align}
Then similarly expanding the kernel $h(\frac{\|x-y\|^2}{\gamma^2})$ in the integral, we have
\[
  \gamma^{-d}\int_{\calM \cap  B_{\delta_\gamma}(x)} 
  h\left( \frac{\| x- y \|^2 }{\gamma^2} \right) f(y) dV(y)  
 = \textcircled{1} +  \textcircled{2}  + \textcircled{3},
\]
where 
\[
 \textcircled{1} : =
  \gamma^{-d}  \int_{ B_x'} 
  h\left(\frac{ \| u\|^2  }{\gamma^2}\right) 
  (f(x)  + \nabla_\calM f(x)^T u ) (1+O_{MC}(\|u\|^2) du,
\]
and again, since $\left| \det\left(\frac{dy}{du}\right) \right|  \le 2$,
\[
 \textcircled{2} : =
    \gamma^{-d}  \int_{ B_x'} 
    h\left(\frac{ \| u\|^2  }{\gamma^2}\right)
    O_{MC}( L_f \|u\|^\beta) 2 du,
\]
\[
 \textcircled{3} : =
   \gamma^{-d}\int_{ B_x'}a_1 e^{-a \frac{ \| u\|^2  }{\gamma^2} }  O_{MC} \left( \frac{ \| u\|^4}{\gamma^2}\right) \|f\|_\infty 2  du.
\]
Same as before, $ \textcircled{3} = O_{GM,a, a_1, MC}( \|f\|_\infty \gamma^2)$,
and 
$\textcircled{2} 
  = O_{GM,a,MC}( L_f  \gamma^\beta)$.
Also,   $ \textcircled{1} =  \textcircled{4} +  \textcircled{5}$,
where 
\begin{align*}
 \textcircled{4}  
 & =
  \gamma^{-d}  \int_{ B_x'} 
  h\left(\frac{ \| u\|^2  }{\gamma^2}\right)
  (f(x)  + \nabla_\calM f(x)^T u )  du \\
&  = m_0 f(x) +f(x) O_{GM,a}(  \gamma^{10}) + 0 + \| \nabla_\calM f (x) \| O_{GM,a}( \gamma^{10}) \\
& = m_0 f(x)  + O_{GM,a}(  (\|f\|_\infty + \| \nabla_\calM f (x)\| ) \gamma^{10}),
\end{align*}
where in the 2nd equality we used that the integral of $ h( \frac{\|u\|^2}{\gamma^2}) u du$ vanishes on $\R^d$;
\begin{align*}
 \textcircled{5}  
 &=
  \gamma^{-d}  \int_{ B_x'} 
  h\left(\frac{ \| u\|^2  }{\gamma^2}\right) 
  (f(x)  + \nabla_\calM f(x)^T u ) O_{MC}(\|u\|^2 ) du \\
  & = O_{MC, GM,a} ( \|f\|_\infty  \gamma^2) +O_{MC, GM,a}  (\| \nabla_\calM f \|_\infty \gamma^3 ).
\end{align*}
Putting together, by that $\| \nabla_\calM f \|_\infty  \le L_f$, we have
\[
\textcircled{1}
= m_0[h] f(x)  
+ O_{MC, GM,a} ( \|f\|_\infty  \gamma^2) +O_{MC, GM,a}  (L_f \gamma^3 ),
\] 
and thus, since $\gamma^3 \le \gamma^\beta$,
\[
  \textcircled{1} +  \textcircled{2}  + \textcircled{3}
  = m_0[h] f(x)  
+ O_{GM,a,MC}( L_f  \gamma^\beta) 
+ O_{GM,a,  a_1,  MC}( \|f\|_\infty \gamma^2).
\]
The integral outside $B_{\delta_\gamma}(x)$ again can be incorporated into the residual term above because $\gamma^{10} \le \gamma^2$. 
This proves  \eqref{eq:kernel-expansion-2} under case 2
with $C_1$ being some $C_1^{( \text{case }2)}$ which depends on the kernel function $h$, 
$d$, and manifold curvature and volume.

Choosing $C_1 = \max\{C_1^{( \text{case }1)} , C_1^{( \text{case } 2)} \}$ makes  \eqref{eq:kernel-expansion-2}  hold for both cases.
\end{proof}
\begin{proof}[Proof of Lemma \ref{lemma:ET}]
Writing $m_0[h]$ as $m_0$ for abbreviation.
For any $x \in \calM$, applying Lemma \ref{lemma:local-kernel} to when $f = p-q$ gives that  
\begin{equation}
\begin{split}
&\gamma^{-d} \int_\calM K_\gamma(x,y)(p-q)(y)  dV(y)
= m_0 (p-q)(x) + r_1(x), \\
& \text{ where $r_1$ satisfies } 
\| r_1 \|_\infty \le 2 C_1 ( L_\rho \gamma^\beta + \rho_{\rm max} \gamma^2),    
\end{split}
\label{eq:bound-r1-1}
\end{equation}
where $L_f \le 2 L_\rho$ and $\|f\|_\infty \le 2 \rho_{\rm max}$ 
due to that $L_p, L_q \le L_\rho$ and $ \| p\|_\infty, \| q\|_\infty \le \rho_{\rm max}$ by Assumption \ref{assump:p}. 
Then,
\begin{align}
\gamma^{-d} T 
& =  \int_\calM  \left( \gamma^{-d} \int_\calM K_\gamma(x,y)(p-q)(y)  dV(y)  \right) (p-q)(x)  dV(x) \nonumber \\
& =  \int_\calM  \left( m_0 (p-q)  + r_1  \right) (p-q)  dV \nonumber \\
& =  m_0  \Delta_2 + r_T,  \nonumber
\end{align}
where $r_T$ has the expression
\[
r_T = \int_\calM   r_1  (p-q)  dV.
\]
By  Cauchy-Schwarz, 
\begin{align*}
\left| r_T   \right| 
& \le  \left(\int_\calM  |   r_1  |^2 dV \right)^{1/2}  
\left( \int_\calM ( p-q)^2  dV  \right)^{1/2} \\
& \le   \|   r_1 \|_\infty  {\rm Vol}(\calM)^{1/2}     \Delta_2^{1/2}.
\end{align*}
Together with the bound of $\|r_1 \|_\infty$ in \eqref{eq:bound-r1-1} this gives
\[
|r_T   | \le 
  2 C_1  {  \rm Vol }(\calM)^{1/2}   ( L_\rho \gamma^\beta + \rho_{\rm max} \gamma^2)   \Delta_2^{1/2}, 
\]
which proves \eqref{eq:T-expansion}
by that $ 0< \beta \le 2$, $0 < \gamma < 1$,
and the definition of $\tilde{C_1}$ as in the lemma.
\end{proof}

\begin{proof}[Proof of Proposition \ref{prop:conc-hatT}]
By definition,
\[
\widehat{T} = \widehat{T}_{XX} +  \widehat{T}_{YY} - 2  \widehat{T}_{XY},
\]
where
\[
 \widehat{T}_{XX}  = \frac{1}{n_X^2} \sum_{ i, i'= 1}^{n_X} K_\gamma( x_i, x_{i'} ),
 \quad
 \widehat{T}_{YY}= 
 \frac{1}{n_Y^2} \sum_{ j, j'= 1}^{n_Y} K_\gamma( y_j, y_{j'} ),
\quad
\widehat{T}_{XY} = 
 \frac{1}{n_X n_Y} \sum_{ i=1}^{n_X}  \sum_{ j=1}^{n_Y} K_\gamma( x_i, y_j ),
\]
and we analyze the concentration of the three terms respectively. 
By \eqref{eq:rho_X}, 
\[ 
n_X -1 \ge 0.9 \rho_X n, \quad n_Y -1 \ge 0.9(1-\rho_X)n,
\]
 and thus by definition of $c$ as in \eqref{eq:def-nu},
\begin{equation}\label{eq:def-small-c}
 c n \le n_X-1, \, n_Y-1 \le n.
\end{equation}

We first analyze $ \widehat{T}_{XX}$.
Since $K_\gamma(x,x) = h(0) $, 
\[
\widehat{T}_{XX}  = \frac{h(0)}{n_X}  +  ( 1-\frac{1}{n_X}  ) V_{XX}, 
\quad V_{XX}:= \frac{1}{n_X(n_X-1)}\sum_{ i \neq j, i , j=1}^{n_X} K_\gamma( x_i, x_{j} ).
\]
We write $n_X = N$ for a short-handed notation. 
Define,  for $i \neq j$,
$V_{ij} := K_\gamma( x_i, x_{j} )$,
we have that
\[
\E V_{i,j} = \E_{x\sim p, \, y \sim p} K_\gamma(x,y).
\]
Meanwhile,  by that $| h (r )| \le 1 =: L_V$ for any $r \in [0,\infty)$, due to Assumption \ref{assump:h-C1} (C2), 
we always have 
$|V_{ij}| \le L_V$.  This boundedness of $V_{ij}$ will be used in the Bernstein-type argument below.
The variance
\[
\text{Var}( V_{ij } ) \le \E V_{ij}^2 = 
\int_{\calM} \int_{\calM} 
h^2 \left( \frac{\| x-y\|^2}{ \gamma^2} \right) 
p(x) p(y) dV(x) dV(y).
\]
Note that $h_2 :=h^2$ also satisfies Assumption \ref{assump:h-C1}
(with constants $a$ being replaced by $2a$,  same $a_0 =1$ and $a_1$ replaced by $2a_1$),
and thus one can apply  Lemma \ref{lemma:local-kernel}  with $h$ replaced by $h_2$,
which does not change the threshold constant $\gamma_0$, but the constant $C_1$ in the error bound changes to $C_1^{(2)}$
corresponding to the new kernel function $h_2$. 
Since $\gamma < \min \{ 1, \gamma_0\}$,
applying Lemma \ref{lemma:local-kernel} with $h_2$ and $f=1$ gives that
\[
\left|
 \gamma^{-d} \int_{\calM} 
 h^2 \left( \frac{\| x-y\|^2}{ \gamma^2} \right) 
 dV(y) - 
  m_0[h^2]  \right|
   \le C_1^{(2)} \gamma^2.
\]
Thus 
\begin{align}
\E V_{ij}^2 
& \le  \gamma^d \| p \|_\infty  \int_{\calM} p(x)  \left(  m_0[h^2] + C_1^{(2)} \gamma^2 \right) dV(x) \nonumber \\
&\le  \gamma^d \rho_{\max}  \left(  m_0[h^2] +  C_1^{(2)} \gamma^2 \right),
\end{align}
Since we assume  $\gamma < \frac{1}{\sqrt{C_1^{(2)}}} $ in the condition, by definition of $\nu$ as in \eqref{eq:def-nu},
 we then have that 
\begin{equation}\label{eq:nu-V}
\text{Var}( V_{ij } ) \le \gamma^d  \nu. 
\end{equation}

We use the decoupling of U-statistics: Define $\tilde{V}_{i,j} = V_{ij} - \E V_{ij} $ for $i \neq j$.
For any $s > 0$,  let $\calS_N$ be denote the permutation group of $N$ elements, we  have
\begin{align*}
V_{XX}   {- \E V_{ij} }
& = \frac{1}{N(N-1)} \sum_{i\neq j} \tilde{V}_{i,j} 
 = 
\frac{1}{N !}   \frac{1}{N(N-1)}  \sum_{\gamma \in \calS_{N}} 
 \sum_{i \neq j} \tilde{V}_{\gamma(i), \gamma(j)}  \\
& = \frac{1}{N!} \frac{1}{ \lfloor N/2 \rfloor}  \sum_{\gamma \in \calS_{N}}  \sum_{i=1}^{ \lfloor N/2 \rfloor}   
 \tilde{V}_{\gamma( 2i - 1), \gamma(2 i)}.
\end{align*}
Then, by Jensen's inequality, let $M= \lfloor N/2 \rfloor$,
\[
\E e^{ s    { (V_{XX} - \E V_{ij}) }  }
 \le 
  \frac{1}{N!} \sum_{\gamma \in \calS_{N}}  
  \E \exp \left\{ s  \frac{1}{ M }  \sum_{i=1}^{ M}  
 \tilde{V}_{\gamma (2-i), \gamma(2 i)} \right\}
 =   \E \exp \left\{ s  \frac{1}{M}  \sum_{i=1}^{M}  
 \tilde{V}_{2i -1, 2i} \right\},
\]
and the sum over $i$ is
 an independent sum over $M$ random variables.
By that $| \tilde{V}_{ij}| \le 1$, and \eqref{eq:nu-V}, then 
same as in the derivation of the classical Bernstein's inequality,  
we have that for any $t > 0$, 
\[
\Pr[ V_{XX} - \E V_{ij} > t ] 
\le  \exp \left\{ - \frac{ \lfloor N/2 \rfloor t^2}{ 2 \gamma^d  \nu+ \frac{2}{3} t L_V } \right\}
\le  \exp \left\{ - \frac{  (N-1)  t^2}{2 ( 2 \gamma^d  \nu+ \frac{2}{3} t L_V )} \right \}.
\]
We target at 
\[
t =  \lambda \sqrt{ \frac{\gamma^d \nu}{N-1}}, \quad \lambda > 0,
\]
and when 
\begin{equation}\label{eq:t-small-bern}
 \frac{t L_V}{3} < \gamma^d \nu,
 \end{equation} 
the tail probability is bounded by 
\[
\exp \left\{ - \frac{ (N-1) t^2}{ 8 \gamma^d  \nu } \right\} =  \exp \left\{ - \frac{  \lambda^2   }{ 8   } \right\}.
\]
By that $L_V = 1$,  the requirement \eqref{eq:t-small-bern} is equivalent to 
$\lambda \sqrt{ \frac{\gamma^d \nu}{N-1}} < 3 \gamma^d \nu$ which is
\[
0 < \lambda < 3 \sqrt{ \nu \gamma^d (N-1)}.
\]
This proves that 
\begin{equation}\label{eq:bound-bern-2}
\Pr\left[ V_{XX} - \E V_{ij} > \lambda \sqrt{ \frac{\gamma^d \nu}{(N-1)}} \right] \le e^{-\lambda^2/8},
\quad \forall 0 < \lambda < 3 \sqrt{ \nu \gamma^d (N-1)}.
\end{equation}
The lower-tail can be proved similarly. So we have
\begin{equation}\label{eq:conc-XX}
\Pr \left[ V_{XX} - \E_{x\sim p, \, y \sim p} K_\gamma(x,y)  > \lambda \sqrt{ \frac{\gamma^d \nu}{ n_X-1}} \right] 
\le e^{-\lambda^2/8},
\quad \forall 0 < \lambda < 3 \sqrt{ \nu \gamma^d (n_X-1)},
\end{equation}
and same for $\Pr
\left[ V_{XX} - \E_{x\sim p, \, y \sim p} K_\gamma(x,y)   < - \lambda \sqrt{ \frac{\gamma^d \nu}{ n_X-1}} \right] $.

Similarly, we have
\[
\widehat{T}_{YY} = \frac{  {h(0)}}{n_Y} + (1- \frac{1}{n_Y}) V_{YY}, 
\quad
V_{YY}:= \frac{1}{n_Y(n_Y-1)}\sum_{ i \neq j, i , j=1}^{n_Y} K_\gamma( y_i, y_{j} ).
\]
and
\begin{equation}\label{eq:conc-YY}
\Pr \left[ V_{YY} - \E_{x \sim q, \, y \sim q} K_\gamma(x,y)   > \lambda \sqrt{ \frac{\gamma^d \nu}{ n_Y-1}} \right]
 \le  e^{-\lambda^2/8},
\quad \forall 0 < \lambda < 3 \sqrt{ \nu \gamma^d (n_Y-1)},
\end{equation}
and same for $\Pr \left[ V_{YY} - \E_{x \sim q, \, y \sim q} K_\gamma(x,y)   <- \lambda \sqrt{ \frac{\gamma^d \nu}{ n_Y-1}} \right]$.

To analyze $\widehat{T}_{XY}$, define $M:= \min\{ n_X, n_Y\}$, 
and 
\[
\tilde{V}_{i,j} = K_\gamma(x_i, y_j) - \E_{x \sim p, y \sim q} K_\gamma(x,y), \quad i= 1, \cdots, n_X, \, j =1, \cdots, n_Y.
\]
Then for any $s > 0$,  let $\calS_{X}$ and $\calS_{Y}$ denote the permutation group of $n_X$ and $n_Y$ elements respectively, we then have
\begin{align*}
\frac{1}{n_X n_Y}\sum_{i=1}^{n_X} \sum_{j=1}^{n_Y} \tilde{V}_{i,j} 
& = 
\frac{1}{n_X !} \frac{1}{n_Y !}  \frac{1}{n_X n_Y}  \sum_{\gamma_X \in \calS_{X}} 
\sum_{\gamma_Y \in \calS_{Y}} \sum_{i=1}^{n_X}   \sum_{j=1}^{n_Y} \tilde{V}_{\gamma_X(i), \gamma_Y(j)}  \\
& = \frac{1}{n_X !} \frac{1}{n_Y !}  \frac{1}{M}  \sum_{\gamma_X \in \calS_{X}}  \sum_{\gamma_Y \in \calS_{Y}} \sum_{i=1}^{M}  
 \tilde{V}_{\gamma_X(i), \gamma_Y(i)}.
\end{align*}
Then, by Jensen's inequality, 
\begin{equation*}
    \begin{split}
\E \exp \left\{ s  \frac{1}{n_X n_Y}\sum_{i=1}^{n_X} \sum_{j=1}^{n_Y} \tilde{V}_{i,j}  \right\}
 &\le 
  \frac{1}{n_X !} \frac{1}{n_Y !} \sum_{\gamma_X \in \calS_{X}}  \sum_{\gamma_Y \in \calS_{Y}} 
  \E \exp \left\{ s  \frac{1}{M}  \sum_{i=1}^{M}  
 \tilde{V}_{\gamma_X(i), \gamma_Y(i)} \right\}\\
& =   \E \exp \left\{ s  \frac{1}{M}  \sum_{i=1}^{M}  
 \tilde{V}_{i, i} \right \},
\end{split}
\end{equation*}
and $ \tilde{V}_{i, i} = K_\gamma(x_i, y_i) - \E K_\gamma(x_i, y_i) $ across $i=1,\cdots, M$ are independent.
By that $| \tilde{V}_{ij}| \le 1$, 
and same as before, by applying Lemma \ref{lemma:local-kernel} with $h^2$ and use that $\gamma < \frac{1}{\sqrt{C_1^{(2)}}}$,
\begin{align*}
\text{Var}(   \tilde{V}_{ij}) 
& \le  \E V_{ij}^2 
=\int_{\calM} \int_{\calM} 
h^2 \left( \frac{\| x-y\|^2}{ \gamma^2} \right) 
p(x) q(y) dV(x) dV(y) 
 \le \gamma^d \nu,
\end{align*}
then same as in proving \eqref{eq:bound-bern-2}, we have
\begin{equation}\label{eq:conc-XY}
\Pr \left[ \widehat{T}_{XY} - \E_{x \sim p, \, y \sim q} K_\gamma(x,y)   > \lambda \sqrt{ \frac{\gamma^d \nu}{ M}} \right]
 \le  e^{-\lambda^2/8},
\quad \forall 0 < \lambda < 3 \sqrt{ \nu \gamma^d M},
\end{equation}
and same for $\Pr \left[ \widehat{T}_{XY} - \E_{x \sim p, \, y \sim q} K_\gamma(x,y)   < - \lambda \sqrt{ \frac{\gamma^d \nu}{ M}} \right]$.

Recall that 
\[
\widehat{T} = ( V_{XX} + V_{YY} -2 \widehat{T}_{XY} ) 
            + \frac{1}{n_X} (  {h(0)}-V_{XX}) +  \frac{1}{n_Y} (  {h(0)} -V_{YY}),
\]
we now collect \eqref{eq:conc-XX}\eqref{eq:conc-YY}\eqref{eq:conc-XY} to derive concentration of $\widehat{T}$ around $T$:

Upper tail.
by Assumption \ref{assump:h-C1}(C2)(C3), $h(0)-V_{XX}\le h(0)\le 1$,
and similarly for $h(0)-V_{YY}$, and then
\[
\widehat{T}  \le   V_{XX} + V_{YY} -2 \widehat{T}_{XY}   
    +  (\frac{1}{n_X} + \frac{1}{n_Y}).
\]
Recall that 
\begin{align*}
& \E V_{XX} + V_{YY} - 2 \widehat{T}_{XY} \\
 & = 
  (\E_{x \sim p, \, y \sim p} 
  +  \E_{x \sim q, \, y \sim q} 
  - 2  \E_{x \sim p, \, y \sim q} )  K_\gamma(x,y)
= T.    
\end{align*}
Applying upper tail bounds in \eqref{eq:conc-XX}\eqref{eq:conc-YY}\eqref{eq:conc-XY},
together with that $ n_X-1, n_Y-1 \ge cn$ by \eqref{eq:def-small-c}, 
we have that for any $\lambda < 3 \sqrt{ c \nu \gamma^d n }$,
 with probability $\ge 1- 3 e^{-\lambda^2/8}$, 
 \[
\widehat{T}  \le T +  \frac{  {2}}{ cn } +  4 \lambda \sqrt{\frac{\gamma^d \nu}{ cn}},
 \]

Lower tail.
Because $h(0)-V_{XX} \ge -V_{XX} \ge -1$, and similarly for $h(0)-V_{YY}$,
\[
\widehat{T}  \ge   V_{XX} + V_{YY} -2 \widehat{T}_{XY}
              { -  (\frac{1}{n_X} + \frac{1}{n_Y})}.
\]
The lower tail bound is then proved similarly 
using the lower tail bounds in \eqref{eq:conc-XX}\eqref{eq:conc-YY}\eqref{eq:conc-XY}.
\end{proof}

\begin{proof}[Proof of Theorem \ref{thm:power}]
Under the condition of the theorem, 
the conditions needed by 
Proposition \ref{prop:conc-hatT} and Lemma \ref{lemma:ET} are both satisfied. 
We first verify that the Type-I error is at most $\alpha_{\rm level}$.
The definition of $\lambda_1$ makes that $3 e^{-\lambda_1^2/8} = \alpha_{\rm level}$.
Under $H_0$, we have $T=0$ because $p=q$.
The upper tail bound in Proposition \ref{prop:conc-hatT} gives that 
for any $ 0 < \lambda < 3 \sqrt{ c \nu \gamma^d n }$, 
\begin{equation}\label{eq:bound-hatT-upper-h0}
\widehat{T} \le  \frac{{2}}{cn} + 4 \lambda \sqrt{ \frac{\nu}{c} \frac{ \gamma^d  }{ n } },    
\quad 
\text{with probability $\ge 1- 3 e^{-\lambda^2/8}$.}
\end{equation}
By the definition of $t_{\rm thres}$, \eqref{eq:bound-hatT-upper-h0}
implies that 
the type-I error bound holds as long as $\lambda_1 < 3\sqrt{c \nu \gamma^d n}$, 
and this is guaranteed by \eqref{eq:cond-sigma}.

Next, we address the Type-II error. By 
the lower tail bound in Proposition \ref{prop:conc-hatT},
the Type-II error bound $3 e^{-\lambda_2^2/8}$ holds as long as
$\lambda_2 < 3\sqrt{c \nu \gamma^d n}$ and
\begin{equation}\label{eq:cond-t-thres-type2}
t_{\rm thres} < T - 4 \lambda_2 \sqrt{ \frac{\nu}{c} \frac{\gamma^d}{n}}
               {-\frac{2}{cn}},
\end{equation}
which we are to prove here:
{Define  $D_{p,q} := m_0[h] \Delta_2$.}
By Lemma \ref{lemma:ET}, $T = \gamma^d( D_{p,q} + r_T)$, 
and 
  {
$|r_T | \le  \tilde{C_1}   ( L_\rho   + \rho_{\max} )  \gamma^\beta \Delta_2^{1/2} $}.
By \eqref{eq:cond-small-sigma-2}, we  have
\[ |r_T | 
<  0.1 m_0[h]  \Delta_2
= 0.1 D_{p,q},
\]
and then  $T > \gamma^d 0.9 D_{p,q}$.
Meanwhile, \eqref{eq:cond-sigma} gives that 
\begin{equation}\label{eq:cond-type2-2}
\frac{4}{cn} < 0.4 \gamma^d  D_{p,q}, 
\quad  4 ( \lambda_1 + \lambda_2) \sqrt{ \frac{\nu}{c} \frac{\gamma^d}{n} } < 0.5 \gamma^d  D_{p,q}.
\end{equation}
Thus, 
\[
\frac{4}{cn} + 4 ( \lambda_1 + \lambda_2) \sqrt{ \frac{\nu}{c} \frac{\gamma^d}{n} } 
<  0.9 \gamma^d  D_{p,q} < T,
\] 
which implies  \eqref{eq:cond-t-thres-type2} by the constructive choice of $t_{\rm thres}$ as in the theorem.
\end{proof}

\begin{proof}[Proof of Corollary \ref{cor:rate}]
To apply Theorem \ref{thm:power} and notations as therein, 
recall that  
\[
\lambda_1 = \sqrt{ 8 \log (3/ \alpha_{\rm level} )},
\]
and we define
$\lambda_2 := \sqrt{ 8 \log (3/ \epsilon )}$.
Then, by \eqref{eq:type1-type2-thm}, we have the Type-II error
$\Pr [ \widehat{T} \le t_{\rm thres} | H_1 ] \le  \epsilon$ if 
all the the needed conditions in Theorem \ref{thm:power}, including  \eqref{eq:cond-small-sigma-2} and \eqref{eq:cond-sigma},  can be satisfied. 

The condition \eqref{eq:cond-small-sigma-2} 
and the requirement $\gamma < \tilde{\gamma}_1 :=\min \left\{ 1, \gamma_0, 
( C_1^{(2)} )^{-1/2} \right \}$  in Theorem \ref{thm:power} will be satisfied if 
\begin{equation}\label{eq:cond-1-pf}
\gamma^\beta <  c_1 \Delta_2^{1/2}  
\text{ and }
\gamma < \tilde{\gamma}_1,
\end{equation}
where the positive constant $c_1 $ depends on $(L_\rho, \rho_{\max})$ and  $(\calM, h)$,
and $\tilde{\gamma}_1$ is determined by $(\calM, h)$.

As for the condition \eqref{eq:cond-sigma},
{first note that when $\Delta_2$ is smaller than an $O(1)$ constant $d_2$, which only depends on $m_0[h]$ and $\nu$,
the third term will dominate the right hand side of \eqref{eq:cond-sigma},
and here we used that $\lambda_1 +\lambda_2 \ge \lambda_1 > \sqrt{8 \log 6}$.}
Since the dominating term  scales as $ (\lambda_1 + \lambda_2)^2/\Delta_2^2 $, the condition \eqref{eq:cond-sigma} can be satisfied if
\begin{equation}\label{eq:cond-2-pf}
 \Delta_2^{2}   > c_2 \left( \log \frac{1}{ \alpha_{\rm level} } + \log \frac{1}{\epsilon}  \right) \gamma^{-d} n^{-1} ,
\end{equation}
where the  positive constant $c_2 $ depends on $\rho_{\max}$, $\rho_X$ and $(\calM, h)$.

Assuming $\Delta_2 < d_2$, 
and by definition, the constant $d_2$ is determined by $h$, $d$, and $\rho_{\rm max}$.
To finish the proof of the $(1-\epsilon)$ power, it suffices to show that under the stated scaling of $\gamma$ and \eqref{eq:cond-rate},
the needed conditions
\eqref{eq:cond-1-pf} and \eqref{eq:cond-2-pf} can be fulfilled at large $n$.

Now recall that we have  $\gamma \sim n^{-1/(d +  4  \beta)} $.
To satisfy \eqref{eq:cond-1-pf},
first, we have $\gamma < \tilde{\gamma}_1$ for large $n$;
There is also a positive constant $c_5$ (only depending on $\beta$) such that  
$\gamma^\beta < c_5 n^{-\beta/(d +  4  \beta)} $ for large $n$.
As a result, \eqref{eq:cond-1-pf} can be satisfied if 
\[
c_5 n^{-\beta/(d +  4  \beta)} < c_1 \Delta_2^{1/2}.
\]
Comparing to \eqref{eq:cond-rate}, using the relation that $  \log \frac{1}{ \alpha_{\rm level} } + \log \frac{1}{\epsilon} \ge \log \frac{1}{ \alpha_{\rm level} } > \log 2$, 
we have that  \eqref{eq:cond-rate} can fulfill  \eqref{eq:cond-1-pf} as long as $c_3$ satisfies that
\begin{equation}\label{eq:large-c3-pf1}
c_3  (\log 2)^{1/2} > (c_5/c_1)^2. 
\end{equation}

To satisfy \eqref{eq:cond-2-pf}, note that there is a positive constant $c_4$ (only depending on $d$) such that, for large $n$,
$\gamma^{d} > c_4 n^{-d/(d +  4  \beta)} $.
Then, the condition \eqref{eq:cond-2-pf} can be satisfied if 
\[
\Delta_2^2 > (c_2/c_4) \left( \log \frac{1}{ \alpha_{\rm level} } + \log \frac{1}{\epsilon}  \right) n^{- 4\beta/(d+4\beta)}.
\]
Comparing to \eqref{eq:cond-rate}, we know that this can be fulfilled if 
\begin{equation}\label{eq:large-c3-pf2}
c_3 > (c_2/c_4)^{1/2}.
\end{equation}
Collecting \eqref{eq:large-c3-pf1} and \eqref{eq:large-c3-pf2}, we can define $c_3$ to be a positive constant to satisfy both
and the dependence of $c_3$ is as stated in the corollary. 
As a result, all the needed conditions in Theorem \ref{thm:power} for \eqref{eq:type1-type2-thm} to hold are satisfied at large $n$,
which proves the test level and that the test power is at least $1-\epsilon$.

Finally, consider when $\Delta_2 \gg n^{- 2\beta/(d + 4\beta)}$.
Note that the Type-II error is non-negative and the above argument shows that for any small $\epsilon$,
the requirement \eqref{eq:cond-rate} will hold eventually. 
This proves that test power will approach to 1 as $n$ increases. 
\end{proof}

\subsection{Proofs in Section \ref{subsec:manifold-boundary}}
\label{subsec:proofs-4.1}

\begin{proof}[Proof of Lemma \ref{lemma:local-kernel-boundary}]
The proof uses same technique as Lemma \ref{lemma:local-kernel} and the handling of $x$ near boundary follows the method of Lemma 9 in \cite{coifman2006diffusion}.

The constant $\gamma_0'$ is defined similarly as before, 
to guarantee  the existence of local chart at point $x$ both away from and near boundary 
(including the uniqueness of the Euclidean distance nearest point $x_0 \in \partial \calM$ to each $x$ when $d_{E}(x,\partial \calM) \le \delta_\gamma$),
along with the local metric and volume comparison after using projected coordinates $u$ as \eqref{eq:local-metric-vol} and \eqref{eq:local-metric-2}.
The existence of such $\gamma_0'$ is due to compactness of $\calM$.

For $x$ satisfying $d_{E}(x,\partial \calM) > \delta_\gamma$, 
the analysis is the same as in the proof of Lemma \ref{lemma:local-kernel}
by truncating the integral of $dV(y)$ on $B_{\delta_\gamma}(x) \cap \calM$.
We thus have \eqref{eq:kernel-expansion-2} hold at $x$ with the constant $C_1$ as therein.

To analyze the case for $x$ where $d_{E}(x,\partial \calM) \le \delta_\gamma$,
let $x_0 \in \partial \calM$ be the nearest point on the boundary to $x$ under Euclidean distance,
and we follow the same change of coordinates around $x_0$ as in the proof of Lemma 9 in \cite{coifman2006diffusion}.
  {For $0 < \beta \le 1$,}
by the symmetry of the kernel $h(\frac{\|u\|^2}{\gamma^2})$ and the same analysis in Lemma 9 of \cite{coifman2006diffusion},
one can show that 
\begin{equation}
\label{eq:kernel-expand-boundary}
\begin{split}
 \gamma^{-d}\int_{ B_{\delta_\gamma}(x) \cap \calM} 
 h \left( \frac{\| x- y \|^2 }{\gamma^2} \right) f(y) dV(y)  
& =  f(x) m_0^{(\gamma)}[h](x)  \\
& ~~~	
	+ O_{GM,a}(L_f \gamma^\beta)   +  O_{GM,a, {a_1,} MC}( \|f\|_\infty \gamma^2),
\end{split}
\end{equation}
where $m_0^{(\gamma)}[h](x) $ equals the integral of $\gamma^{-d} h(\frac{\|u\|^2}{\gamma^2}) $ on a partial domain in $\R^d$ and the domain depends on the location of $x$,
defined in the same way as in Lemma 9 in \cite{coifman2006diffusion},
and it satisfies  that
$m_0^{(\gamma)}[h](x)  \le m_0[h]$
due to that $h \ge 0$ by Assumption \ref{assump:h-C1} (C3).
The $O_{GM,a}(L_f \gamma^\beta)$ term is due to the expansion of $f$ near $x$,
and the $O_{GM, a, a_1, MC}( \|f\|_\infty \gamma^2)$ due to the expansion of kernel and the volume form $|det(\frac{dy}{du})| = 1+O(\|u\|^2)$,
similar as in the proof of Lemma \ref{lemma:local-kernel}.
  {When $1 < \beta \le 2$, 
$f$ is $C^1$
but there is an $O( \| \nabla_\calM f \|_\infty \gamma )$ residual term in the expansion of  \eqref{eq:kernel-expand-boundary}
due to that the integral of $h(\|u\|^2/\gamma^2) u $ on part of $\R^d$ no longer vanishes.
As a result, the term $O_{GM,a}(L_f \gamma^\beta) $ becomes $O_{GM,a}(L_f \gamma) $ and dominates the $O(\gamma^\beta)$  term.}
Putting together, this proves \eqref{eq:kernel-expansion-3} with the bound
 $C_{1}' (L_f \gamma^{\beta    { \wedge 1}}+ \|f\|_\infty \gamma^2)$ 
 on the r.h.s., where $C_1'$ is a constant depending on $(\calM, h)$.
\end{proof}

\begin{proof}[Proof of Lemma \ref{lemma:ET-boundary}]
Recall the definition of constant $\delta_\gamma$ be as in Lemma \ref{lemma:local-kernel-boundary}.
Omit $[h]$ in the notation of constant $m_0[h]$ and function $m_0^{(\gamma)}[h](x)$ for abbreviation.
Applying Lemma \ref{lemma:local-kernel-boundary} to $f = p-q$, 
where  $L_f \le 2 L_\rho$ and $\|f\|_\infty \le 2 \rho_{\rm max}$ 
same as in the proof of Lemma \ref{lemma:ET},
we have that

(i) For any $x \in \calM\backslash P_\gamma $,  
\begin{equation}
\begin{split}
& \gamma^{-d} \int_\calM K_\gamma(x,y)(p-q)(y)  dV(y)
 = m_0 (p-q)(x) + r_1(x), \\
& \text{ where $r_1$ satisfies } 
\sup_{x \in \calM\backslash P_\gamma} |r_1 (x)| 
\le 2 C_1 (L_\rho \gamma^\beta  + \rho_{\max} \gamma^2  )
\le 2 C_1 (L_\rho   + \rho_{\max}   ) \gamma^\beta.
\end{split}
\label{eq:bound-boundary-r1-1}
\end{equation}

(ii) For any $x \in  P_\gamma $,  
\begin{equation}
\begin{split}
& \gamma^{-d} \int_\calM K_\gamma(x,y)(p-q)(y)  dV(y)
 = m_0^{(\gamma)}(x) (p-q)(x) + r_2(x), \\
& \text{ where $r_2$ satisfies } 
\sup_{x \in P_\gamma} |r_2 (x)| 
\le 2 C_1' (L_\rho \gamma^{ \beta  \wedge 1 }  + \rho_{\max} \gamma^2  )
\le 2 C_1' (L_\rho  + \rho_{\max}  ) \gamma^{ \beta  \wedge 1 } .
\end{split}
\label{eq:bound-boundary-r2-1}
\end{equation}

Then,
\begin{align}
\gamma^{-d} T 
& =  \left( \int_{\calM  \backslash P_\gamma} + 
\int_{ P_\gamma} \right)
\left( \gamma^{-d} \int_\calM K_\gamma(x,y)(p-q)(y)  dV(y)  \right) (p-q)(x)  dV(x) \nonumber \\
& = 
\int_{\calM  \backslash P_\gamma} 
\big( m_0 (p-q) + r_1  \big) (p-q)  dV
+ 
\int_{ P_\gamma } 
\big(  m_0^{(\gamma)} (p-q) + r_2 \big) (p-q)  dV,  \nonumber  \\
&  = 
\int
( m_0  {\bf 1}_{\calM  \backslash P_\gamma}   +  m_0^{(\gamma)} {\bf 1}_{P_\gamma}  ) (p-q)^2  dV
+ \int ( r_1 {\bf 1}_{\calM  \backslash P_\gamma}  + r_2 {\bf 1}_{ P_\gamma}  ) (p-q)  dV, \nonumber
\end{align}
which gives that 
\begin{align}
r_T 
& = \gamma^{-d} T - m_0  \Delta_2  \nonumber \\
& = \int_{ P_\gamma } (m_0^{(\gamma)} -m_0) (p-q)^2 dV 
+ \int_{ \calM  \backslash P_\gamma } r_1 (p-q) dV
+ \int_{ P_\gamma } r_2 (p-q) dV  \nonumber \\
& =: \textcircled{1} + \textcircled{2} + \textcircled{3}.
\label{eq:rT-bound-1}
\end{align}
To bound each of the three terms respectively, 
first, by that $ 0 \le m_0^{(\gamma)}(x)  \le m_0$ for all $x \in P_\gamma$, 
we have
\begin{equation}\label{eq:circle1-3}
|\textcircled{1} |
\le m_0 \int_{P_\gamma} (p-q)^2 dV 
\le m_0 C_3 \delta_\gamma  \Delta_2,
\end{equation}
where the second inequality is by Assumption \ref{assump:contri-delta2-belt}.
Next, by Cauchy-Schwarz, 
\begin{align}
\left|  \textcircled{2} \right| 
& \le  \left(\int_{\calM  \backslash P_\gamma}  |r_1|^2    dV \right)^{1/2} 
 \left( \int_{\calM  \backslash P_\gamma} ( p-q)^2  dV  \right)^{1/2} \nonumber \\
& \le 
2 C_1(L_\rho   + \rho_{\max} )  \gamma^\beta  
{\rm Vol}(\calM)^{1/2}       \Delta_2^{1/2},
\end{align}
where we used the uniform upper bound of $|r_1(x)|$ as in \eqref{eq:bound-boundary-r1-1}.
By that $\tilde{C}_1 = 2 C_1{\rm Vol}(\calM)^{1/2}   $ as defined in Lemma \ref{lemma:ET},
this gives 
\begin{equation}\label{eq:circle2-2}
\left|  \textcircled{2} \right| 
\le 
\tilde{C}_1 
(L_\rho   + \rho_{\max} )  \gamma^\beta   \Delta_2^{1/2}.
\end{equation}
Similarly, by \eqref{eq:bound-boundary-r2-1} and Assumption \ref{assump:contri-delta2-belt},
\begin{align}
\left|  \textcircled{3} \right| 
& \le  \left(\int_{P_\gamma}  |r_2|^2  dV \right)^{1/2} 
 \left( \int_{ P_\gamma} ( p-q)^2  dV  \right)^{1/2} \nonumber \\
& \le 
2 C_1' (L_\rho  + \rho_{\max}  ) \gamma^{ \beta  \wedge 1 }  
( \int_{P_\gamma }   dV  )^{1/2}      ( C_3 \delta_\gamma \Delta_2) ^{1/2}.
\label{eq:circle3-2}
\end{align}
The volume of the domain $P_\gamma$ can be shown to be upper bounded by 
$\int_{P_\gamma }   dV \le c_2 \delta_\gamma |\partial \calM|$
for some constant $c_2$ depending on the curvature of $\calM$ and the regularity of $\partial \calM$. 
Then \eqref{eq:circle3-2} continues as
\[
\left|  \textcircled{3} \right| 
\le 2 C_1'    (c_2 C_3)^{1/2}  |\partial \calM|^{1/2} 
(L_\rho  + \rho_{\max}  )  
  \gamma^{ \beta  \wedge 1 }   \delta_\gamma \Delta_2^{1/2}.
\]
Define $\tilde{C}_2' := 2 C_1'    (c_2 C_3)^{1/2}  |\partial \calM|^{1/2} $,
we have
\begin{equation}\label{eq:circle3-3}
\left|  \textcircled{3} \right| 
\le \tilde{C}_2' 
(L_\rho  + \rho_{\max}  )  
  \gamma^{ \beta  \wedge 1 }   \delta_\gamma \Delta_2^{1/2}.
\end{equation}
Collecting \eqref{eq:circle1-3}\eqref{eq:circle2-2}\eqref{eq:circle3-3}
and inserting back to \eqref{eq:rT-bound-1} 
proves \eqref{eq:T-expansion-boundary} by triangle inequality,
where the constants 
$ \tilde{C}_1$ and $\tilde{C}_2' $ are as stated in the lemma. 
\end{proof}
\begin{proof}[Proof of Theorem \ref{thm:power-boundary}]
We first extend Proposition \ref{prop:conc-hatT}, 
and it suffices to bound the integral 
\begin{equation}\label{eq:variance-bounary-claim}
\E_{x\sim p, \, y \sim p} K_\gamma(x,y)^2 \le \nu \gamma^d,
\end{equation}
where $\nu$ is defined as in \eqref{eq:def-nu},
  {and the same for $\E_{x\sim q, \, y \sim q} K_\gamma(x,y)^2$ and $\E_{x\sim p, \, y \sim q} K_\gamma(x,y)^2$.}
We prove  in below  that this is the case when $\gamma < (C_1'^{,(2)})^{-1/2}$,
where $C_1'^{,(2)}$ corresponds to the constant $C_1'$ in Lemma \ref{lemma:local-kernel-boundary} replacing $h$ with $h^2$,and thus $C_1'^{,(2)}$ only depends on $(\calM,h)$.
Then the same deviation bounds as in 
Proposition \ref{prop:conc-hatT} can be proved after replacing the constants
$C_1^{(2)}$ with $C_1'^{,(2)}$,
and $\gamma_0$ with $\gamma_0'$,
 in the statement.

The rest of the proof of Theorem \ref{thm:power} applies here. Specifically, the Type-I error bound is guaranteed by the choice of $t_{\rm thres}$,
and the Type-II error bound holds as long as 
\begin{equation}\label{eq:rT-thm-boundary}
 |r_T |  <  0.1 m_0[h]  \Delta_2.
\end{equation}
By Lemma \ref{lemma:ET-boundary}, 
\eqref{eq:rT-thm-boundary} follows from \eqref{eq:T-expansion-boundary}
under the condition \eqref{eq:cond-small-gamma-boundary}.

To finish the proof, it remains to show  \eqref{eq:variance-bounary-claim} when $\gamma < (C_1'^{,(2)})^{-1/2}$ and the same for
\[
\E_{x\sim q, \, y \sim q} K_\gamma(x,y)^2 \quad 
\text{ and }  \quad 
\E_{x\sim p, \, y \sim q} K_\gamma(x,y)^2.
\]
To prove this, similarly as in the proof of Proposition \ref{prop:conc-hatT},
by considering $h_2 = h^2$ as the kernel function we have
\begin{equation}\label{eq:bound-EK2-boundary-1}
\E_{x\sim p, \, y \sim p} K_\gamma(x,y)^2 
 \le \int_\calM p(x)  \rho_{\max} \left(  \int_\calM K_\gamma(x,y)^2 dV(y) \right) dV(x).
\end{equation}
Applying Lemma \ref{lemma:local-kernel-boundary} with 
$h$ replaced by $h^2$ and $f=1$ gives that
\[
\gamma^{-d} \int_\calM K_\gamma(x,y)^2 dV(y)  = 
\begin{cases}
m_0[h^2] +  r_2(x), \quad x \in \calM \backslash P_\gamma \\
m_0^{(\gamma)}[h^2](x) + r_2(x), \quad x \in  P_\gamma \\
\end{cases}
\]
where
\[
\sup_{x \in \calM }|r_2(x)| \le C_1^{\prime,(2)} \gamma^2,
\]
and the constant $C_1^{\prime,(2)}$ equals the maximum of  $C_1$ and $C_1'$   in Lemma \ref{lemma:local-kernel-boundary} 
corresponding to the kernel  function $h^2$,
which then depends on $(\calM,h)$ only.
By that $m_0^{(\gamma)}[h^2](x) \le m_0[h^2]$, this gives that 
\[
\int_\calM K_\gamma(x,y)^2 dV(y)  \le \gamma^{d} \left( m_0[h^2] + C_1^{\prime,(2)} \gamma^2 \right), 
\quad \forall x \in \calM.
\]
Inserting back to \eqref{eq:bound-EK2-boundary-1}, 
 we have
\[
\E_{x\sim p, \, y \sim p} K_\gamma(x,y)^2 
 \le 
 \rho_{\max}   \gamma^{d} \left( m_0[h^2] + C_1^{\prime,(2)} \gamma^2 \right),
\]
which is bounded by $ \gamma^d \nu$ when $C_1'^{,(2)} \gamma^2 <1$.
This argument from \eqref{eq:bound-EK2-boundary-1} applies in the same way to $\E_{x\sim q, \, y \sim q} K_\gamma(x,y)^2$ and $\E_{x\sim p, \, y \sim q} K_\gamma(x,y)^2$, and then they observe the same upper-bound as in \eqref{eq:variance-bounary-claim}.
\end{proof}

\subsection{Proofs in Section \ref{subsec:manifold+noise}}
\label{subsec:proofs-4.2}

Recall that with Gaussian additive noise in $\R^m$,
 the law of $x_i$ and $y_i$ can be written as,
 for $ k = 1, 2$,
\[
x_i = x_i^{(c)} + \xi_i^{(1)}, \quad y_j = y_j^{(c)} + \xi_j^{(2)}, 
\quad
x_i^{(c)} \sim p_\calM, \quad y_j^{(c)} \sim q_\calM, \quad \xi_i^{(k)} \sim \calN(0, \sigma_{(k)}^2 I_m),
\]
where  $x_i^{(c)}$, $y_j^{(c)}$, $\xi_i^{(1)}$ and $ \xi_i^{(2)}$ are independent,
and  $p_\calM$ and $q_\calM$ are manifold data densities supported on $\calM$
and satisfy Assumption \ref{assump:p}.
We want to show that our theory in Section \ref{sec:theory}
extends to  this case when $h$ is Gaussian kernel  and the noise level
 $ \sigma: =\sqrt{\sigma_{(1)}^2+\sigma_{(2)}^2 }\le \frac{c}{ \sqrt{m} } \gamma$ for some constant $c$.

 Though $p$ and $q$ are now supported on the ambient space,
 by the form of $p$ and $q$ and $h(\xi) = e^{-\xi/2}$,
 we have that 
 \begin{align*}
& \int_{\R^m} \int_{\R^m} K_\gamma( x,y) p(x) p(y) dx dy   \\
& =  \int_\calM \int_\calM \E_{\xi, \eta} \exp \left\{ - \frac{ \| (x_1 + \xi) - (y_1 + \eta)\|^2 }{2 \gamma^2} \right\} 
p_\calM( x_1) p_\calM (y_1) dx_1 dy_1 \\  
& =  \int_\calM \int_\calM \E_{g} \exp \left\{ - \frac{ \| (x_1  - y_1)/\gamma + g\|^2 }{2} \right\} p_\calM( x_1) p_\calM (y_1) dx_1 dy_1,
~~
g \sim \calN \left( 0, \frac{\sigma^2}{\gamma^2} I_m \right) .
 \end{align*}
 Define $\tilde{\sigma}^2 := \frac{\sigma^2}{\gamma^2} $,  we have that for any vector $v \in \R^m$, 
 \[
  \E_{g} \exp \left\{ - \frac{ \| v+ g\|^2 }{2} \right\} 
  = \frac{1}{(1 + \tilde{\sigma}^2)^{m/2}} \exp \left\{- \frac{ \| {v} \|^2 }{2 (1 + \tilde{\sigma}^2)} \right\},
 \]
 thus $\int_{\R^m} \int_{\R^m} K_\gamma( x,y) p(x) p(y) dx dy$ can be equivalently written as 
  \begin{equation}\label{eq:equiv-integral-noise}
   \int_\calM \int_\calM 
   \frac{1}{(1 + \tilde{\sigma}^2)^{m/2}} \exp \left\{ - \frac{ \| x_1 - y_1 \|^2 }{2 \gamma^2(1 + \tilde{\sigma}^2)} \right\}
   p_\calM( x_1) p_\calM (y_1) dx_1 dy_1.
  \end{equation}
Because $h(\xi)^2 = e^{-\xi}$, the integral of $K_\gamma( x,y)^2$ can be computed similarly and it gives
\begin{align}
&\int_{\R^m} \int_{\R^m} K_\gamma( x,y)^2 p(x) p(y) dx dy \nonumber \\
& ~~~
=  \int_\calM \int_\calM \E_{\xi, \eta} \exp \left\{ - \frac{ \| (x_1 + \xi) - (y_1 + \eta)\|^2 }{ \gamma^2} \right\} 
p_\calM( x_1) p_\calM (y_1) dx_1 dy_1 \nonumber \\
& ~~~
 =   \int_\calM \int_\calM 
   \frac{1}{(1 +2  \tilde{\sigma}^2)^{m/2}} \exp \left\{- \frac{ \| x_1 - y_1 \|^2 }{ \gamma^2(1 + 2\tilde{\sigma}^2)} \right\}
   p_\calM( x_1) p_\calM (y_1) dx_1 dy_1.
 \label{eq:equiv-integral-noise-2}  
 \end{align}
  Since $\sigma \le \frac{c}{\sqrt{m}} \gamma$, 
  $\tilde{\sigma}^2 \le \frac{1}{m} c^2$,    
 and then
 $1\le (1+\tilde{\sigma}^2)^{m/2} \le (1+ \frac{c^2}{m})^{m/2} \le \exp\{ c^2 /2\}$,
 which means that the normalizing constant  in \eqref{eq:equiv-integral-noise} is bounded to be $O(1)$ constant.
 The same holds for that normalizing constant in \eqref{eq:equiv-integral-noise-2},
 namely 
 $ 1 \le (1+2 \tilde{\sigma}^2)^{m/2} \le \exp\{ c^2 \}$.
 
This means that
one can apply Lemma \ref{lemma:local-kernel}  to compute
 the integrals in \eqref{eq:equiv-integral-noise} and \eqref{eq:equiv-integral-noise-2},
replacing the $p$ and $q$ in the lemma by $p_\calM$ and $q_\calM$ respectively.
We define the new effective kernel bandwidths in the Gaussian kernel:
$\tilde{\gamma}^{(1)}:=\gamma \sqrt{1 + \tilde{\sigma}^2}$ for integral of $K_\gamma(x,y)$
and $\tilde{\gamma}^{(2)}:=\gamma \sqrt{1 + 2\tilde{\sigma}^2}$ for that of $K_\gamma(x,y)^2$.
Since $\tilde{\sigma}^2 \le  c^2/m$,
 when $m$ is large, 
the ratios of $\tilde{\gamma}^{(1)}/\gamma$
and $\tilde{\gamma}^{(2)}/\gamma$ 
are bounded to be  $1+O(m^{-1})$ and thus both ratios are $\sim 1$.
This implies the approximate expression of $T$ in Lemma \ref{lemma:ET} involving $\Delta_2(p_\calM, q_\calM )$,
as well as the boundedness of variance of $K_\gamma( x_i, x_j)$ (and $K_\gamma( y_i, y_j)$, $K_\gamma( x_i, y_j)$)
needed in Proposition \ref{prop:conc-hatT},
replacing $p$ and $q$ with $p_\calM$ and $q_\calM$, 
and $\gamma$ with the new effective kernel bandwidths in the analysis. 
These replacements allows to prove the same result as in Theorem \ref{thm:power}
(the constants $L_\rho$ and $\rho_{\max}$ are with respect to $p_\calM$ and $q_\calM$),
where the constants in the bounds need to be adjusted by multiplying some absolute ones.

\section{Algorithm and bootstrap estimation of $t_{\rm thres}$}\label{appA}

{In experiments of kernel two-sample tests,}
we use the vanilla $O(n^2)$ algorithm to construct the kernel matrix {from datasets $X$ and $Y$
and then compute the value of  $\widehat{T}$}.
The summary of the algorithm is as follows
\begin{itemize}
\item[] {\it Input data}: $X$ and $Y$, each having $n_X$ and $n_Y$ samples
\item[] {\it Parameters}: kernel bandwidth $\gamma$, test threshold $t_{\rm thres}$ if available. 

\item[] {\it Step 1}. Construct the $n$-by-$n$ kernel matrix, $n = n_X + n_Y$,
\[
K = \begin{bmatrix}
K_{XX}, K_{XY} \\
K_{XY}^T, K_{YY}
\end{bmatrix},
\]
where $K_{XX} = ( K_\gamma(x_i,x_j))_{1 \le i,j \le n_X} $,
$K_{YY} = ( K_\gamma(y_i,y_j))_{1 \le i,j \le n_Y}$, \\ 
and $K_{XY} = ( K_\gamma(x_i,y_j))_{1 \le i \le n_X, 1 \le j \le n_Y}$. 

\item[] {\it Step 2}. Compute $\widehat{T}$ as defined in \eqref{eq:def-MMD2} from the kernel matrix $K$.

\item[] {\it Step 3}. If $\widehat{T} < t_{\rm thres}$, accept $H_0: p=q$, otherwise, reject $H_0$.
\\

If the test threshold $t_{\rm thres}$  is not provided, 
we use the bootstrap procedure  \cite{arcones1992bootstrap}
(previously used  in \cite{gretton2012kernel,ramdas2015decreasing})
to estimate it from data: Given the computed kernel matrix $K$, and target test level $\alpha_{\rm level}$,
\\

\item[] {\it Step 4}. Repeating $l = 1, \cdots, n_{\rm boot}$ times:

\begin{itemize}
\item Randomly permute the rows and columns of $K $ simultaneously, obtain a matrix $K_l$
\item Compute the statistic $\widehat{T}_i$ from $K_l$, treating the first $n_X$ rows and columns are from  one dataset and the other $n_Y$ rows and columns from another data set. 
\end{itemize}

\item[] {\it Step 5}. From the empirical distribution of $n_{\rm boot}$, evaluate the $(1-\alpha_{\rm level})$-quantile as $t_{\rm thres}$.
\end{itemize}

Note that in the bootstrap procedure for threshold evaluation, no re-computing of the kernel matrix $K$ is needed,
but only the block average over the $n^2$ entries according to the permuted two sample class labels.

The testing power is estimated by $n_{\rm run}$ random replicas of the experiments and counts the frequency that the test is correctly rejected when $q \neq p$. 
In our experiments, we use  $n_{\rm boot}$ and  $n_{run}$ a few hundred in our experiments.

\end{appendix}

\end{document}